 \def\arXiv{1}
 \documentclass[11pt]{article}
\usepackage{enumerate}
\newcommand{\notarxiv}[1]{foo}
\newcommand{\arxiv}[1]{ba}
\ifdefined \arXiv
\renewcommand{\arxiv}[1]{#1}%
\renewcommand{\notarxiv}[1]{\ignorespaces}%
\else%
\renewcommand{\arxiv}[1]{\ignorespaces}%
\renewcommand{\notarxiv}[1]{#1}%
\fi%

\notarxiv{
	\usepackage{aliascnt}
	\newaliascnt{assumption}{theorem}
	\newtheorem{assumption}[assumption]{Assumption}
	\aliascntresetthe{assumption}
	\crefname{assumption}{assumption}{assumptions}
}

\arxiv{
	\usepackage{amsthm}
}

\usepackage{thm-restate}
\usepackage{xparse}
\usepackage{enumerate}
\usepackage{amssymb}
\usepackage{amsbsy}
\usepackage{amsfonts}
\usepackage{latexsym}
\usepackage{graphicx}
\usepackage{mathtools}
\usepackage{enumitem}
\usepackage{thmtools}
\usepackage{graphicx}
\usepackage{color}
\usepackage{bbm}
\usepackage{xifthen}
\usepackage{xspace}
\usepackage{titlesec}
\usepackage{etoolbox}
\usepackage{xfrac}
\usepackage{esvect}
\usepackage{nicefrac}
\usepackage{cases}
\usepackage{empheq}
\usepackage{booktabs}
\usepackage{tabto}
\usepackage{multirow}
\arxiv{
	\usepackage{caption}
}
\usepackage{bbm}
\usepackage{array}
\usepackage{xurl}
\usepackage{comment}
\usepackage{pifont}
\usepackage{tikz}

\usepackage{makecell}
\usepackage{arydshln}
\usepackage{soul}
\usepackage{adjustbox}
\usepackage{pdflscape}

\usepackage{xargs}

\arxiv{
	\usepackage[linesnumbered,ruled,vlined,boxed,algo2e]{algorithm2e} %
}

\notarxiv{
	\usepackage[utf8]{inputenc} %
	\usepackage[T1]{fontenc}    %
	\usepackage{hyperref}       %
	\usepackage{url}            %
}
\usepackage{microtype}      %
\notarxiv{

}

\arxiv{
	\usepackage[hidelinks]{hyperref}
}

\usepackage{nameref}
\hypersetup{
	colorlinks=true,
	linkcolor=blue!70!black,
	citecolor=blue!70!black,
	urlcolor=blue!70!black}

\arxiv{
	\usepackage{cleveref}
	\crefname{assumption}{Assumption}{Assumptions}

	\usepackage{crossreftools}
	\pdfstringdefDisableCommands{
		\let\Cref\crtCref
		\let\cref\crtcref
	}
}

\arxiv{
	\usepackage[top=1in, right=1in, left=1in, bottom=1in]{geometry}
	\usepackage[numbers,square]{natbib}

	\theoremstyle{plain}
	
	\newtheorem{theorem}{Theorem}
	\newtheorem{lemma}{Lemma}
	
	\newtheorem{proposition}{Proposition}
	\newtheorem{corollary}{Corollary}

	\theoremstyle{definition}
	
	\newtheorem{definition}{Definition}
	\newtheorem{assumption}{Assumption}

	\newtheorem*{example*}{Example}
}

\DeclarePairedDelimiter{\abs}{\lvert}{\rvert} %
\DeclarePairedDelimiter{\brk}{[}{]}
\DeclarePairedDelimiter{\crl}{\{}{\}}
\DeclarePairedDelimiter{\prn}{(}{)}

\DeclarePairedDelimiter{\norm}{\|}{\|}
\DeclarePairedDelimiter{\tri}{\langle}{\rangle}

\DeclarePairedDelimiter{\ceil}{\lceil}{\rceil}
\DeclarePairedDelimiter{\floor}{\lfloor}{\rfloor}

\DeclarePairedDelimiter{\brkprn}{[}{)}

\newcommand{\overle}[1]{\overset{#1}{\le}}

\NewDocumentCommand\Ex{s O{} m }{%
	\mathbb{E}%
	\begingroup
	\IfBooleanTF{#1}
	{\ExInn*{#3}}
	{\ExInn[#2]{#3}}%
	\endgroup
}

\DeclarePairedDelimiterX\ExInn[1]{[}{]}{%
	\activatebar
	#1%
}

\RenewDocumentCommand\Pr{sO{}r()}{%
	\mathbb{P}%
	\begingroup
	\IfBooleanTF{#1}
	{\PrInn*{#3}}
	{\PrInn[#2]{#3}}%
	\endgroup
}

\DeclarePairedDelimiterX\PrInn[1](){%
	\activatebar
	#1%
}

\newcommand{\activatebar}{%
	\begingroup\lccode`~=`|
	\lowercase{\endgroup\def~}{\;\delimsize\vert\;}%
	\mathcode`|=\string"8000
}

\newcommand\numberthis{\addtocounter{equation}{1}\tag{\theequation}}

\newcommand{\R}{\mathbb{R}} %
\newcommand{\N}{\mathbb{N}} %
\newcommand{\E}{\mathbb{E}} %
\renewcommand{\P}{\mathbb{P}}	%
\newcommand{\I}{\mathbb{I}} %

\usepackage{xcolor}

 \DeclareMathOperator*{\argmax}{arg\,max}
 \DeclareMathOperator*{\argmin}{arg\,min}

\newcommand{\X}{\mathcal{X}}
\newcommand{\Y}{\mathcal{Y}}
\newcommand{\F}{\mathcal{F}}

\providecommand{\abs}{\mathop{\rm abs}}

\providecommand{\minimize}{\mathop{\rm minimize}}

\newcommand{\defeq}{\coloneqq}

\newcommand{\grad}{\nabla}

\renewcommand{\O}[1]{O\left( #1 \right)}
\newcommand{\Otil}[1]{\widetilde{O}\left( #1 \right)}

\newcommand{\Om}[1]{\Omega\left( #1 \right)}

\newcommand{\OT}[1]{\varTheta\left( #1 \right)}

\newcommand{\fracs}[2]{\prn*{#1/#2}}

\newcommand{\g}[1][t]{g_{#1}}

\NewDocumentCommand{\llft}{ O{\delta} O{t} }{\lambda_{#2}(#1)}



\newcommand{\opt}{_\star}
\newcommand{\xopt}{x\opt}

\newcommand{\Var}{\mathrm{Var}}

\makeatletter
\def\IfEmptyTF#1{%
	\if\relax\detokenize{#1}\relax
	\expandafter\@firstoftwo
	\else
	\expandafter\@secondoftwo
	\fi}
\makeatother

\SetKwInput{Input}{Input}                %
\SetKwInput{Output}{Output}              %
\SetKwInput{Parameters}{Parameters}
\SetKwProg{Function}{function}{}{}
\SetKwComment{Comment}{$\triangleright$\ }{}
\SetKwRepeat{Do}{do}{while}
\SetKwBlock{Repeat}{Repeat}{}
\SetCommentSty{mycommfont}
\makeatletter
\newcommand\Block[2]{%
	#1%
	\algocf@group{#2}%
}
\makeatother

\newcommand{\gradientOracle}[1]{\IfEmptyTF{#1}{\mathcal{G}}{\mathcal{G}(#1)}}

\newcommand{\Proj}[2]{\mathbf{\mathrm{Clip}}_{#1}(#2)}

\newcommand{\barF}[1]{\bar{F}\prn*{x_{#1}}}

\newcommand{\lnp}{\ln_+}

\newcommand{\citeg}[1]{\citep[e.g.,][]{#1}}  %

\newcommand{\indic}[1]{\mathbbm{1}_{\!\left\{#1\right\}}} %

\newcommand{\fL}{\hat{L}}
\newcommand{\fF}{\hat{F}}

\newcommand{\lt}[2]{h\prn*{#1;#2}}

\newcommand{\mt}[2]{m\prn*{#1;#2}}
\newcommand{\ml}[3]{m_{#2}\prn*{#1;#3}}

\newcommand{\qt}[2]{q\prn*{#1;#2}}

\newcommand{\ft}[2]{f\prn*{#1;#2}}
\newcommand{\fl}[3]{f_{#2}\prn*{#1;#3}}

\newcommand{\Ft}[2]{F_{#2}\prn*{#1}}
\newcommand{\Fo}[1]{F\prn*{#1}}

\newcommand{\fFo}[1]{\fF\prn*{#1}}

\newcommand{\ue}[2]{u\prn*{#1,#2}}
\newcommand{\ve}[2]{v\prn*{#1,#2}}
\newcommand{\at}[2]{a\prn*{#1,#2}}
\newcommand{\te}[2]{\tau\prn*{#1,#2}}

\renewcommand{\S}{\mathbb{S}}
\newcommand{\PP}{\mathcal{P}}
\newcommand{\D}{\mathbb{D}}
\newcommand{\Hc}{\mathcal{H}}

\newcommand{\grid}[3][2]{\text{Grid}_{#1} \prn*{ #2, #3 } }
\newcommand{\Grid}{\mathcal{G}}
\newcommand{\C}{\mathcal{C}}

\newcommand{\lell}{\underline{\ell}}
\newcommand{\uell}{\overline{\ell}}
\newcommand{\lrho}{\underline{\rho}}
\newcommand{\urho}{\overline{\rho}}

\newcommand{\Dopt}{\overline{D}\opt}
\newcommand{\Lopt}{\overline{L}\opt}

\newcommand{\Fbar}[1][T]{\bar{F}_{T} }

\NewDocumentCommand{\norma}{O{\alpha} m}{\norm{#2}_{#1}}
\NewDocumentCommand{\normb}{O{\beta} m}{\norm{#2}_{#1}}
\NewDocumentCommand{\normo}{O{\alpha} O{\beta} m}{ \norm{#3}_{#1,#2} }
\NewDocumentCommand{\normbo}{O{\beta} O{\text{abs}} m}{ \norm{#3}_{#1,#2} }

\NewDocumentCommand{\Q}{m m}{ Q_{#2}\prn*{#1} }
\NewDocumentCommand{\QQ}{O{m}}{ \mathbb{Q}_{#1} }

\newcommand{\normas}[1]{\norm*{#1}_{\alpha *}}
\newcommand{\normbs}[1]{\norm*{#1}_{\beta *}}

\newcommand{\deltah}{\hat{\delta}}

\newcommandx{\al}[2][1=x, 2=b]{\text{binary-ceil}\prn*{#1; #2}}

\newcommandx{\alg}{\mathsf{alg}}
\renewcommand{\AA}{\mathcal{A}}
\renewcommand{\I}{\mathcal{I}}
\NewDocumentCommand{\IsLip}{O{\lell} O{\uell} O{\norm{\cdot}}}{ \I^{#2,#3}_{\text{SM-Lip}} }
\NewDocumentCommand{\ILip}{O{\lell} O{\uell} O{\norm{\cdot}}}{\I^{#2,#3}_{\text{Lip}}}
\NewDocumentCommand{\IDist}{O{\lrho} O{\urho} O{\norma{\cdot}}}{\I^{#2,#3}_{\text{Dist}}}
\NewDocumentCommand{\IDistf}{O{\lrho} O{\urho} O{\norma{\cdot}}}{\hat{\I}^{#2,#3}_{\text{Dist}}}

\NewDocumentCommand{\ISOML}{}{\I_{\text{SOML}}}
\NewDocumentCommand{\ISOMLl}{O{L} O{D}}{\I^{#1,#2}_{\text{SOML-Lip}}}
\NewDocumentCommand{\ISOMLs}{O{L} O{D}}{\I^{#1,#2}_{\text{SOML-sm-Lip}}}

\NewDocumentCommand{\Icont}{}{\I_{\text{cont}}}
\NewDocumentCommand{\Idiff}{}{\I_{\text{diff}}}

\NewDocumentCommand{\Err}{O{N} O{\delta} m m}{\mathsf{Err}^{#2}_{#1}\prn*{#3, #4}}

\newcommand{\M}{\mathcal{M}}
\NewDocumentCommand{\MLip}{O{\lell} O{\uell} O{\lrho} O{\urho}}{\M^{#1,#2,#3,#4}_{\text{Lip}}}
\NewDocumentCommand{\MsLip}{O{\lell} O{\uell} O{\lrho} O{\urho}}{\M^{#1,#2,#3,#4}_{\text{sm-Lip}}}

\NewDocumentCommand{\PoA}{O{N} O{\delta} O{\alg} m m}{\mathsf{PoA}_{#1}^{#2}\prn*{ #3, #4 ; #5 }}

\newcommand{\Crefn}[1]{\Cref{#1}}%

\newcommand{\Creflem}[1]{\texorpdfstring{\Cref{#1}}{Lemma~\ref{#1}}}
\newcommand{\Crefpro}[1]{\texorpdfstring{\Cref{#1}}{Proposition~\ref{#1}}}
\newcommand{\Crefthm}[1]{\texorpdfstring{\Cref{#1}}{Theorem~\ref{#1}}}

\newcommand{\iid}{\overset{\text{iid}}{\sim}}

\newcommand{\clarke}{\partial_c}

\newcommand{\titleCrefi}[2]{%
	\texorpdfstring{\hyperref[#2]{\labelcref*{#1}.\ref*{#2}}}%
	{\ref*{#1}.\ref*{#2}}
}

\makeatletter

\def\mynameref#1#2{%
	\begingroup
	\edef\@mytxt{#2}%
	\edef\@mytst{\expandafter\@thirdoffive\@mytxt}%
	\ifx\@mytst\empty\else
	\space(\nameref{#1})\fi
	\endgroup
}

\makeatother

\makeatletter
\long\def\@makecaption#1#2{
  \vskip 0.8ex
  \setbox\@tempboxa\hbox{\small {#1:} #2}
  \parindent 1.5em  %
  \dimen0=\hsize
  \advance\dimen0 by -3em
  \ifdim \wd\@tempboxa >\dimen0
  \hbox to \hsize{
    \parindent 0em
    \hfil
    \parbox{\dimen0}{\def\baselinestretch{0.96}\small
      {#1:} #2
    }
    \hfil}
  \else \hbox to \hsize{\hfil \box\@tempboxa \hfil}
  \fi
}
\makeatother

\newcommand{\yair}[1]{{\bf \color{blue} Yair: #1}}
\newcommand{\ollie}[1]{{\bf \color{green!50!black} Ollie: #1}}
\newcommand{\maor}[1]{{\bf \color{cyan} Maor: #1}}
\newcommand{\itai}[1]{{\bf \color{orange} Itai: #1}}
\newcommand{\TBD}[1]{{\bf \color{red} TBD: #1}}
\newcommand{\TODO}[1]{{\bf \color{purple} TODO: #1}}

\newcommand{\edit}[1]{{\bf \color{red} Proposed edit: #1}}

\renewcommand{\yair}[1]{\ignorespaces}
\renewcommand{\ollie}[1]{\ignorespaces}
\renewcommand{\maor}[1]{\ignorespaces}
\renewcommand{\itai}[1]{\ignorespaces}
\renewcommand{\TBD}[1]{\ignorespaces}
\renewcommand{\TODO}[1]{\ignorespaces}

\renewcommand{\edit}[1]{\ignorespaces} \notarxiv{\usepackage{times}}

\title{Clipping the Price of Adaptivity at the Tail}

\notarxiv{
	\coltauthor{%
		\Name{Itai Kreisler} \Email{kreisler@mail.tau.ac.il}\\
		\Name{Yair Carmon} \Email{ycarmon@tauex.tau.ac.il} \\
		\Name{Oliver Hinder} \Email{ohinder@pitt.edu}
	}
}
\arxiv{
	\author{Itai Kreisler\thanks{Tel Aviv University,
			\href{mailto:kreisler@mail.tau.ac.il}{\texttt{kreisler@mail.tau.ac.il}},
			\href{mailto:ycarmon@tauex.tau.ac.il}{\texttt{ycarmon@tauex.tau.ac.il}.}
		}
		~~~~~~Yair Carmon\footnotemark[1]~~~~~~Oliver Hinder\thanks{University of Pittsburgh, \href{mailto:ohinder@pitt.edu}{\texttt{ohinder@pitt.edu}}}}

	\date{}
}

\begin{document}

\maketitle

\begin{abstract}%
	Adaptive stochastic convex optimization (SCO) methods face a fundamental ``price of adaptivity'' barrier: under the standard set of assumptions, they cannot efficiently adapt to large uncertainty in both the initial distance to optimality and the Lipschitz constant.
	We circumvent this barrier by requiring a small amount of additional structure common to many learning problems.
	Specifically, we assume that the objective decomposes into a model and a loss function, enabling us to intervene by modifying the model's output before it passes to the loss function.
	Under this assumption, we design a method that clips the learned model output in tail events where it deviates too much from the output of a fixed reference model.
	Our method matches the optimal bounds for known-parameter SCO up to logarithmic factors in the uncertainty in the distance and Lipschitz parameters, thus efficiently adapting to large uncertainty in both.
\end{abstract}

\notarxiv{
	\begin{keywords}%
		Stochastic optimization, Convex optimization, Parameter-free optimization %
	\end{keywords}
}

\section{Introduction}
This paper studies machine learning objectives composed of a model and a loss function.
Specifically, for a model $m$ and a loss function $h$, the learning problem can be written as
\begin{flalign}\label{eq:ML-optimization}
	\minimize_{x} \Fo{x}, \quad \text{where} \quad   \Fo{x} \defeq \E_{s\sim\PP} \brk*{ \lt{ \mt{x}{s}}{ s } }.
\end{flalign}

Such model--loss decompositions arise broadly in practice, e.g., neural networks with various loss functions.
However, optimization theory typically analyzes the more general problem of minimizing a stochastic function $f$ without assuming this internal structure:
\begin{align}\label{eq:general-stochastic-convex}
	\minimize_{x} \Fo{x}, \quad \text{where} \quad  \Fo{x} \defeq \E_{s\sim\PP} \brk*{ \ft{x}{s} }.
\end{align}

Stochastic convex optimization is a fundamental problem in machine learning and serves as the foundation for classical stochastic gradient methods.
While classical methods can attain the optimal optimality gap, they typically hinge on knowing the Lipschitz parameter $L\opt$ and the distance $D\opt$ from the initial point to the optimum.
\emph{Parameter-free} methods \citeg{orabona2016coin, cutkosky2018black, bhaskara2020online, mhammedi2020lipschitz, jacobsen2022parameter, carmon2022making, ivgi2023dog, kreisler24accelerated} relax the requirement for precise knowledge of the problem parameters, aiming to achieve a near-optimal optimality gap using only loose estimates.
Lower bounds \citep{carmon2024price, attia2024free, khaled2024tuning} for parameter-free optimization show that parameter-free methods must pay a significant multiplicative price due to uncertainty in $L\opt$ and $D\opt$.
For example, \citet{carmon2024price} show that if $\rho$ and $\ell$ are the multiplicative uncertainties in the distance to optimality and Lipschitz constant, respectively, then there is an $\Omega( \min\crl*{\ell,\rho}\sqrt{\ln\fracs{1}{\delta}}/\sqrt{T})$ additional multiplicative cost from not knowing problem parameters.
This shows that it is difficult to adapt to high uncertainty in both the Lipschitz constant and distance to optimality.
However, this lower bound applies to the general stochastic convex structure given in \Cref{eq:general-stochastic-convex} but not the model--loss structure of \Cref{eq:ML-optimization}.

\subsection*{Our contributions}

We identify that the aforementioned lower bound for parameter-free methods is intimately tied to tail events: rare samples for which the optimized function behaves very differently from the bulk of the distribution.
Such rare samples may go unobserved by the optimization algorithm, making it impossible to reliably characterize the tail behavior of the stochastic noise.

However, under the model--loss structure of \Cref{eq:ML-optimization}, it is possible to detect and mitigate these rare events, circumventing this lower bound.
In particular, under this model--loss structure, our key idea is to clip the output of the model at \emph{inference-time}.
This clipping occurs on rare events in which the model provides an extremely large outlier prediction.
In these cases, clipping prevents us from paying a high price for a wildly incorrect prediction.
We present two methods that use this idea: one that prioritizes computational efficiency and the other that prioritizes sample efficiency.
Both methods build on \citet{lawrence2025sample}, which focused on the standard parameter-free optimization paradigm, i.e., \Cref{eq:general-stochastic-convex}.

In the computational setting, we propose a method that performs grid search over hyperparameters, generates candidate weights using a standard optimization algorithm such as SGD, and then selects one of the resulting weights paired with an associated clipping parameter.
We show that this method, with probability at least $1-\delta$, achieves an optimality gap of $\widetilde{O}(L\opt D\opt\sqrt{\ln\fracs{1}{\delta}}/\sqrt{T})$ where $T$ is the number of gradient and function evaluations made by the method, and $\widetilde{O}(\cdot)$ hides poly-logarithmic factors.
Notably, our method only incurs a logarithmic cost in uncertainty in the problem parameters compared with the polynomial dependence exhibited in the lower bounds.
This optimality gap beats the lower bounds proved in \citet{carmon2024price, attia2024free, khaled2024tuning}, which scale linearly with $\min\{ \ell, \rho \}$.

In the sample complexity setting, we propose a method that carefully solves a regularized optimization problem and then applies inference-time clipping to the resulting learned model.
This gives a suboptimality gap of $O(L\opt D\opt \sqrt{\ln\fracs{1}{\delta}}/\sqrt{N})$ for the Lipschitz case, where $N$ is the number of samples made available to the algorithm.
This matches lower bounds for parameter-known stochastic convex optimization \citep{carmon2024price}.

\subsection*{Paper outline}

\Cref{sec:related-work} surveys related work.
\Cref{sec: notation} defines the notation used throughout the paper, as well as the instance classes of functions considered in this work.
In \Cref{sec: comp algorithm} we propose a parameter-free method that achieves a near-optimal optimality gap in the computational complexity setting.
Specifically, in \Cref{sec: model selection} we introduce a model selection algorithm, and in \Cref{sec: conv rate} we show how to use this algorithm to derive a parameter-free optimization method.
In \Cref{sec: sample algorithm} we propose a parameter-free method in the sample complexity setting that achieves the optimal optimality gap, without requiring any prior lower or upper bounds on the problem parameters $L\opt$ and $D\opt$.
Finally, \Cref{sec: lower bounds} discusses which lower bounds continue to hold under the model--loss decomposition assumption defined in \Cref{eq:ML-optimization}.

\section{Related Work}\label{sec:related-work}

\paragraph{Parameter-free methods}
Parameter-free methods aim to achieve a near-optimal optimality gap without requiring precise knowledge of the problem parameters, historically with emphasis on the initial distance to the optimum.
Work in this field originates in the setting of online convex optimization (OCO) \citep{luo2015achieving,orabona2016coin,cutkosky2018black,mhammedi2020lipschitz,bhaskara2020online,jacobsen2022parameter,zhang2022pde}.
This line of work was subsequently expanded to design parameter-free methods, particularly for the setting of stochastic convex optimization (SCO) \citeg{orabona2014simultaneous,orabona2017training,chen2022better,carmon2022making,ivgi2023dog,kreisler24accelerated,liu2025dual}.

\paragraph{Lower bounds}
In non-smooth stochastic convex optimization with known Lipschitz constant, $L\opt$, and initial distance to optimality, $D\opt$, classical results show that the expected loss is $\Om{L\opt D\opt/\sqrt{T}}$ \citep{nemirovski1983problem,duchi2018introductory}.
For the high-probability case, \citet{carmon2024price} proves the lower bound of $\Om{L\opt D\opt \sqrt{\ln\fracs{1}{\delta}}/\sqrt{T}}$. These lower bounds have matching upper bounds \citeg{orabona2021modern}, and consequently define the optimal optimality gaps for non-smooth stochastic convex optimization.

Recently, \citet{carmon2024price} proves that parameter-free methods incur a penalty due to uncertainty in the Lipschitz parameter $L\opt$ and the distance $D\opt$ from the initial point to the optimum.
Specifically, they assume that while $L\opt$ and $D\opt$ are unknown, they are bounded such that $L\opt\in\brk*{1,\ell}$ and $D\opt\in\brk*{1,\rho}$ for some known values $\ell$ and $\rho$.
They prove that if stochastic gradients are almost surely bounded by $L\opt$, the optimality gap necessarily suffers a multiplicative degradation of at least $\Om{ \min\crl*{\ell,\rho}\sqrt{\ln\fracs{1}{\delta}}/\sqrt{T}}$ relative to the optimal rate.
Furthermore, they prove that in the second-moment setting, in which $L\opt^2$ bounds the expectation of the stochastic gradient norm squared, the multiplicative penalty is at least $\Om{\min\crl*{\ell,\rho}}$.
This implies that, in heavy-tailed settings, achieving a near-optimal optimality gap is impossible without prior knowledge of at least one of the problem parameters.

\citet{attia2024free} and \citet{khaled2024tuning} also provide lower bounds for parameter-free optimization.
\citet{attia2024free} derives comparable lower bounds to \citet{carmon2024price} in the Lipschitz case.
\citet{khaled2024tuning} provides a slightly weaker lower bound than \citet{attia2024free}, demonstrating that the multiplicative price incurred due to parameter uncertainty cannot be bounded by any polylogarithmic term.

Critically, all these lower bounds focus on the standard stochastic optimization setting, i.e., \Cref{eq:general-stochastic-convex}, allowing us to beat them by considering the model--loss structure, i.e., \Cref{eq:ML-optimization}. We note that \emph{some} of the lower bounds presented in \citet{carmon2024price} do transfer to \Cref{eq:ML-optimization}; we discuss this in more detail in \Cref{sec: lower bounds} and its supplement, \Cref{sec: Formal lower bounds}.

\paragraph{Reliable Model Selection}
\citet{lawrence2025sample} demonstrates that the standard holdout model selection approach can fail catastrophically in SCO.
To address this, they introduce a general framework for reliable model selection.
They also propose an optimization method for the sample complexity setting.
However, both methods require a good estimate of the Lipschitz constant (i.e., within a factor of $\sqrt{T}$ of the true value).
In this paper, we build upon and extend many of the results and proof techniques introduced by \citet{lawrence2025sample}, eliminating the need for a good estimate of the Lipschitz constant.

\paragraph{Improper learning}
In optimization and machine learning theory, one typically fixes a hypothesis class $\Hc$ (e.g., linear classifiers) as a representation for the target concept class.
A learning algorithm is said to be \emph{proper} if it is required to output a hypothesis from $\Hc$.
In contrast, an algorithm is called \emph{improper} if it is allowed to output any predictor, possibly outside $\Hc$, as long as the predictor attains low prediction error under the data distribution.
This relaxation often substantially enlarges the space of candidate predictors and consequently can yield learning algorithms that succeed in settings where proper learning is statistically or computationally infeasible \citeg{montasser2019vc,hanneke2016optimal}.

The algorithms we present in this work can be viewed as improper, as they output a weight $x$ and a clipping parameter $c$ for model $\ml{x}{c}{\cdot}$, rather than just outputting a weight $x$ for the model $\mt{x}{\cdot}$.
Unlike many works in this field that output an ensemble of predictors aggregated via majority vote, our approach outputs a single model, using the initial hypothesis solely to constrain the predictions.

A particularly relevant piece of work in this area is \citet{vovk2006line}, who provide an online learning method that exploits a loss--model structure in the least squares setup.
However, that work assumes known bounds on the labels.

\section{Notation}
\label{sec: notation}

We include $0$ in the set of natural numbers $\N$.
For an integer $K\in\N$ we define $\brk*{K} \defeq \crl*{ 1, \cdots, K }$.
We denote $\lnp\prn*{x} = \ln\prn*{e + x}$.
Let $\X,\Y$ denote closed convex sets, let $\S$ denote a sample space, and let $\PP$ denote a distribution over $\S$.
We denote the number of samples by $N$, and the number of model and gradient evaluations by $T$.
Let $\norma{\cdot}$ and $\normb{\cdot}$ be norms on $\X$ and $\Y$, respectively.
Let $\normas{\cdot}$ and $\normbs{\cdot}$ be the dual norms of $\norma{\cdot}$ and $\normb{\cdot}$, respectively.
The corresponding operator norm, induced by the norms $\norma{\cdot}$ and $\normb{\cdot}$, is defined as $\normo{A} \defeq \sup\crl*{ \normb{Ax} ~|~ \norma{x}\le1 }$.
We denote by $\Q{\D}{p}\defeq\min\crl*{ z\in\R : \P_{s\sim\D}\brk*{ s \le z } \ge p }$ the $p$-th quantile of distribution $\D$, where $\D$ is a distribution over $\R$; this definition mimics that of previous work \citep[e.g.,][Ch. 3]{borovkov1999probability}.

Throughout, we assume that the function $f:\X \times \S \rightarrow \R$ admits a decomposition into a model and a loss function, i.e., $\ft{x}{s}\defeq \lt{ \mt{x}{s} }{ s }$, where $m:\X \times \S \rightarrow \Y$ and $h:\Y \times \S \rightarrow \R$.
For convenience, we treat the functions $m,h$ and $f$ as differentiable.
We denote the class of all differentiable functions as
$\Idiff \defeq \crl*{ q ~|~ \forall s\in \S: \qt{\cdot}{s} \text{ is differentiable}}$.
In \cref{sec: remove differentiable} we discuss how to extend our results to non-differentiable functions.
We define $\Fo{x} \defeq \E_{s\sim\PP} \brk*{ \ft{x}{s} }$.
We denote by $x_0$ the point at which the optimization algorithms are initialized.
We define a clipping function into a domain $\mathcal{D}\subseteq\Y$ as a mapping $\Proj{\mathcal{D}}{\cdot}:\Y\to\mathcal{D}$ such that $\Proj{\mathcal{D}}{z}=z$ for all $z\in\mathcal{D}$.
We allow $\Proj{\mathcal{D}}{\cdot}$ to be any such mapping; any choice suffices for our theoretical analysis.
We define the clipped model as $\ml{x}{c}{s} \defeq \Proj{\crl*{u\in\Y : \normb{u - \mt{x_0}{s}} \le c }}{ \mt{x}{s} }$, where $c\in\R$ is the clipping parameter.
Subsequently, we define the clipped function as $\fl{x}{c}{s}\defeq \lt{ \ml{x}{c}{s} }{ s }$ and $\Ft{x}{c} \defeq \E_{s\sim\PP} \brk*{ \fl{x}{c}{s} }$.

We define $\al[x][b] \defeq \min\crl*{ 2^{n} b ~|~ n\in\N \text{ and } 2^{n} b \ge x \indic{b\neq0}}$, i.e., rounding $x\neq0$ up to the nearest power-of-two multiple of $b$.
For any $p>1$ and $\bar{b}\ge\underline{b}>0$, we define $\grid[p]{\underline{b}}{\bar{b}}\defeq\crl*{ \underline{b} \cdot p^n }_{n=0}^{\floor{\log_p\prn*{\bar{b}/\underline{b}}}}\cup\crl*{\bar{b}}$, i.e., all the powers of $p$ between the bounds $\underline{b}$ and $\bar{b}$.

In addition, we assume throughout that $F$ attains its minimum, i.e., there exists $\xopt\in\X$ such that $\Fo{\xopt}=\inf_{x\in\X}\Fo{x}$. We denote by $\xopt$ the optimal point closest to the initialization $x_0$, i.e., $\xopt \defeq \argmin_{x \in \text{optima of }F}\norma{x_0 - x}$.
We define the unknown initial distance to the optimum as $D\opt \defeq\norma{x_0 - \xopt}$, and denote by $\Dopt \defeq \max\prn*{ D\opt, \lrho }$ the maximum of $D\opt$ and a known lower bound $\lrho$ on $D\opt$.
We define the instance classes of convex stochastic optimization problems whose optimal solution lies within distance at most $\urho$ of the initialization as
\begin{align*}
	\IDist &\defeq \crl*{q\in\Idiff ~|~ \norma{x_0 - \xopt} \le \urho \text{ and } \E_{s\sim \PP} \qt{\cdot}{s} \text{ is convex}},\\
	\IDistf &\defeq \crl*{q\in\Idiff ~|~ \norma{x_0 - \xopt} \le \urho \text{ and } \forall s\in \S: \qt{\cdot}{s} \text{ is convex}}.
\end{align*}

Similarly, we define the unknown Lipschitz constant in the context of the Lipschitz case as $L\opt \defeq \sup_{x\in\X ; s\in \S} \normo{\grad_x \mt{x}{s}}$,
and in the context of the second-moment-Lipschitz case as $L\opt \defeq \sup_{x\in\X} \sqrt{\E_{s\sim \PP} \normo{\grad_x \mt{x}{s}}^2}$.
We denote $\Lopt \defeq \max\prn*{ L\opt, \lell }$ as the maximum of $L\opt$ and a known lower bound $\lell$ on $L\opt$.
We then define the instance classes of Lipschitz and second-moment-Lipschitz problems with Lipschitz constant at most $\uell$ as
\begin{align*}
	\ILip &\defeq \crl*{q\in\Idiff ~|~ \forall x\in\X; s\in \S: \norm{\grad_x \qt{x}{s}} \le \uell },\\
	\IsLip &\defeq \crl*{q\in\Idiff ~|~ \forall x\in\X: \E_{s\sim \PP} \norm{\grad_x \qt{x}{s}}^2  \le \uell^2 }.
\end{align*}
We note that any Lipschitz function is also a second-moment-Lipschitz function, i.e., $\ILip \subset \IsLip$, and that $\IsLip$ also includes problems with heavy-tailed gradient noise.

\section{A computationally efficient parameter-free method that exploits model--loss structure}
\label{sec: comp algorithm}

In this section, we propose a parameter-free method that achieves a near-optimal optimality gap.
To this end, our method operates under the model--loss decomposition described in \Cref{eq:ML-optimization} and introduces clipping at the model output level.

Our parameter-free method combines grid search, a known optimization method, and a model selection procedure.
By performing grid search over possible hyperparameters and applying a known optimization method, such as SGD, at each grid point, we generate a collection of candidate weights, one of which attains a small optimality gap.
We then introduce a model selection method that, given this collection of candidates, selects a candidate paired with a clipping parameter such that the selected pair achieves a loss close to the minimum loss among all candidates.

\subsection{The prior work that we leverage}

Our model selection algorithm uses the model selection framework proposed by \citet{lawrence2025sample}.
Their algorithm takes as input a reference model $x_0$, a list of candidate models $x_1,\dots,x_K$, and corresponding confidence interval widths $\tau_1,\dots,\tau_K$.
A key condition is the requirement of \Cref{lem: reliable selection results}, which states that each $\tau_k$ bounds the deviation between the validation loss and the population loss of $x_k$, relative to the reference model $x_0$.
From the list of models $x_0,\dots,x_K$, their algorithm selects one that balances a low validation loss with a low risk, where the latter is reflected by a small $\tau$.
The selected model enjoys a strong guarantee on its performance relative to the other candidates $x_0,\dots,x_K$;
this guarantee is stated explicitly in \Cref{lem: reliable selection results}.
Consequently, obtaining confidence widths $\tau_1,\dots,\tau_K$ that are both valid and sufficiently small is crucial for the effectiveness of the reliable model selection method of \citet{lawrence2025sample}.
For completeness, we present their reliable model selection method in \Cref{sec: reliable model selection} as \Crefn{alg: reliable model selection}.

\begin{lemma}[Lemma 1 of \citet{lawrence2025sample}]
\label{lem: reliable selection results}
Let $\deltah\in\prn*{0,1}$.
Let $K$ be the number of models, and $N$ be the validation sample set size.
Let $\tau_{1}, \dots, \tau_{K}$ be non-negative scalars such that with probability $1-\deltah$ for all $k\in\brk{K}$ we have
$\abs*{ \Fo{x_k} - \Fo{x_0} - \frac{1}{N} \sum_{i=1}^{N} \prn*{ \ft{x_k}{s_i} -  \ft{x_0}{s_i} } } \le \tau_{k}$, and let $\tau_{0}=0$.
Then, with probability $1-\deltah$, $\Fo{x_\text{reliable}} \le \min_{k\in\brk*{K}\cup\crl*{0}} \brk*{ \Fo{x_k} + 2 \tau_{k} }$, where $x_\text{reliable}$ is the result of \Cref{alg: reliable model selection}.
\end{lemma}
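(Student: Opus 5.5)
The plan is to work on the event $\mathcal{E}$, which has probability at least $1-\deltah$ by hypothesis, where all $K$ confidence bounds hold simultaneously. On $\mathcal{E}$ the argument is deterministic. The key object is the empirical relative loss $\hat\Delta_k \defeq \frac{1}{N}\sum_{i=1}^{N}\prn*{\ft{x_k}{s_i}-\ft{x_0}{s_i}}$, with $\hat\Delta_0 = 0$. Writing $\Delta_k \defeq \Fo{x_k}-\Fo{x_0}$, the hypothesis says $\abs{\Delta_k - \hat\Delta_k}\le\tau_k$ for all $k\in\brk{K}\cup\crl{0}$. The case $k=0$ holds trivially, since $\Delta_0=\hat\Delta_0=0$ and $\tau_0=0$.

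I will assume that \Cref{alg: reliable model selection} selects the index minimizing a pessimistic (upper-confidence) estimate:
\begin{align*}
	\hat k \in \argmin_{k\in\brk*{K}\cup\crl*{0}} \prn*{\hat\Delta_k + \tau_k},
\end{align*}
and outputs $x_\text{reliable} = x_{\hat k}$. If the algorithm in \Cref{sec: reliable model selection} is instead phrased through pairwise comparisons or thresholds, the first step is to show that its output still satisfies $\hat\Delta_{\hat k}+\tau_{\hat k}\le \hat\Delta_k+\tau_k$ for every $k$. This is the only property the argument needs.

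Given this property, the guarantee follows from a three-step chain, valid for any fixed $k\in\brk{K}\cup\crl{0}$:
\begin{align*}
	\Fo{x_{\hat k}} - \Fo{x_0} = \Delta_{\hat k} \le \hat\Delta_{\hat k} + \tau_{\hat k} \le \hat\Delta_k + \tau_k \le \Delta_k + 2\tau_k = \Fo{x_k}-\Fo{x_0}+2\tau_k .
\end{align*}
The three inequalities come, in order, from:
\begin{enumerate}[label=(\roman*)]
	\item the upper confidence bound for the selected index;
	\item the minimality of $\hat k$;
	\item the lower confidence bound for $k$.
\end{enumerate}
Cancelling $\Fo{x_0}$ and minimizing over $k$ yields the claim. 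Including $k=0$ gives the safety guarantee $\Fo{x_\text{reliable}}\le\Fo{x_0}$ for free.

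The main obstacle is matching this idealized selection rule to the actual \Cref{alg: reliable model selection}. That algorithm may, for instance, compare candidates pairwise and keep the surviving model with the smallest $\tau$. In that case I would first establish an invariant for the selected index, in the spirit of ``$\hat\Delta_{\hat k}+\tau_{\hat k}\le\hat\Delta_k+\tau_k$ for all $k$''. If only a weaker statement such as $\hat\Delta_{\hat k}\le \hat\Delta_k + \tau_k - \tau_{\hat k}$ holds for pairs that were compared, I would check that it still supports the same chain; the constant $2$ in the statement suggests exactly this slack.

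A second point to verify is that the probability is not degraded. The chain uses only the single event $\mathcal{E}$, so the conclusion holds with probability $1-\deltah$ and no union bound is needed.
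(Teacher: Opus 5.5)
Your proof is correct and is essentially the paper's argument. With $\gamma=1$, the set $\F$ in \Cref{alg: reliable model selection} is exactly the set of minimizers of $\tau_k+\frac{1}{N}\sum_{i=1}^{N}\ft{x_k}{s_i}$, so the returned index has your minimality property. The paper then runs the same three-step chain (confidence bound at the selected index, minimality through $\theta$, confidence bound at $k$), stated for every $k_\F\in\F$ and general $\gamma$, which gives $(1+\gamma)\tau_k$ and reduces to your constant $2$.
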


\subsection{Our model selection algorithm}
\label{sec: model selection}
We present a model selection method \Cref{alg: clipping around initial}, and show in \Cref{thm: clipped model subopt} and corollaries in \Cref{sec: conv rate} that combining a grid search with our algorithm yields a result that, with high probability, has a near-optimal suboptimality.

The purpose of \Cref{alg: clipping around initial} is to select a weight $x$ paired with a clipping parameter $c$ such that the loss $\Ft{x}{c}$ is small relative to the losses of all candidate weights.
The algorithm receives a set of weight candidates $\crl*{x_g}_{g\in\Grid}$, possibly generated using some optimization method, such as grid search over SGD step sizes.
For each candidate $x_g$, it considers a set of clipping parameters $\C_g$, where $\C_g$ is defined as the $n$ largest elements of the set $\crl*{\normb{\mt{x_g}{s_i} - \mt{x_0}{s_i}}}_{i=1}^{N}$.
For each pair of weight and clipping parameter, the algorithm computes a confidence interval width $\tau$.
Finally, \Cref{alg: clipping around initial} applies \Crefn{alg: reliable model selection} to select the pair $(x,c)$ that minimizes the validation loss augmented by the risk term $\tau$.
The resulting weight and clipping parameter are guaranteed to achieve a loss close to that of the best candidate weight.

The key ingredient of the algorithm is the introduction of clipping at the model output at inference-time.
This clipping enables the use of \Crefn{alg: reliable model selection} to choose a weight and clipping parameter with sufficiently small suboptimality.
In contrast, attempting to achieve the same suboptimality by using grid search alone, without clipping the model output, is doomed to fail, as it contradicts \citet[Theorem 3]{carmon2024price}.

Introducing clipping at the model output allows us to characterize a confidence interval width $\tau$.
For each pair $(x,c)$, we compute the empirical mean of the difference between the clipped loss at $(x,c)$ and the loss at the initial weight $x_0$, and denote it by $\ue{x}{c} \defeq \frac{1}{N} \sum_{j=1}^{N} \prn*{ \fl{x}{c}{s_j} - \ft{x_0}{s_j} }$.
Similarly, we compute the empirical variance $\ve{x}{c} \defeq \frac{1}{N - 1} \sum_{j=1}^{N} \prn*{ \prn*{\fl{x}{c}{s_j} - \ft{x_0}{s_j} }- \ue{x}{c} }^2$.
To account for the union bound over candidate pairs, we define $\at{x}{c} \defeq \abs{\Grid} \log_2^2 \prn*{ \frac{2c}{\lell \norma{x-x_0} } }$ if $x \neq x_0$, and $\at{x}{c} \defeq \abs{\Grid}$ for $x = x_0$.
Finally, the confidence interval width is defined as
\begin{align}
	\label{eq: tau def}
	\te{x}{c} \defeq \sqrt{ \frac{ 2 \ve{x}{c} \cdot \ln\prn*{16  \at{x}{c} / \delta} }{ N } } + c \frac{14 \ln\prn*{16  \at{x}{c} / \delta} }{ 3 \prn*{ N - 1 } }.
\end{align}

To ensure that the set $\C_g$ contains a sufficiently small clipping parameter--specifically, one no larger than
$\al[\sqrt{\frac{N}{\ln(4/\delta)}} L\opt \norma{x_g - x_0}][\lell \norma{x_g - x_0}]$, while keeping $\C_g$ small, we set
$n = \ceil*{4 \ln(4/\delta)}$.

		\begin{algorithm2e}[hbt]
			\caption{Model selection with model output clipping}
			\label{alg: clipping around initial}
			\LinesNumbered
			\DontPrintSemicolon
			\Input{
				Initialization $x_0$, candidates $\crl*{x_g}_{g\in\Grid}$, number of validation set samples $N\ge2$, samples
				$s_1,\dots,s_N\iid\PP$, a lower bound $\lell$ on the Lipschitz constant, and a confidence level $\delta$.
			}

			Set $n=\ceil*{ 4 \ln \prn*{ 4/\delta} }$.\\
			\For{ $g\in \Grid$ }{
				Let $\C_g$ be the $n$ largest elements in $\crl*{\al[\normb{\mt{x_g}{s_i} - \mt{x_0}{s_i}}][\lell \norma{x_g - x_0}]}_{i=1}^{N}$.
				\\
				\For{ $c_{g,i} \in \C_g$ }{
					Define the clipped model: $\ml{x_g}{c_{g,i}}{s} \defeq \Proj{\crl*{u : \normb{u - \mt{x_0}{s}} \le c_{g,i} }}{ \mt{x_g}{s} }$.\;
					Set $\tau_{g,i} = \te{x_g}{c_{g,i}}$, where $\te{\cdot}{\cdot}$ is as defined in \Cref{eq: tau def}.
				}
			}
			Run \Crefn{alg: reliable model selection} with sample set of $\crl*{s_i}_{i\in\brk*{N}}$, with the initial weight of $x_0$ as the reference model, $\crl*{\prn*{ x_g, c_{g,i} }}_{\prn*{g,i}\in \Grid \times \brk*{n}}$ as the additional candidates, and $\crl*{\tau_{g,i}}_{\prn*{g,i}\in \Grid \times \brk*{n}}$ as the confidence intervals.\;
			\Return the resulting $x,c$ of the model selection algorithm.
		\end{algorithm2e}

\Cref{alg: clipping around initial} invokes \Crefn{alg: reliable model selection}.
Therefore, we apply \Cref{lem: reliable selection results} to guarantee that the result of \Cref{alg: clipping around initial} achieves the desired small loss.
To this end, we first show in the lemma below that the requirement of \Cref{lem: reliable selection results} is satisfied.
We present the proof of this lemma in \Cref{sec: assumption hold for alg}.
\begin{restatable}{lemma}{reslem}
	\label{lem: condition hold for algorithm}
	Let $N\ge2$ be the number of validation samples.
	For some $n\in\N$, let $\crl*{\prn*{x_{g_i},c_i}}_{i\in\brk*{n}}$ be a set of candidate pairs such that, for all $i\in\brk*{n}$ we have $x_{g_i}\in\crl*{x_g}_{g\in\Grid}$ and $c_i \in \crl*{ 2^k\lell\norma{x_{g_i}-x_0} ~|~ k\in\N }$.
	For each $i\in\brk*{n}$, define $\tau_{i}\defeq \te{x_{g_i}}{c_i}$.
	The requirement of \Cref{lem: reliable selection results} holds for $\deltah=\delta/2$, i.e., with probability $1-\deltah$ for all $i\in\brk*{n}$ we have
	$\abs*{ \Ft{x_{g_i}}{c_i} - \Fo{x_0} - \frac{1}{N} \sum_{j=1}^{N} \prn*{ \fl{x_{g_i}}{c_i}{s_j} -  \ft{x_0}{s_j} } } \le \tau_{i}$.
\end{restatable}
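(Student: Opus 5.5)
The plan is to apply a two-sided empirical Bernstein inequality (Maurer--Pontil) to each candidate pair separately. The pairs $(x_{g_i},c_i)$ may depend on the data, so I would not take a union bound over the $n$ pairs actually chosen. Instead I would take it over the whole countable family $\crl*{(x_g, 2^k\lell\norma{x_g-x_0})}_{g\in\Grid,\,k\in\N}$, which is fixed before the samples are drawn. The weights $\at{\cdot}{\cdot}$ in \Cref{eq: tau def} are designed exactly to make this union bound summable.

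\textbf{Step 1: a fixed pair.} Fix $g\in\Grid$ with $x_g\neq x_0$ and $k\in\N$, and set $c=2^k\lell\norma{x_g-x_0}$. Define $Z_j\defeq \fl{x_g}{c}{s_j}-\ft{x_0}{s_j}$; these are i.i.d.\ with mean $\Ft{x_g}{c}-\Fo{x_0}$. By construction $\normb{\ml{x_g}{c}{s}-\mt{x_0}{s}}\le c$. I will use that the loss $\lt{\cdot}{s}$ is $1$-Lipschitz with respect to $\normb{\cdot}$, which is the normalization under which $L\opt$ is the Lipschitz constant of the model alone. With this, $\abs{Z_j}\le c$, so each $Z_j$ lies in an interval of length $2c$. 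The empirical Bernstein bound of Maurer--Pontil, applied at failure probability $\delta'$ and range $2c$, gives
\[
\abs*{\Ft{x_g}{c}-\Fo{x_0}-\ue{x_g}{c}} \le \sqrt{\frac{2\ve{x_g}{c}\ln(4/\delta')}{N}}+\frac{7\cdot 2c\ln(4/\delta')}{3(N-1)} .
\]
Here the two-sided version costs a factor $2$ inside the logarithm. Choosing $\delta'=\delta/(4\at{x_g}{c})$ turns the right-hand side into exactly $\te{x_g}{c}$, and this matches the constant $14/3$ in \Cref{eq: tau def}.

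\textbf{Step 2: the union bound.} For such a pair, $\log_2\prn*{2c/(\lell\norma{x_g-x_0})}=k+1$, so $\at{x_g}{c}=\abs{\Grid}(k+1)^2$. Summing the failure probabilities over the family gives
\[
\sum_{g\in\Grid}\sum_{k\ge0}\frac{\delta}{4\abs{\Grid}(k+1)^2}=\frac{\delta}{4}\cdot\frac{\pi^2}{6}<\frac{\delta}{2}=\deltah .
\]
The degenerate case $x_g=x_0$ is trivial: then $\ml{x_0}{c}{s}=\mt{x_0}{s}$ because the clip is the identity on its domain. Hence $Z_j\equiv 0$, the deviation is zero, and any $\tau\ge0$ works with no probability cost.

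\textbf{Step 3: conclusion and main obstacle.} On the complement of this event, the bound holds simultaneously for every pair in the family. Every candidate $(x_{g_i},c_i)$ in the statement belongs to the family, so the bound holds for all $i\in\brk*{n}$, with probability at least $1-\deltah$, whatever the data-dependent selection. The main obstacle is the boundedness in Step 1. Empirical Bernstein requires bounded variables, and $\ft{x_0}{s}$ itself may be unbounded. The argument therefore hinges on bounding the \emph{difference} $Z_j$ by $c$ through the output-Lipschitzness of $h$ together with clipping around $\mt{x_0}{s}$. I would check that this is the intended assumption on $h$ and, if it is not, rescale $c$ by the loss's Lipschitz constant. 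A secondary check is that the constants in the version of Maurer--Pontil used, namely the $\ln(2/\delta)$ versus $\ln(4/\delta)$ convention for the two-sided bound, reproduce the factor $16$.
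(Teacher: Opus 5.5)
Your proposal is correct and is essentially the paper's own proof. It bounds $\abs{\fl{x_g}{c}{s}-\ft{x_0}{s}}\le c$ via the $1$-Lipschitz loss and clipping around $\mt{x_0}{s}$, applies the two-sided Maurer--Pontil empirical Bernstein bound at level $\delta/(4\abs{\Grid}(k+1)^2)$ to each pair of the fixed family $\crl*{(x_g,2^k\lell\norma{x_g-x_0})}_{g\in\Grid,\,k\in\N}$, and takes a union bound that covers any data-dependent selection (the paper bounds the sum by $2$ where you use $\pi^2/6$), with the $1$-Lipschitz loss and $\ln(4/\delta)$ convention holding as you expected.
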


We now state a high-probability guarantee on the loss of the output of \Cref{alg: clipping around initial}.
In \Cref{sec: conv rate}, we combine this guarantee with the guarantee for clipped SGD to derive a near-optimal optimality gap.
\begin{proposition}
	\label{thm: clipped model subopt}
	Let $\delta\in\prn*{0,1}$, and let the number of validation samples be $N\ge4 \ln \prn*{ 4/\delta}$.
	Let the candidates $\crl*{x_g}_{g\in\Grid}$ be independent of the validation samples $\crl*{s_i}_{i\in\brk*{N}}$.
	Let $m\in\IsLip[\lell][\infty][\normo{\cdot}]$ and $h\in\ILip[0][1][\normbs{\cdot}]$.
	If $x,c$ are the output of \Cref{alg: clipping around initial}, then with probability at least $1-\delta$ we have
	\begin{align*}
		\Ft{x}{c} \le \min_{g \in \Grid\cup\crl*{0}} \brk*{ \Fo{x_g} + \O{ \Lopt \norma{x_g - x_0} \sqrt{\frac{\lnp\prn*{ \tilde{a} / \delta} \lnp\prn*{ \tilde{a}} }{N} } } },
	\end{align*}
	where $\tilde{a} = \abs{\Grid} \lnp\prn*{\sqrt{N} \cdot \Lopt/\lell}$.
\end{proposition}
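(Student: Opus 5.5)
We will combine \Cref{lem: condition hold for algorithm} with \Cref{lem: reliable selection results}, and then bound, for each fixed $g$, the quantity $\Ft{x_g}{c} + 2\te{x_g}{c}$ for one well-chosen clipping parameter $c\in\C_g$. Since $\C_g$ is data-dependent, we first apply \Cref{lem: condition hold for algorithm} uniformly over all pairs $(x_g,c)$ with $c\in\crl*{2^k\lell\norma{x_g-x_0}}_{k\in\N}$; the factor $\at{x}{c}$ in \Cref{eq: tau def} is designed exactly so that a union bound over $g$ and $k$ costs only $\sum_k 1/k^2<\infty$. Conditioned on that event (probability $1-\delta/2$), the selected candidates satisfy the hypothesis of \Cref{lem: reliable selection results}. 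Hence the output obeys $\Ft{x}{c}\le\min_{g,i}\brk*{\Ft{x_g}{c_{g,i}}+2\tau_{g,i}}$, together with the reference term $\Fo{x_0}$, which covers $g=0$. The remaining failure probability, $\delta/2$, is reserved for the quantile event below.

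Fix $g$ with $x_g\ne x_0$ and let $r=\norma{x_g-x_0}$. The key object is $c^\star\defeq\al[\sqrt{N/\ln(4/\delta)}\,L\opt r][\lell r]$. By the second-moment assumption, $\E\normb{\mt{x_g}{s}-\mt{x_0}{s}}^2\le L\opt^2r^2$ (integrate the Jacobian along the segment and apply Jensen). Chebyshev/Markov then gives $\P\prn*{\normb{\mt{x_g}{s}-\mt{x_0}{s}}>c^\star}\le \ln(4/\delta)/N$. So the number of validation samples exceeding $c^\star$ is binomial with mean at most $\ln(4/\delta)$. By a Chernoff bound, it is at most $n-1=\ceil{4\ln(4/\delta)}-1$ with probability at least $1-\delta/(4|\Grid|)$; we will adjust constants or reuse the $\abs{\Grid}$ factor as needed. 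On this event some element of $\C_g$ (the rounded $n$ largest values) is at most $c^\star$, because the $n$-th largest rounded value is $\le c^\star$ ($c^\star$ lies on the same power-of-two grid). We then take $c$ to be the smallest element of $\C_g$ that is at least $c^\star$, or otherwise the element closest to $c^\star$, arguing monotonicity where needed. The cleanest choice is the $n$-th largest element $\bar c\le c^\star$. Since $\bar c$ exceeds all but at most $n-1$ sample deviations, it is ``typical''.

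We then bound the two terms for $c=\bar c$. For the loss gap, $h$ is $1$-Lipschitz, so $\abs{\fl{x_g}{c}{s}-\ft{x_g}{s}}\le\normb{\mt{x_g}{s}-\mt{x_0}{s}}\indic{\normb{\cdot}>c}$. Hence $\Ft{x_g}{c}-\Fo{x_g}\le\E\brk*{\Delta\indic{\Delta>c}}\le L\opt^2r^2/c$. For $\tau$, the increments $\fl{x_g}{c}{s}-\ft{x_0}{s}$ are bounded by $\min(\Delta,c)$, so $v\lesssim\frac1N\sum_j\min(\Delta_j,c)^2$, and $\te{x_g}{c}\lesssim\sqrt{v\ln(\cdot)/N}+c\ln(\cdot)/N$. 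The empirical second moment is controlled by $L\opt^2r^2$ up to a constant with high probability, via Markov on the mean, or better via a Bernstein bound using the clip at $c$. With $c\asymp\sqrt{N/\ln}\,L\opt r$, both the tail term $L\opt^2r^2/c$ and the term $c\ln/N$ are $O(L\opt r\sqrt{\ln/N})$.

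The \textbf{main obstacle} is the tension when $\bar c$ is much smaller than $c^\star$, which happens when the empirical tail is light. Then $L\opt^2r^2/\bar c$ can be large, so this crude bound fails. We would handle it by replacing the Markov tail estimate with a population-quantile argument. If $\bar c<c^\star/2$ is the $n$-th order statistic, then with high probability (binomial concentration) $\P(\Delta>\bar c)\lesssim\ln(4/\delta)/N$. Hence $\E[\Delta\indic{\Delta>\bar c}]\le\sqrt{\E\Delta^2\,\P(\Delta>\bar c)}\lesssim L\opt r\sqrt{\ln/N}$ by Cauchy--Schwarz, which gives the needed bound uniformly. The logarithms would then be collected. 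The number of relevant $k$ is $\log_2(c^\star/(\lell r))\lesssim\ln(\sqrt N\Lopt/\lell)$, so $\at{x}{c}\lesssim\tilde a^2$, giving $\ln(\at{}{}/\delta)\lesssim\lnp(\tilde a/\delta)$. The extra $\lnp(\tilde a)$ factor absorbs the slack from the Bernstein variance step and the rounding. Replacing $L\opt$ by $\Lopt$ throughout handles $L\opt<\lell$.
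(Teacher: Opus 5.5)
Your argument is correct in substance, but it takes a different route from the paper. The paper never evaluates the population loss at a data-dependent clipping level. It fixes the single minimizer $g_\star$ of the target bound and adds a fictitious candidate $(x_{g_\star},c^\star)$ at the deterministic level $c^\star$ (the paper's $c_{g_\star,0}$). At that level the clipping bias is just $2L_\star^2 r^2/c^\star$ by a Markov-type bound. It then shows the ghost cannot change the output of \Cref{alg: reliable model selection}:
\begin{itemize}
\item on the upper-tail event, the real candidate in $\C_{g_\star}$ just below $c^\star$ differs from the ghost on at most $n-1$ validation samples, so their validation losses differ by $O(c^\star n/N)$;
\item the ghost's $\tau$ is inflated by that amount, so the real candidate dominates it;
\item \Cref{lem: reliable selection results gen}, the guarantee for every member of $\F$ and not only the returned one, finishes the argument.
\end{itemize}
Only the upper binomial tail is needed.

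You instead bound the real candidate $(x_g,\bar c)$ directly. This removes the ghost, the $\tau$-inflation and the generalized selection lemma. The price is two-sided control of the order statistic $\bar c$:
\begin{itemize}
\item The upper tail gives $\bar c\le c^\star$, which controls the term $\bar c\ln(16a/\delta)/N$ and $a(x_g,\bar c)$.
\item The lower tail places $\bar c$ above the population $(1-p)$-quantile of $\Delta$, e.g.\ with $p=16\ln(4/\delta)/N$. The failure probability is at most $(\delta/4)^6$ by exponential Chebyshev, since $n-1<4\ln(4/\delta)$. Cauchy--Schwarz then bounds the clipping bias by $2L_\star r\sqrt{p}=8L_\star r\sqrt{\ln(4/\delta)/N}$.
\end{itemize}
This bias bound holds for every value of $\bar c$ on that event. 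So your ``main obstacle'' needs no case split on $\bar c<c^\star/2$, and you can drop the Markov estimate $L_\star^2r^2/c$ altogether.

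When you write it out, fix three points.
\begin{enumerate}
\item The upper-tail event cannot be made to fail with probability $\delta/(4|\Grid|)$ for each $g$. Here $n=\lceil 4\ln(4/\delta)\rceil$ is fixed by the algorithm, so there is nothing to adjust. You also do not need a union over $g$. Fix in advance the one $g_\star$ minimizing the right-hand side, with the explicit constant your argument produces. It depends only on the candidates, which are independent of the validation samples, so every tail event is needed only for $g_\star$. This is what the paper does in \Cref{eq: gopt def}.
\item Markov cannot control the empirical second moment with probability $1-\delta$ without a $1/\delta$ loss. Use Bernstein or \Cref{thm: maurer 10} instead. Since $\bar c$ depends on the same samples, apply it at the deterministic level $c^\star$, via $|f_{\bar c}(x_g;s)-f(x_0;s)|\le\min(\Delta,\bar c)\le\min(\Delta,c^\star)$.
\item For an arbitrary clipping map you only have $\|m_c(x_g;s)-m(x_g;s)\|_\beta\le c+\Delta\le 2\Delta$ on $\{\Delta>c\}$. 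So the bias carries a factor $2$.
\end{enumerate}
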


\subsection{Warm-up: the Lipschitz case}
The proof of the following proposition is simpler than the proof of \Cref{thm: clipped model subopt}, and can be seen as a warm-up for the latter.
Before proving \Cref{thm: clipped model subopt}, we first consider the simpler setting in which the model $m$ is Lipschitz, i.e., $m\in\ILip[\lell][\infty][\normo{\cdot}]$.
\begin{proposition}
	\label{thm: clipped model subopt lip}
	Let $\delta\in\prn*{0,1}$, and let the number of samples be $N\ge2$.
	Let the candidates $\crl*{x_g}_{g\in\Grid}$ be independent of the validation samples $\crl*{s_i}_{i\in\brk*{N}}$.
	Let $m\in\ILip[\lell][\infty][\normo{\cdot}]$ and $h\in\ILip[0][1][\normbs{\cdot}]$.
	If $x,c$ are the result of \Cref{alg: clipping around initial}, where we use $n=1$, then with probability at least $1-\delta$ we have
	\begin{align*}
		\Ft{x}{c} \le \min_{g \in \Grid\cup\crl*{0}} \brk*{ \Fo{x_g} + \O{ \Lopt \norma{x_g - x_0} \sqrt{\frac{\lnp\prn*{ \tilde{a} / \delta} }{N} } } },
	\end{align*}
	where $\tilde{a} = \abs{\Grid} \lnp\prn*{\Lopt/\lell}$.
\end{proposition}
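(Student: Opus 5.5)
Combining \Cref{lem: condition hold for algorithm} with \Cref{lem: reliable selection results} shows that, with probability $1-\delta/2$, the output satisfies
\[
\Ft{x}{c} \le \min\crl*{ \Fo{x_0},\; \min_{g} \brk*{\Ft{x_g}{c_g} + 2\tau_g} },
\]
where $c_g$ is the single clipping parameter chosen for candidate $g$ when $n=1$. Each candidate clipping parameter lies in $\crl*{2^k\lell\norma{x_g-x_0}}$ by construction of the binary ceiling, so the lemma applies. It then remains to bound, for each fixed $g$, both $\Ft{x_g}{c_g}-\Fo{x_g}$ and $\tau_g$ by $O\prn*{\Lopt\norma{x_g-x_0}\sqrt{\lnp(\tilde a/\delta)/N}}$ on a good event. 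The case $x_g=x_0$ is trivial.

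\textbf{Step 1: no clipping occurs, so the clipped and unclipped losses agree.} Since $m$ is $L\opt$-Lipschitz in the operator norm, for every $s$ we have $\normb{\mt{x_g}{s}-\mt{x_0}{s}}\le L\opt\norma{x_g-x_0}$. With $n=1$, $c_g$ is the binary ceiling of the largest observed deviation. Hence $c_g\le \al[L\opt\norma{x_g-x_0}][\lell\norma{x_g-x_0}]\le 2\Lopt\norma{x_g-x_0}$. It is also at least $\lell\norma{x_g-x_0}$, since the binary ceiling is at least its base $b$.

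The difficulty is that $c_g$ may be smaller than the worst-case deviation, so clipping could change the loss on unseen samples. I would not try to prove the clipped loss is close to $F$ by a sampling argument. Instead I would use that $h$ is $1$-Lipschitz, which gives the pointwise bound
\[
\abs{\fl{x_g}{c}{s}-\ft{x_g}{s}} \le \normb{\ml{x_g}{c}{s}-\mt{x_g}{s}}.
\]
If $\Proj{}{}$ is arbitrary, this quantity can be up to $2\normb{\mt{x_g}{s}-\mt{x_0}{s}}$, and it vanishes when the deviation is at most $c$.

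So I need the probability that the deviation exceeds $c_g$ to be small. The maximum of $N$ i.i.d. deviations exceeds the $(1-\ln(2|\Grid|/\delta)/N)$-quantile with probability at least $1-\delta/(2|\Grid|)$. On that event, $\P(\text{deviation} > c_g)\le \ln(2|\Grid|/\delta)/N$, and each exceeding sample contributes at most $2L\opt\norma{x_g-x_0}$. Therefore
\[
\Ft{x_g}{c_g}-\Fo{x_g}\le O\prn*{L\opt\norma{x_g-x_0}\ln(|\Grid|/\delta)/N},
\]
which is dominated by the target rate. A subtlety is that $c_g$ depends on the validation samples, and the quantile argument handles exactly this. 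Independence of $x_g$ from the validation set is used here.

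\textbf{Step 2: bound $\tau_g$.} Each summand $\fl{x_g}{c_g}{s}-\ft{x_0}{s}$ is bounded in absolute value by $c_g$, since $h$ is $1$-Lipschitz and $\normb{\ml{x_g}{c_g}{s}-\mt{x_0}{s}}\le c_g$. So $\ve{x_g}{c_g}\le \frac{N}{N-1}c_g^2\le 2c_g^2$. Also
\[
\at{x_g}{c_g}=|\Grid|\log_2^2\prn*{2c_g/(\lell\norma{x_g-x_0})}\le |\Grid|\log_2^2(4\Lopt/\lell),
\]
so $\ln(16a/\delta)=O(\lnp(\tilde a/\delta))$. Substituting into \eqref{eq: tau def} gives
\[
\tau_g=O\prn*{\Lopt\norma{x_g-x_0}\prn*{\sqrt{\lnp(\tilde a/\delta)/N}+\lnp(\tilde a/\delta)/N}}.
\]
The second term is absorbed into the first when $\lnp(\tilde a/\delta)\le N$. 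Otherwise the bound is trivial, because $\Fo{x_0}\le \Fo{x_g}+L\opt\norma{x_g-x_0}$ and $x_0$ is always a candidate.

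\textbf{Step 3: union bound.} I would take a union bound of the event from \Cref{lem: condition hold for algorithm} (probability $\delta/2$) with the per-$g$ quantile events (total probability $\delta/2$). The main obstacle is Step 1, namely controlling the mass of samples that get clipped despite $c_g$ being data-dependent.
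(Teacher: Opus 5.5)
Your proof is correct, but its key step (Step 1) takes a genuinely different route from the paper's.

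\textbf{The paper's route.} The paper never bounds the clipping bias of the level the algorithm actually selects. Instead, for each $g$ it adds a phantom candidate $(x_g,c_{g,0})$ with $c_{g,0}=\al[L\opt\norma{x_g-x_0}][\lell\norma{x_g-x_0}]$. By Lipschitzness this level exceeds $\normb{\mt{x_g}{s}-\mt{x_0}{s}}$ for \emph{every} $s$, so $\Ft{x_g}{c_{g,0}}=\Fo{x_g}$ exactly. Since $c_{g,0}\ge c_{g,1}$ and neither level clips anything on the validation set, the two candidates have identical validation losses and $\tau_{g,0}\ge\tau_{g,1}$. The phantom is therefore dominated and does not alter the selection. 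Being a power-of-two multiple of $\lell\norma{x_g-x_0}$, it is covered by \Cref{lem: condition hold for algorithm}. Applying \Cref{lem: reliable selection results} to the augmented candidate set then gives the bound through the phantom.

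\textbf{What each route buys.} The paper's argument rests on a single probabilistic event (failure probability $\delta/2$) and has zero clipping bias. It is also the template reused in the second-moment case, where the real clipping levels can be far too large and only near-domination by one of $n$ real candidates is available.

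Your route instead analyzes the actual candidate $(x_g,c_g)$. You control the bias on unseen samples with the sample-maximum-exceeds-quantile device, the idea of \Cref{lem: greater than quantile}, which the paper uses only in the proof of \Cref{thm: sample algorithm}. This costs an extra $\delta/2$ and a lower-order term $O(L\opt\norma{x_g-x_0}\ln(\abs{\Grid}/\delta)/N)$. In exchange, it avoids the ``augmentation does not change the output'' argument altogether.

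Your Step 2 coincides with the paper's bound on $\tau_{g,0}$. Your explicit fallback to the candidate $x_0$ when $\lnp(\tilde a/\delta)>N$ properly handles the second term of \eqref{eq: tau def}. The paper absorbs that term into $O(\sqrt{\lnp(\tilde a/\delta)/N})$ without comment.
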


\begin{proof}
	For all $g\in\Grid$, let $c_{g,0}\defeq \al[L\opt \norma{x_g - x_0}][\lell \norma{x_g - x_0}]$.
	This value is an intermediate quantity through which we relate $\Ft{\cdot}{c_{g,0}}$ to $\Fo{\cdot}$ without affecting the algorithm.
	In this proof, we consider an augmented candidate set $\crl*{\prn*{ x_g, c_{g,i} }}_{\prn*{g,i}\in \Grid \times \prn*{ \brk*{n}\cup\crl*{0} }}$ as the set of candidates passed to \Crefn{alg: reliable model selection}.
	For candidate $\prn*{x_g, c_{g,0}}$, $\tau_{g,0}$ is calculated in the same way as for the other candidates, i.e., $\tau_{g,0}=\te{x_g}{c_{g,0}}$.
	We define $c_{0,0}\defeq0$ and $\tau_{0,0}\defeq0$ for convenience, noting that $\ml{x_0}{c}{s}=\mt{x_0}{s}$ and $\fl{x_0}{c}{s}=\ft{x_0}{s}$ for every $c\ge0$ and $s\in\S$.
	We first show that \Cref{alg: reliable model selection} returns the same output for both the original set of candidates and the modified set of candidates.
	Because $m\prn{\cdot, s}$ is $L\opt$-Lipschitz, \Cref{lem: m diff Lip} states that for any $g\in\Grid$ and sample $s\in \S$ we have that
	\begin{align}
		\label{eq: bound m by c_0}
		\normb{\mt{x_{g}}{s} - \mt{x_0}{s}}
		\le L\opt \norma{x_{g} - x_0}
		\le c_{g,0}.
	\end{align}
	Thus, for any $g\in\Grid$ we have $c_{g,0} \ge c_{g,1}$, and consequently $\tau_{g,0} \ge \tau_{g,1}$.
	Moreover, the validation losses under the clipping values $c_{g,0}$ and $c_{g,1}$ coincide: $\frac{1}{N} \sum_{j=1}^{N} \fl{x_g}{c_{g,0}}{s_j} = \frac{1}{N} \sum_{j=1}^{N} \fl{x_g}{c_{g,1}}{s_j}$.
	\Cref{alg: reliable model selection} first constructs a reliable set $\F$ of candidates that minimize the sum of validation loss and the risk term $\tau$, and then selects from $\F$ the candidate with minimal validation loss.
	Therefore, for every $g\in\Grid$, the candidate $(x_g,c_{g,1})$ dominates $(x_g,c_{g,0})$ in the sense that it has no larger validation loss and a smaller (or equal) risk term.
	It follows that the candidates $\{(x_g,c_{g,0})\}_{g\in\Grid}$ do not affect \Cref{alg: reliable model selection}.
	Hence, the algorithm returns the same output for the original and the modified candidate sets.

	We now bound $\Ft{x_{g}}{c_{g,0}} + \tau_{g,0}$ for any $g\in \Grid$.
	As a direct result of \Cref{eq: bound m by c_0}, we have $\Ft{x_{g}}{c_{g,0}} = \Fo{x_{g}}$.
	In addition, because $\lt{\cdot }{ s}$ is 1-Lipschitz, $\ve{x_g}{c_{g,0}} \le \O{c_{g,0}^2} = \O{\prn*{\Lopt \norma{x_g - x_0}}^2}$.
	Therefore, $\tau_{g,0} \le \O{ \Lopt \norma{x_g - x_0} \sqrt{\frac{\lnp\prn*{\tilde{a} / \delta}}{N}} }$.
	Thus, by combining the previous equation with the fact that $\Ft{x_{g}}{c_{g,0}} = \Fo{x_{g}}$, we obtain
	\begin{align}
		\label{eq: correct LD loss bound}
		\Ft{x_{g}}{c_{g,0}} + 2 \tau_{g,0}
		\le \Fo{x_{g}} + \O{ \Lopt \norma{x_g - x_0} \sqrt{\frac{\lnp\prn*{\tilde{a} / \delta}}{N}} }.
	\end{align}

	Finally, as \Cref{lem: condition hold for algorithm} shows that the requirements of \Cref{lem: reliable selection results} hold for $\delta/2$, \Cref{lem: reliable selection results} guarantees that if $x,c$ is the result of \Cref{alg: clipping around initial}, then with probability at least $1-\delta/2$,
	\begin{align*}
		\Ft{x}{c}
		\le \min_{\substack{i \in \brk*{n} \cup \crl*{0}\\g\in \Grid\cup\crl*{0}}} \brk*{ \Ft{x_{g}}{c_{g,i}} + 2 \tau_{g,i}}.
	\end{align*}
	Hence, as the additional candidates $\crl*{\prn*{x_g, c_{g,0}} }_{g\in\Grid\cup\crl*{0}}$ do not affect the algorithm, \Cref{eq: correct LD loss bound} implies \Cref{thm: clipped model subopt lip}.
\end{proof}

\subsection{Proof outline for \Cref{thm: clipped model subopt}}
The proof for the case $m\in\IsLip[\lell][\infty][\normo{\cdot}]$ follows steps similar to the proof for the case $m\in\ILip[\lell][\infty][\normo{\cdot}]$.
However, some changes are needed to adapt the proof to the case $m\in\IsLip[\lell][\infty][\normo{\cdot}]$.
We now present an outline of the proof.

Unlike the Lipschitz case, in the second-moment-Lipschitz case the quantity $\normb{\ml{x_{g_i}}{c_{i}}{s} - \mt{x_0}{s}}$ may be unbounded with non-negligible probability.
Thus, for any choice of $c_{g,0}$ we cannot guarantee that $c_{g,0}$ bounds $\normb{\ml{x_{g_i}}{c_{i}}{s} - \mt{x_0}{s}}$.
Therefore, unlike what we showed for the Lipschitz case, we cannot conclude that $\Ft{x_{g}}{c_{g,0}} = \Fo{x_{g}}$ in the second-moment-Lipschitz case.
Instead, to balance the trade-off between a small value of $\abs*{\Ft{x_{g}}{c_{g,0}} - \Fo{x_{g}}}$ and a small risk term $\tau_{g,0}$, we choose $c_{g,0}\defeq \al[\sqrt{\frac{N}{\ln \prn*{ 4/\delta }}} L\opt \norma{x_{g} - x_0}][\lell \norma{x_{g} - x_0}]$.
Leveraging the facts that $\lt{\cdot}{s}$ is 1-Lipschitz, that $m$ is $L\opt$-second-moment-Lipschitz, and our choice of $c_{g,0}$, we obtain that $	\abs{ \Ft{x_{g}}{c_{g,0}} - \Fo{x_{g}} } \le  L\opt \norma{ x_{g} - x_{0} } \sqrt{\frac{\ln \prn*{ 4/\delta }}{N}}$, which is sufficiently small.

By using Theorem 10 of \citet{maurer2009empirical} (see \Cref{thm: maurer 10}), with high probability we bound the empirical variance $\sqrt{\ve{x_g}{c_{g,0}}}$ using the true variance and the clipping parameter.
Thus, our choice of $c_{g,0}$ results in a sufficiently small risk term: $\tau_{g,0} \le \O{ \Lopt \norma{ x_{g} - x_{0} } \sqrt{\frac{\ln\prn*{16 \tilde{a} / \delta}}{N}} \sqrt{ \frac{\ln\prn*{16 \tilde{a} / \delta}}{\ln\prn*{4 / \delta}} }}$.

The above suffices to prove that \Crefn{alg: reliable model selection} selects a good candidate from $\crl*{\prn*{ x_g, c_{g,i} }}_{\prn*{g,i}\in \Grid \times \prn*{ \brk*{n}\cup\crl*{0} }}$, similarly to how it was proven in the Lipschitz case.
All that is left is to show that adding the candidate $\prn*{x_{g}, c_{g,0}}$ does not affect \Crefn{alg: reliable model selection}.
We show that since $n=\ceil{4 \ln \prn*{ 4/\delta}}$ is sufficiently large, $c_{g,0}\ge c_{g,n}$.
This results in the validation loss for both clipping parameters $c_{g,0}$ and $c_{g,n}$ being very close:
\begin{align*}
	\abs*{ \frac{1}{N} \sum_{i=1}^{N} \fl{x_{g}}{c_{g,0}}{s_i} - \frac{1}{N} \sum_{i=1}^{N} \fl{x_{g}}{c_{g,n}}{s_i} }
	&\le 2 c_{g,0} \frac{ n-1 }{ N }
	\le 16 \Lopt \norma{x_{g} - x_0} \sqrt{\frac{ \ln \prn*{ 4/\delta} }{ N }}.
\end{align*}
As a consequence, we artificially increase the value of $\tau_{g,0}$ by $16 \Lopt \norma{x_{g} - x_0} \sqrt{\frac{ \ln \prn*{ 4/\delta} }{ N }}$, maintaining a small enough risk term $\tau_{g,0}$.
Thus, we obtain that $\frac{1}{N} \sum_{i=1}^{N} \fl{x_{g}}{c_{g,n}}{s_i} + \tau_{g,c_{g,n}} \le \frac{1}{N} \sum_{i=1}^{N} \fl{x_{g}}{c_{g,0}}{s_i} + \tau_{g,c_{g,0}}$.
This means two things:
(i) The safe set $\F$ calculated in \Crefn{alg: reliable model selection} remains unchanged, except that $\prn*{x_{g}, c_{g,0}}$ may also be included in $\F$, even though we added $\prn*{x_{g}, c_{g,0}}$ as a candidate.
(ii) If the candidate $\prn*{x_{g}, c_{g,0}}$ is in $\F$, then $\prn*{x_{g}, c_{g,n}}$ is also in $\F$.
Now, as we can generalize \Cref{lem: reliable selection results} to hold for any candidate in $\F$ and not only to the result of \Crefn{alg: reliable model selection} (see \Cref{lem: reliable selection results gen}), we can assume that the additional candidate $\prn*{x_{g}, c_{g,0}}$ does not affect \Crefn{alg: reliable model selection}.

Finally, by combining all the above results as in the Lipschitz case, we conclude \Cref{thm: clipped model subopt}.
We present the full proof in \Cref{sec: sm-lip selection theorem}.

\newcommand{\Topt}{T_{\text{opt}}}
\newcommand{\Tsel}{T_{\text{select}}}
\subsection{A parameter-free guarantee for our model selection method}
\label{sec: conv rate}

To obtain a final optimality gap, we combine \Cref{alg: clipping around initial} together with grid search and a base optimization algorithm.
Let $\Grid$ be the set of hyperparameters over which we perform grid search, and $\crl*{x_g}_{g\in\Grid}$ be the solutions obtained by optimizing the objective for each $g\in\Grid$.
We reuse the same $N$ samples across all $\abs{\Grid}$ runs of the optimization algorithm, yielding a total optimization complexity of $\Topt =  \OT{\abs{\Grid} N}$.

\Cref{alg: clipping around initial} uses additional $N$ validation samples.
However, \Cref{alg: clipping around initial} does not calculate gradients, and thus we must quantify the evaluation complexity of \Cref{alg: clipping around initial}.
For \Cref{alg: clipping around initial}, we consider all operations aside from the model evaluation as negligible.
We thus take $\Tsel = \OT{\abs{\Grid} N}$ as the evaluation complexity of \Cref{alg: clipping around initial}; we further justify taking $\Tsel = \OT{\abs{\Grid} N}$ in \Cref{sec: discussion evaluation complexity}.
Finally, we take $T = \Topt + \Tsel = \OT{\abs{\Grid} N}$ as the total evaluation complexity of the combined procedure.

We note that \Cref{alg: clipping around initial} requires a sample size $N \ge 4 \ln\prn*{ 4/\delta}$ to satisfy the requirements of \Cref{thm: clipped model subopt}.
If $N$ is below this threshold, we default to outputting the initial weight $x_0$.
This is justified when $N\le \alpha$ (for some $\alpha>0$), since:
\begin{align}
	\label{eq: initial weight gap}
	\Fo{x_0} - \Fo{\xopt}
	=
	\E_{s\sim \PP} \brk*{ \ft{x_0}{s} - \ft{\xopt}{s} }
	\le L\opt \norma{x_0 - \xopt}
	\le L\opt \norma{x_0 - \xopt} \sqrt{\alpha/N}.
\end{align}

We now informally state the optimality gap obtained by combining \Cref{alg: clipping around initial} together with clipped SGD.
\begin{restatable}{corollary}{rescor}
	\label{cor: informal second-moment suboptimality}
	Let $\norma{\cdot}=\norm{\cdot}_2$ be the Euclidean norm.
	Let $\delta\in(0,1/2)$, $m\in\IsLip[\lell][\uell][\normo[2]{\cdot}]$, $h\in\ILip[0][1][\normbs{\cdot}]$, and $f\in \IDist[\lrho][\urho][\norma[2]{\cdot}]$.
	By grid searching over $\grid{\lell}{\uell} \times \grid{\lrho}{\urho}$, running clipped SGD (see \citet{carmon2024price}) with suitable parameters for each grid point, using the same $N$ samples in each run, and then applying \Cref{alg: clipping around initial}, we obtain $x,c$ such that, with probability at least $1-2\delta$,
	\begin{align*}
		\Ft{x}{c} - \Fo{\xopt}
		\le \O{ \Lopt \Dopt \sqrt{\frac{\lnp\prn*{ \tilde{a} / \delta}\lnp\prn*{ \tilde{a}} }{N} } }
		= \O{ \Lopt \Dopt \sqrt{\frac{\lnp\prn*{ \tilde{a} / \delta} \lnp\prn*{ \tilde{a}} \lnp\prn*{\uell/\lell} \lnp\prn*{\urho/\lrho} }{T} } },
	\end{align*}
	where $\tilde{a} = \lnp\prn*{\uell/\lell} \lnp\prn*{\urho/\lrho} \lnp\prn*{\sqrt{N} \cdot \Lopt/\lell}$.
\end{restatable}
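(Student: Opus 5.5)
The plan is to combine \Cref{thm: clipped model subopt} with a high-probability guarantee for clipped SGD at the grid point that best matches the true parameters. Let $\ell_g\in\grid{\lell}{\uell}$ and $\rho_g\in\grid{\lrho}{\urho}$ be the smallest grid values with $\ell_g\ge \Lopt$ and $\rho_g\ge \Dopt$. By construction of the grid, $\ell_g\le 2\Lopt$ and $\rho_g\le 2\Dopt$; here we use $L\opt\le\uell$ and $D\opt\le\urho$, which follow from $m\in\IsLip[\lell][\uell][\normo[2]{\cdot}]$ and $f\in\IDist[\lrho][\urho][\norma[2]{\cdot}]$. Since $h(\cdot;s)$ is $1$-Lipschitz, the chain rule gives that $f$ is $L\opt$-second-moment-Lipschitz. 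We then invoke the guarantee of clipped SGD from \citet{carmon2024price}, run with known parameter estimates $(\ell_g,\rho_g)$ and $N$ samples. It gives, with probability at least $1-\delta$, a point $x_g$ with
\[
\Fo{x_g}-\Fo{\xopt}\le \O{\ell_g\rho_g\sqrt{\ln(1/\delta)/N}}\quad\text{and}\quad \norma{x_g-x_0}\le\O{\rho_g}.
\]
The distance bound is the part I expect to need care. Clipped SGD with projection onto a ball of radius $O(\rho_g)$ around $x_0$ gives it deterministically, so I would use the projected variant. If the cited guarantee only bounds iterate distance with high probability, I would use that bound instead, within the same failure event.

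Next, the candidates $\crl*{x_g}_{g\in\Grid}$ are built from the optimization samples only, so they are independent of the validation samples, and $N\ge 4\ln(4/\delta)$ can be assumed. Otherwise the algorithm outputs $x_0$ (equivalently $c=0$), and \Cref{eq: initial weight gap} with $\alpha=4\ln(4/\delta)$ gives the bound directly, since $\Ft{x_0}{c}=\Fo{x_0}$. Applying \Cref{thm: clipped model subopt} with $\abs{\Grid}=\OT{\lnp(\uell/\lell)\lnp(\urho/\lrho)}$ yields, with probability at least $1-\delta$,
\[
\Ft{x}{c}\le \Fo{x_g}+\O{\Lopt\norma{x_g-x_0}\sqrt{\lnp(\tilde a/\delta)\lnp(\tilde a)/N}},
\]
where $\tilde a$ is as in the corollary. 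We then plug in the two clipped-SGD bounds for the chosen $g$, using $\ell_g\rho_g\le 4\Lopt\Dopt$ and $\norma{x_g-x_0}\le\O{\Dopt}$. Since $\ln(1/\delta)\le\lnp(\tilde a/\delta)$, the SGD error is absorbed into the selection error. A union bound over the two failure events gives probability at least $1-2\delta$.

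Finally, the second form of the bound follows by substituting $N=\OT{T/\abs{\Grid}}$ from $T=\Topt+\Tsel=\OT{\abs{\Grid}N}$. This makes $1/N=\O{\lnp(\uell/\lell)\lnp(\urho/\lrho)/T}$.

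The main obstacle is matching the clipped-SGD result of \citet{carmon2024price} to our setting. It must hold under a second-moment bound on $f$'s gradients with only an upper estimate $\ell_g\ge L\opt$ rather than the exact value. Overestimating $L\opt$ by a factor of at most $2$ only costs a constant, which is why the grid uses the smallest grid value above the true parameter. It also requires convexity of $F$ only in expectation, which is what $\IDist$ provides.
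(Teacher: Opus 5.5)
Your proposal is correct and follows essentially the same route as the paper: fall back to $x_0$ when $N<4\ln(4/\delta)$ via \Cref{eq: initial weight gap}, pick the grid point with $\Lopt\le L\le 2\Lopt$ and $\Dopt\le D\le 2\Dopt$, run clipped SGD projected onto $\X\cap\{x:\norm{x-x_0}_2\le D\}$ so that Proposition~1a of \citet{carmon2024price} applies and the distance bound holds deterministically, and then combine with \Cref{thm: clipped model subopt} through a union bound. Your remarks that $f$ inherits the second-moment bound through the chain rule and that only convexity of $F$ is needed correspond to points the paper leaves implicit or puts in a footnote.
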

\Cref{cor: informal second-moment suboptimality} follows almost immediately from \Cref{thm: clipped model subopt} and the guarantee for clipped SGD \citep[Proposition 1a]{carmon2024price}.
We prove \Cref{cor: informal second-moment suboptimality} in \Cref{sec: additional upper bound}.
Additionally, in \Cref{sec: additional upper bound}, we derive slightly better optimality gaps for the Lipschitz setting.

\newcommand{\DD}{\mathcal{D}}
\newcommand{\A}{\textbf{A}}
\newcommand{\xp}{x_{\text{perfect}}}

\newcommand{\phig}{\phi}
\newcommand{\phia}{\phi}

\section{A parameter-free method with the best possible sample complexity}
\label{sec: sample algorithm}

This section focuses on producing a parameter-free method that matches the sample complexity of optimization with known parameters.
Sample complexity refers to the number of function samples that the algorithm uses, and can be contrasted with stochastic gradient evaluation complexity (the subject of the previous section), which focuses on the number of stochastic gradient evaluations.
The sample complexity perspective allows algorithms to perform operations that may be computationally infeasible in practice, such as computing an exact minimizer of the empirical objective over the $N$ samples.
In other words, sample-efficient algorithms make more efficient use of small quantities of data, whereas computationally efficient algorithms make more efficient use of compute.

We introduce \Cref{alg: sample complexity clipping around initial} and analyze its optimality gap in \Cref{thm: sample algorithm} under the sample complexity setting.
Our algorithm achieves the best possible optimality gap without requiring \emph{any} prior knowledge of either the Lipschitz constant $L\opt$ or the initial distance to the optimum $D\opt$.

\Cref{alg: sample complexity clipping around initial} first uses $N$ samples from $\PP$ to estimate a lower bound $\fL$ on the Lipschitz constant $L\opt$.
It then draws $2N$ additional samples from $\PP$, conditioned on $\mt{\cdot}{s}$ being $\fL$-Lipschitz.
Following \citet{lawrence2025sample}, the algorithm finds a weight with a small optimality gap under this conditioned distribution.
Finally, it outputs this weight together with a clipping parameter to guarantee a small optimality gap under the original distribution $\PP$.

		\begin{algorithm2e}[htb]
			\caption{Optimization with model output clipping}
			\label{alg: sample complexity clipping around initial}
			\LinesNumbered
			\DontPrintSemicolon
			\Input{
				Initialization $x_0$, number of samples $N$, constants $\delta, \phi$, $\psi_\alpha$, and optimization algorithm \A~mapping a ball radius and $N$ samples to an approximate minimizer of $F$ in the ball.
			}

			\lIf*{ $N\le7$ or $\ln\prn*{1/\delta} > \frac{N}{2e}$ }{\Output{ $x_0,0$. }}
			Compute the maximum Lipschitz constant $\fL$ of $\crl*{\mt{\cdot}{s_i}}_{i=1}^{N}$.\\
			Compute $\lambda = 9\cdot66\phi\frac{\fL\sqrt{\ln\frac{\psi_\alpha}{\delta}}}{\sqrt{N}}$.\\
			$k \leftarrow 0$.\;
			\For{$j = 1, \dots, 5N $}{
				Draw $s \sim \PP$,
				\lIf*{$\mt{\cdot}{s}$ is $\fL$-Lipschitz}{
					$k \leftarrow k + 1$, $s_{N+k} \leftarrow s$.\;
				}
			}
			\lIf*{$k < 2N$}{\Output{\textsc{failure}.}}
			Choose $\hat{x}_{\lambda} \in \argmin_{x\in\X}\frac{1}{N}\sum_{i=N+1}^{2N} \ft{x}{s_i} + \lambda\norma{x-x_0}$.\\
			Set $\xp \leftarrow \A \prn*{\fL, 3\norma{\hat{x}_{\lambda} - x_0} ; s_{2N+1}, \dots, s_{3N} }$.

			\Output{ $\xp, \fL \norma{\xp - x_0}$. }
		\end{algorithm2e}

For the algorithm to work properly, we make two assumptions.
First, given a radius $\hat{R}$ and the Lipschitz constant $\fL$, the optimization algorithm \A~must be able to achieve a small optimality gap.
\begin{assumption}
	\label{asm: alg subopt}
	There exist constants $\phia\ge1$ and $\psi_\alpha\ge1$ such that for all $\fL>0$, $f\in\IDist[0][\infty]$, $\hat{R}\ge0$, $\delta\in\prn*{0,\frac{1}{5}}$, and any distribution $\tilde{\PP}$ over $\S$ such that $f\prn*{\cdot;s}$ is $\fL$-Lipschitz a.s., $\A$ satisfies
	\begin{align*}
		&\mathop{\P}_{s_1,\dots,s_N\iid\tilde{\PP}}\brk*{ \fFo{\A\prn*{\fL, \hat{R};s_{1},\dots,s_{N}}} - \min_{x\in\X : \norma{x - x_0}\le\hat{R}} \fFo{x} \le \phia\frac{\fL\hat{R}\sqrt{\ln\frac{\psi_\alpha}{\delta}}}{\sqrt{N}} } \ge 1 - \delta, \,\, \text{and}\\
		&\mathop{\P}_{s_1,\dots,s_N\iid\tilde{\PP}}\brk*{\A\prn*{\fL, \hat{R};s_{1},\dots,s_{N}}\in\crl*{x\in\X ~|~ \norma{x-x_0}\le\hat{R}}} = 1,
	\end{align*}
	where $\fFo{x} \defeq \int_{s\sim\tilde{\PP}} f\prn*{x;s}$.
\end{assumption}
Second, similarly to \citet{lawrence2025sample}, we assume that the empirical mean of the stochastic gradients $\nabla\ft{x}{s}$ is a good approximation to the true gradient $\nabla \Fo{x}$.
\begin{assumption}
	\label{asm: grad approx}
	There exist $\phig\ge1$ and $\psi_\alpha\ge1$ such that for any $f\in\Idiff$, any $\delta\in\prn*{0,\frac{1}{5}}$, any $x\in\X$, any $\fL\ge0$, and any distribution $\tilde{\PP}$ over $\S$ such that $f\prn*{\cdot;s}$ is $\fL$-Lipschitz a.s., we have
	\begin{align*}
		\mathop{\P}_{s_1,\dots,s_N\iid\tilde{\PP}}\brk*{ \normas{\nabla \E_{s\sim\tilde{\PP}} \ft{x}{s} - \frac{1}{N} \sum_{i=1}^{N} \nabla \ft{x}{s_i}} >  594 \cdot \frac{\phig}{2} \cdot \frac{\fL\sqrt{\ln\frac{\psi_\alpha}{\delta}}}{\sqrt{N}} } \le \delta.
	\end{align*}
\end{assumption}

\Cref{asm: alg subopt} is known to hold for many optimization algorithms with $\phi=\O{1}$.
Specifically, if $\norma{\cdot}=\norm{\cdot}_1$, then \Cref{asm: alg subopt} holds for $\psi_\alpha=d$ with entropic mirror descent (i.e., mirror descent with KL divergence) \citep{beck2003mirror,nemirovski1983problem} with adaptive step sizes~\citep{orabona2021modern}.
Additionally, for the Euclidean norm $\norma{\cdot}=\norm{\cdot}_2$, the assumption holds $\psi_\alpha=1$ with \textsc{Adaptive SGD} \citeg{gupta2017unified}.

Similarly, \Cref{asm: grad approx} holds with $\phi=\O{1}$ for $\norma{\cdot}=\norm{\cdot}_1$ and $\norma{\cdot}=\norm{\cdot}_2$, with $\psi_\alpha=d$, and $\psi_\alpha=1$, respectively.
We provide the proof that \Cref{asm: grad approx} holds in these cases in \Cref{sec: stochastic gradients approx gradient}.

When both assumptions hold, we get the following optimality gap guarantee.
For the Euclidean norm $\norma{\cdot}=\norm{\cdot}_2$, and when the optimization algorithm \A~is \textsc{Adaptive SGD}, this matches the best possible sample complexity guarantee \citep[Proposition 1b]{carmon2024price}.

\begin{restatable}{theorem}{thmsamplealg}
	\label{thm: sample algorithm}
	Let $\delta\in\prn*{0,\frac{1}{5}}$, $m\in\ILip[0][\infty][\normo{\cdot}]$, $h\in\ILip[0][1][\normbs{\cdot}]$ and $f\in\IDistf[0][\infty]$.
	If \Cref{asm: alg subopt,asm: grad approx} hold with $\phi$ and $\psi_\alpha$, and $\xp, \fL \norma{\xp - x_0}$ are the outputs of \Cref{alg: sample complexity clipping around initial}, then with probability at least $1-5\delta$ we have
	\begin{align*}
		\Ft{\xp}{\fL\norma{\xp-x_0}}
		&\le \Fo{\xopt} + \O{ \phi\frac{L\opt\norma{\xopt - x_0}\sqrt{\ln\frac{\psi_\alpha}{\delta}}}{\sqrt{N}}}.
	\end{align*}
\end{restatable}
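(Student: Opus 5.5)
We split the argument according to the early exit, then handle the main path in three stages: control of the Lipschitz estimate, analysis under the conditioned distribution, and transfer back to $\PP$ via clipping.

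\emph{Early exit.} If $N\le7$ or $\ln(1/\delta)>N/(2e)$, the output is $(x_0,0)$. Then $\Ft{x_0}{0}=\Fo{x_0}$, and the bound follows from \Cref{eq: initial weight gap}, since $\sqrt{\ln(1/\delta)/N}$ is bounded below by a constant.

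\emph{Step 1: the Lipschitz estimate $\fL$.} Let $p\defeq\P_{s\sim\PP}(\mt{\cdot}{s}\text{ is not }\fL\text{-Lipschitz})$, which is random through $\fL$. Since $\fL$ is the maximum of $N$ i.i.d.\ Lipschitz constants, a standard order-statistics argument gives $p\le \ln(1/\delta)/N$ with probability $1-\delta$. Given this, each of the $5N$ draws in the loop is accepted with probability at least $1-\ln(1/\delta)/N\ge 1-1/(2e)$, using $\ln(1/\delta)\le N/(2e)$. A Chernoff bound then shows $k\ge 2N$ with probability $1-\delta$, so \textsc{failure} is ruled out. Conditioned on acceptance, the samples $s_{N+1},\dots,s_{3N}$ are i.i.d.\ from $\tilde\PP\defeq\PP(\cdot\mid \mt{\cdot}{s}\text{ is }\fL\text{-Lipschitz})$. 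Under $\tilde\PP$, $f(\cdot;s)$ is $\fL$-Lipschitz because $h$ is $1$-Lipschitz. We also have $\fL\le L\opt$ trivially.

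\emph{Step 2: optimization under $\tilde\PP$.} Let $\fF$ be the objective under $\tilde\PP$ and $\tilde x\opt$ a minimizer of it. Following \citet{lawrence2025sample}, we use \Cref{asm: grad approx} at $\tilde x\opt$ (and at $\hat x_\lambda$ via convexity of each $f(\cdot;s)$), together with the first-order optimality of $\hat x_\lambda$ for the regularized empirical problem. Because $\lambda$ is chosen larger than twice the gradient-approximation error, this should give $\norma{\hat x_\lambda-x_0}\le 3\norma{\tilde x\opt-x_0}$-type localization. More carefully, we get $\norma{\hat x_\lambda - x_0}$ comparable to the distance to a suitable comparator, with $\tilde x\opt$ lying in the ball of radius $3\norma{\hat x_\lambda-x_0}$ (or at least some near-minimizer does). \Cref{asm: alg subopt} then yields
\[
\fF(\xp)\le \fF(\tilde x\opt)+O\bigl(\phi\fL\norma{\hat x_\lambda-x_0}\sqrt{\ln(\psi_\alpha/\delta)}/\sqrt N\bigr).
\]

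\emph{Step 3: transfer to $\PP$ via clipping.} On the accepted event, $\normb{\mt{\xp}{s}-\mt{x_0}{s}}\le \fL\norma{\xp-x_0}$, so clipping is inactive and $\fl{\xp}{c}{s}=\ft{\xp}{s}$. On the rejected event (probability $p$), the clipped model stays within $c=\fL\norma{\xp-x_0}$ of $\mt{x_0}{s}$. Hence
\[
\Ft{\xp}{c}\le (1-p)\fF(\xp)+p\,\E[\ft{x_0}{s}\mid\text{rej}]+p\,c .
\]
Similarly, $\Fo{\xopt}\ge(1-p)\fF(\tilde x\opt)+p\,\E[\ft{\xopt}{s}\mid\text{rej}]$. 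The difference between the two rejected-event terms can be bounded by $p\,L\opt D\opt$ only loosely, since $\mt{\cdot}{s}$ is globally $L\opt$-Lipschitz. Using $p\le\ln(1/\delta)/N\le\sqrt{\ln(1/\delta)/N}$ (valid because $\ln(1/\delta)\le N$), all these terms are of order $L\opt(D\opt+\norma{\xp-x_0})\sqrt{\ln(1/\delta)/N}$.

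\emph{Main obstacle.} The crux is Step 2, specifically bounding $\norma{\hat x_\lambda-x_0}$ in terms of $D\opt=\norma{\xopt-x_0}$ rather than $\norma{\tilde x\opt-x_0}$, which can be far larger since $\tilde\PP\ne\PP$. We would handle this by using $\xopt$ itself as the comparator. Its gradient under $\tilde\PP$ is $O(L\opt p)$-close to that under $\PP$, which is $0$, and $L\opt p$ is dominated by $\lambda$, up to relating $\fL$ to $L\opt$. One then argues that $\hat x_\lambda$ cannot move farther than $O(D\opt)$, and that the ball of radius $3\norma{\hat x_\lambda-x_0}$ contains a point competitive with $\xopt$. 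The catch is that $\lambda$ scales with $\fL$, not $L\opt$: if $\fL\ll L\opt$, the mismatch term $pL\opt$ must instead be absorbed using the tail bound $p\le\ln(1/\delta)/N$. Making this comparison precise is where the constant $9\cdot66$ in $\lambda$ would be tuned.
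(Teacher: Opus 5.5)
There is a genuine gap at exactly the point you call the main obstacle. The localization you are aiming for is false, so no tuning of the constant in $\lambda$ can rescue the argument. Your chain needs $\hat L\hat R\lesssim L_\star D_\star$: Step 2 leaves an error $O(\phi\hat L\hat R\sqrt{\ln(\psi_\alpha/\delta)/N})$, and Step 3 leaves the clipping penalty $p\hat L\|x_{\text{perfect}}-x_0\|_\alpha$.

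Here is a counterexample. Take $\mathcal{X}=\mathbb{R}$ and $x_0=\epsilon>0$. With probability $1-q$ let $f(x;s)=\ell|x-a|$ (say $m(x;s)=\ell x$, $h(y;s)=|y-\ell a|$). With probability $q$ let $f(x;s)=L|x|$ ($m(x;s)=Lx$, $h(y;s)=|y|$). Choose $q\ll\delta/N$ and $qL>\ell$, and smooth $|\cdot|$ if you want differentiability. Then $x_\star=0$, $D_\star=\epsilon$, $L_\star=L$. With high probability $\hat L=\ell$ and every conditioned sample equals $\ell|x-a|$. Since $\lambda<\ell$ once $N$ is large, $\hat x_\lambda=a$ and $\hat R=3(a-\epsilon)$. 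So $\hat L\hat R\approx3\ell a$, while $L_\star D_\star=L\epsilon$ can be made arbitrarily small.

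The slope of $\hat F$ at $x_\star$ has magnitude $\ell>\lambda$. So the $O(pL_\star)$ gradient mismatch you mention really does exceed $\lambda$, which scales with $\hat L$, not $L_\star$. The tail bound on $p$ does not help. Even granting $\hat F(x_{\text{perfect}})=\hat F(\tilde x_\star)$ exactly, your Step 3 leaves $q\ell(a-\epsilon)$ for $x_{\text{perfect}}\approx a$, and this does not vanish as $\epsilon\to0$.

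The theorem does hold in this example. With $x_{\text{perfect}}=a$ one gets $F_c(x_{\text{perfect}})\le q(L\epsilon+\ell a)\le F(x_\star)+qL\epsilon$, because $F(x_\star)=(1-q)\ell a$. But it holds only because $\hat F(x_{\text{perfect}})=0$ lies far below $\hat F(x_\star)=\ell a$. That is precisely the slack you discard by lower bounding $F(x_\star)$ through $\hat F(\tilde x_\star)$. Separately, your Step 2 claim that $\tilde x_\star$ or a near-minimizer lies in the ball of radius $\hat R$ is not what the localization lemma gives. That lemma only controls regularized minimizers, and $\tilde x_\star$ need not exist.

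The missing idea is to never localize $\hat R$, and to compare against $\hat F(x_\star)$ rather than $\hat F(\tilde x_\star)$. Use the exact identity $F(x_\star)=(1-p)\hat F(x_\star)+p\,\mathbb{E}[f(x_\star;s)\mid L(s)>\hat L]$. This gives $F_c(x_{\text{perfect}})-F(x_\star)\le(1-p)\bigl(\hat F(x_{\text{perfect}})-\hat F(x_\star)\bigr)+p\bigl(\hat L\|x_{\text{perfect}}-x_0\|_\alpha+L_\star D_\star\bigr)$.

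To bound the first term, take the regularized population minimizers $\hat x^\star_\theta\in\arg\min_x\{\hat F(x)+\theta\|x-x_0\|_\alpha\}$. Use $x_\star$ as the comparator in their objective: $\hat F(\hat x^\star_\theta)\le\hat F(x_\star)+\theta(D_\star-\|\hat x^\star_\theta-x_0\|_\alpha)$. Lemma 2 of \citet{lawrence2025sample}, which is where Assumption 2 enters, gives $\|\hat x^\star_{3\lambda}-x_0\|_\alpha\le\hat R\le33\|\hat x^\star_{\lambda/3}-x_0\|_\alpha$. Hence some $\theta\in[\lambda/3,3\lambda]$ has $\|\hat x^\star_\theta-x_0\|_\alpha\in[\hat R/33,\hat R]$, which is a feasible point for the ball used by $\mathbf{A}$. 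Now split into two cases.

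\begin{enumerate}[label=(\roman*)]
\item If $\hat R\le66D_\star$, this yields $\min_{\text{ball}}\hat F\le\hat F(x_\star)+3\lambda D_\star$, and every term is $O(\phi L_\star D_\star\sqrt{\ln(\psi_\alpha/\delta)/N})$.
\item If $\hat R\ge66D_\star$, it yields $\min_{\text{ball}}\hat F\le\hat F(x_\star)-\lambda\hat R/198$. Since $\lambda/198=3\phi\hat L\sqrt{\ln(\psi_\alpha/\delta)/N}$, Assumption 1 gives $\hat F(x_{\text{perfect}})\le\hat F(x_\star)-2\phi\hat L\hat R\sqrt{\ln(\psi_\alpha/\delta)/N}$. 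With $1-p\ge1-1/(2e)$, this negative term cancels the clipping penalty $p\hat L\|x_{\text{perfect}}-x_0\|_\alpha\le\hat L\hat R\sqrt{\ln(1/\delta)/N}$.
\end{enumerate}

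That cancellation, not a localization, is what the factor $9\cdot66$ in $\lambda$ and the clipping threshold $\hat L\|x_{\text{perfect}}-x_0\|_\alpha$ are tuned for. Your early-exit case, Step 1, and the clipping decomposition in Step 3 are otherwise in line with the paper.
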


\paragraph{Proof outline for \Cref{thm: sample algorithm} (full proof in \Cref{sec: sample algorithm proof}).}
We define $L\prn*{s}$ as the Lipschitz constant of $\mt{\cdot}{s}$, and $L\prn*{\PP}$ as the distribution over the Lipschitz constants resulting from the distribution $\PP$.
We prove that with high probability $\fL = \max_{i\in\brk*{N}} L\prn*{s_i}$ is greater than the quantile $\Q{L\prn*{\PP}}{1-\ln\prn{1/\delta}/N}$.
This means that for most samples $s\sim\PP$, $\mt{\cdot}{s}$ is $\fL$-Lipschitz.
As a consequence, due to the model clipping, the samples $s\in\S$ for which $\mt{\cdot}{s}$ is not $\fL$-Lipschitz increase the optimality gap by at most $\prn*{\fL \norma{\xp - x_0} + L\opt \norma{\xopt - x_0}}\frac{\ln(1/\delta)}{N}$.

We further show that among $5N$ samples drawn from $\PP$, at least $2N$ have Lipschitz constant below $\Q{L\prn*{\PP}}{1-\ln\prn{1/\delta}/N}$ with high probability, implying that \Cref{alg: sample complexity clipping around initial} does not output failure.

Define $\fF\prn{x} \defeq{} \E_{\substack{s\sim\PP}} \brk*{ \ft{x}{s} ~|~ L\prn*{s} \le \fL}$.
We denote $\hat{R} \defeq 3 \norma{\hat{x}_{\lambda} - x_0}$.
Additionally, for any $\theta\ge0$, we define $\hat{x}_{\theta}^{\star} \in \argmin_{x \in \X} \crl*{ \fFo{x} + \theta \norma{x-x_0} }$.
Using Lemma 2 of \citet{lawrence2025sample}, we show that there exists $\lambda_{\hat{R}} \in \brk*{ \lambda/3, 3\lambda }$ such that $\norma{\hat{x}_{\lambda_{\hat{R}}}^{\star} - x_0} \in \brk*{ \frac{\hat{R}}{33}, \hat{R} }$.
Now, similarly to \citet{lawrence2025sample}, we obtain
\begin{align}
	\label{eq: R simplified main}
	\min_{x\in\X : \norma{x - x_0}\le\hat{R}} \fFo{x} \le \fFo{\hat{x}_{\lambda_{\hat{R}}}^{\star}} \le  \fFo{\xopt} + \lambda_{\hat{R}} \prn*{ \norma{\xopt - x_0} - \norma{\hat{x}_{\lambda_{\hat{R}}}^{\star} - x_0} }.
\end{align}

In the case that $\fL \hat{R} \le \O{L\opt \norma{\xopt - x_0}}$, i.e. $\fL \hat{R}$ is small enough relative to $\O{L\opt \norma{\xopt - x_0}}$, we continue in a similar manner to \citet{lawrence2025sample} and use \Cref{asm: alg subopt} to prove the final optimality gap.
However, $\fL \hat{R}$ can be significantly larger than $\O{L\opt \norma{\xopt - x_0}}$.
In this case, we exploit the negative term $-\lambda_{\hat{R}} \norma{\hat{x}_{\lambda_{\hat{R}}}^{\star} - x_0}$ on the right-hand side of \Cref{eq: R simplified main} to cancel the positive terms in the bound, and thus, with high probability $\Ft{\xp}{\fL\norma{\xp-x_0}}\le \Fo{\xopt}$.
As a result, we obtain the required optimality gap for all cases.

\section{Lower bounds for the model--loss setup}
\label{sec: lower bounds}

Our algorithms succeed in achieving a near-optimal optimality gap of $\Otil{L\opt D\opt\sqrt{\ln\fracs{1}{\delta}}/\sqrt{N}}$,
surpassing the lower bounds in the case of uncertainty in both the Lipschitz constant and the distance from the initial point to the optimum.
We achieve this because we assume that the optimized function can be decomposed into a model and a loss, and introduce a method to clip the model output.
Therefore, at first glance, it is not clear that any of the known lower bounds apply.

However, in some cases, we can decompose a function $f$ into a model $m$ and a loss function $h$ such that all the complexity of $f$ is inherited by $h$.
In particular, consider lower-bound constructions in which each function $f_i$ is $L\opt$-Lipschitz for a known constant $L\opt$.
For each such $f_i$, we decompose $f_i$ into $m_i\prn*{x;s}\defeq x L\opt$ and $h_i\prn*{x;s}\defeq f_i\prn*{x / L\opt;s}$.
Under this decomposition, each $m_i$ is $L\opt$-Lipschitz, and each $h_i$ is $1$-Lipschitz.
Moreover, having the ability to modify the output of $m_i\prn*{x;s}$ is equivalent to modifying the input $x$.
Consequently, any lower-bound argument that applies to the original functions $f_i$ continues to apply in the model--loss setting.
Therefore, the lower bound $\Om{L\opt D\opt\sqrt{\ln\fracs{1}{\delta}}/\sqrt{N}}$ established in \citet[Proposition 1b]{carmon2024price} for the case where $L\opt$ and $D\opt$ are known also holds in our setting.
Hence, the optimality gaps established in this work are still near-optimal even in the model--loss setting.

In contrast, not all lower bounds carry over to the model--loss setting.
For instance, the lower bounds established in \citet[Theorem 3]{carmon2024price} no longer apply, as we obtain strictly better optimality gaps in this work.
In \citet[Theorem 3]{carmon2024price}, the Lipschitz constant $L\opt$ is unknown, and only coarse lower and upper bounds on $L\opt$ are assumed.
Their construction involves two types of functions: one with a uniformly small Lipschitz constant, and another that has a small Lipschitz constant for most samples but a large Lipschitz constant on a rare subset of samples.
Consequently, the reduction argument from the previous paragraph does not apply to this setting.
In our framework, however, the ability to clip the model output limits the loss contribution of rare high-Lipschitz samples.
This mechanism prevents the adversarial behavior exploited in \citet[Theorem 3]{carmon2024price}, enabling us to achieve near-optimal optimality gaps despite the unknown Lipschitz constant.

In \Cref{sec: Formal lower bounds}, we formally state the lower bound $\Om{L\opt D\opt\sqrt{\ln\fracs{1}{\delta}}/\sqrt{N}}$ that carries over to the model--loss setting.

\arxiv{
\newcommand{\acks}[1]{
	\section*{Acknowledgments}
	#1
}
}

\acks{
	This work was supported by the NSF-BSF program, under NSF grant \#2239527 and BSF grant \#2022663.
	YC acknowledges support from the Israeli Science Foundation (ISF) grant no. 2486/21.
}

\newpage
\arxiv{\bibliographystyle{abbrvnat}}

\appendix

\notarxiv{
\crefalias{section}{appendix} %
}

\newpage
\tableofcontents
\newpage

\section{Reliable model selection method}
\label{sec: reliable model selection}
		\begin{algorithm2e}[H]
			\caption{Reliable model selection method}
			\label{alg: reliable model selection}
			\LinesNumbered
			\DontPrintSemicolon
			\Input{
				A function $f$,
				a scalar $\gamma\in\brkprn{1,\infty}$,
				candidate solutions $x_0,x_1,\dots,x_K$,
				samples $s_1, s_2,\dots,s_N$,
				and scalars $\tau_{1}, \dots, \tau_{K}$.
			}
			$\tau_{0} \defeq 0$\\
			$\theta = \min_{k\in\brk*{K}\cup\crl*{0}} \prn*{ \gamma \tau_{k} + \frac{1}{N}\sum_{i=1}^{N} \ft{x_k}{s_i} }$.\\
			$\F = \crl*{k\in\brk*{K}\cup\crl*{0} ~|~ \tau_{k} + \frac{1}{N}\sum_{i=1}^{N} \ft{x_k}{s_i} \le \theta}$.\\
			$k_\text{reliable} = \argmin_{k\in\mathcal{F}} \frac{1}{N}\sum_{i=1}^{N} \ft{x_k}{s_i}$.\\
			\Return $x_{k_\text{reliable}}$.
		\end{algorithm2e}

Throughout this paper, for simplicity, we use $\gamma=1$.
However, the result can easily be expanded to any fixed $\gamma\in\brkprn{1,\infty}$.

\subsection{Generalization of \Creflem{lem: reliable selection results}}

\Cref{lem: reliable selection results} provides a high-probability guarantee on how much worse the result of \Cref{alg: reliable model selection} can be relative to the other candidates.
We generalize the guarantee to all candidates in $\F$.
\begin{lemma}
	\label{lem: reliable selection results gen}
	Let $\deltah\in\prn*{0,1}$.
	Let $K$ be the number of models, and $N$ be the validation sample set size.
	Let $\tau_{1}, \dots, \tau_{K}$ be non-negative scalars such that with probability $1-\deltah$ for all $k\in\brk{K}$ we have
	$\abs*{ \Fo{x_k} - \Fo{x_0} - \frac{1}{N} \sum_{i=1}^{N} \prn*{ \ft{x_k}{s_i} -  \ft{x_0}{s_i} } } \le \tau_{k}$.
	Let $\gamma\in\brkprn{1,\infty}$, and let $\F$ be the set $\F$ calculated by \Cref{alg: reliable model selection} with $\gamma$.
	Then, with probability $1-\deltah$, for any $k_\F\in\F$ we have
	\begin{align*}
		\Fo{x_{k_\F}} \le \min_{k\in\brk*{K}\cup\crl*{0}} \prn*{ \Fo{x_k} + \prn*{1 + \gamma}\tau_{k} }.
	\end{align*}
\end{lemma}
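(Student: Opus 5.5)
We work on the event $E$ of probability at least $1-\deltah$ on which the confidence bounds hold for every $k\in\brk*{K}$; for $k=0$ the bound holds trivially with $\tau_0=0$. Write $\hat{F}_k \defeq \frac{1}{N}\sum_{i=1}^{N}\ft{x_k}{s_i}$ and $\Delta_k \defeq \Fo{x_k}-\Fo{x_0}$. On $E$ we then have $\abs*{\Delta_k - (\hat{F}_k - \hat{F}_0)} \le \tau_k$ for all $k\in\brk*{K}\cup\crl*{0}$. The whole argument is a chain of deterministic inequalities on $E$, so no further probabilistic reasoning is needed. The point is that the proof of \Cref{lem: reliable selection results} never uses the final $\argmin$ step, only membership in $\F$.

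Fix any $k_\F\in\F$ and any comparator $k\in\brk*{K}\cup\crl*{0}$. First, the upper confidence bound for $k_\F$ gives $\Fo{x_{k_\F}} \le \Fo{x_0} + \hat{F}_{k_\F} - \hat{F}_0 + \tau_{k_\F}$. Second, membership in $\F$ gives $\tau_{k_\F} + \hat{F}_{k_\F} \le \theta$. Third, the definition of $\theta$ as a minimum gives $\theta \le \gamma\tau_k + \hat{F}_k$. Finally, the lower confidence bound for $k$ gives $\hat{F}_k - \hat{F}_0 \le \Fo{x_k} - \Fo{x_0} + \tau_k$.

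Chaining these in order yields
\begin{align*}
\Fo{x_{k_\F}} \le \Fo{x_0} - \hat{F}_0 + \theta \le \Fo{x_0} - \hat{F}_0 + \gamma\tau_k + \hat{F}_k \le \Fo{x_k} + (1+\gamma)\tau_k .
\end{align*}
Since $k$ was arbitrary, we take the minimum over $k$. Since $k_\F$ was arbitrary, the bound holds simultaneously for every element of $\F$ on $E$, which is the claim.

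There is essentially no obstacle. The only points needing care are that $\F$ is nonempty (the minimizer defining $\theta$ belongs to $\F$ because $\gamma\ge1$ and $\tau\ge0$), and that the event $E$ does not depend on $k_\F$, so the guarantee is uniform over $\F$.
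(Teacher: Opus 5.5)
Your proof is correct and is essentially the paper's argument: both chain membership in $\F$, the definition of $\theta$ as a minimum, and the two-sided confidence bounds for $k$ and $k_\F$ on the good event. Your explicit treatment of the comparator $k=0$ via $\tau_0=0$ covers a case that the paper's write-up (which states its chain only for $k\in\brk*{K}$) leaves implicit.
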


The proof of \Cref{lem: reliable selection results gen} is the same as the proof of \Cref{lem: reliable selection results} presented in \cite{lawrence2025sample}.
For completeness, we reproduce that proof here.
\begin{proof}
	Let $k_\F$ be any element of $\F$.
	Define
	\begin{align*}
		\barF{k} = \frac{1}{N}\sum_{i=1}^{N} \ft{x_k}{s_i}.
	\end{align*}
	For all $k \in \brk*{K}$ we have
	\begin{align*}
		0
		&\overle{(i)} \theta -  \barF{k_\F}  - \tau_{k_\F}
		\overle{(ii)} \barF{k} + \gamma \tau_k - \barF{k_\F}  - \tau_{k_\F} \\
		&=  \gamma \tau_k + \barF{k}  - \barF{0} + \barF{0} - \barF{k_\F}  - \tau_{k_\F}
		\overle{(iii)}  (1 + \gamma) \tau_k + \Fo{x_k} - \Fo{x_{k_\F}},
	\end{align*}
	where $(i)$ follows from $k_\F \in \mathcal{F}$, $(ii)$ follows from the definition of $\theta$, and $(iii)$ follows from the fact that $\abs*{ \Fo{x_j} - \Fo{x_0} - \prn*{ \barF{j} - \barF{0} } } \le \tau_{j}$ for all $j\in\brk*{K}$.
\end{proof}

\section{Discussion about the evaluation complexity}
\label{sec: discussion evaluation complexity}

Below, we quantify the evaluation complexity of \Cref{alg: clipping around initial}.
\begin{itemize}
	\item In the case that $m\in\ILip[\lell][\infty][\normo{\cdot}]$ we only need to include a single value in $\C_g$.
	In this case, the total evaluation complexity in \Cref{alg: clipping around initial} is $\Tsel = \OT{\abs{\Grid} N}$.

	\item If the grid search is also done on the Lipschitz constant, and one of the generated $x_g$ is associated with $L_g$ that exceeds $L\opt$ by at most a constant factor, then we can modify $\C_g$ to contain only a single clipping parameter without changing the proof of \Cref{thm: clipped model subopt}. We can set $\C_g=\crl*{L_g \norma{x_g - x_0}}$ in the context that $m\in\ILip[\lell][\infty][\normo{\cdot}]$, and $\C_g=\crl*{\sqrt{\frac{N}{\ln \prn*{ 4/\delta }}} L_g \norma{x_g - x_0}}$ in the context that $m\in\IsLip[\lell][\infty][\normo{\cdot}]$.
	With this modification, the evaluation complexity of \Cref{alg: clipping around initial} is $\Tsel = \OT{\abs{\Grid} N}$.

	\item Lastly, not all operations take the same time. Computing gradients and evaluating the output of the model $m$ are considered time-consuming operations. In contrast, obtaining the $\abs{\C_g}$ values needed to form $\C_g$ and evaluating the loss function $h$ take relatively insignificant time.
	Therefore, if we consider only the operations of evaluating the output of the model $m$, the total evaluation complexity in \Cref{alg: clipping around initial} is $\Tsel = \OT{\abs{\Grid} N}$.
	If we do not ignore the remaining operations, then obtaining the values for $\C_g$ takes $\OT{\abs{\Grid} N \log\abs{\C_g}}$.
	Similarly, evaluating all $\tau_{g,i}$ takes $\OT{\abs{\Grid} \prn*{N + \abs{\C_g}^2}}$ since we do not need to recompute everything from scratch for each clipping parameter in $\C_g$.
	Moreover, \Cref{thm: clipped model subopt} continues to hold even if, for each $g\in\Grid$, $\C_g$ contains only the $n$-th largest value of the \emph{multiset} $\crl*{\al[\normb{\mt{x_g}{s_i} - \mt{x_0}{s_i}}][\lell \norma{x_g - x_0}]}_{i=1}^{N}$;
	in this case, there is only one confidence interval $\tau_g$ per grid point, reducing the total time required to compute the confidence intervals to $\OT{\abs{\Grid} N}$.
\end{itemize}
For the reasons above, we take $\Tsel = \OT{\abs{\Grid} N}$ as the evaluation complexity of \Cref{alg: clipping around initial}.

\section{Additional upper bounds}
\label{sec: additional upper bound}

\rescor*

\begin{proof}
	If $N < 4 \ln \prn*{ 4/\delta}$, we return $x_0,0$ instead of performing a grid search.
	In this case, applying \Cref{eq: initial weight gap} gives
	\begin{align*}
		\Ft{x_0}{0} - \Fo{\xopt}
		= \Fo{x_0} - \Fo{\xopt}
		\le
		\O{ \Lopt \Dopt \sqrt{\frac{\lnp\prn*{ 1 / \delta} }{N} } },
	\end{align*}
	which is the desired guarantee.

	Now suppose that $N \ge 4 \ln \prn*{ 4/\delta}$.
	Let the grid be $\Grid=\grid{\lell}{\uell} \times \grid{\lrho}{\urho}$.
	For every $\prn*{L,D} \in \Grid$, let $x_{L,D}$ be the weight obtained by running clipped SGD for $N$ steps with step size
	\begin{align*}
		\eta=\frac{D}{L\sqrt{N}},
	\end{align*}
	gradient clipping threshold
	\begin{align*}
		G_{\mathrm{clip}}
		= \frac{L\sqrt{N}}{\sqrt{\log\prn*{ 1/\delta }}},
	\end{align*}
	Euclidean projection onto
	\begin{align*}
		\X \cap \crl*{x ~|~ \norm{x - x_0}_2\le D},
	\end{align*}
	and the same $N$ samples across all grid points.

	By the definition of $\Grid$, there exists a grid point $\prn*{L,D} \in \Grid$ such that
	\begin{align*}
		\Lopt \le L \le 2\Lopt
		\qquad\text{and}\qquad
		\Dopt \le D \le 2\Dopt .
	\end{align*}
	For this grid point, \citet[Proposition 1a]{carmon2024price} implies\footnote{While \citet{carmon2024price} assume throughout that $\ft{\cdot}{s}$ is convex for all $s\in\S$, the proof of Proposition 1a only uses that $\Fo{\cdot}$ is convex.} that, with probability at least $1-\delta$,
	\begin{align*}
		\Fo{x_{L,D}} - \Fo{\xopt}
		\le
		\O{ \Lopt \Dopt \sqrt{\frac{\lnp\prn*{ 1 / \delta} }{N} } }.
	\end{align*}

	Let $x,c$ be the result obtained by applying \Cref{alg: clipping around initial} to the candidates $\crl*{x_{L,D}}_{\prn*{L,D}\in\Grid}$ using $N$ fresh validation samples.
	Applying \Cref{thm: clipped model subopt} and taking a union bound over the clipped-SGD guarantee and the validation guarantee yields the desired bound on $\Ft{x}{c} - \Fo{\xopt}$.
\end{proof}

We can also use \Cref{alg: clipping around initial} with \citet{carmon2022making} to obtain a slightly different optimality gap in the Lipschitz case.
Compared to the previous bound, this gap depends less on the upper and lower bounds on the Lipschitz constant and distance to the optima $\lell,\uell,\lrho,\urho$, but depends more on $\delta$.
\begin{corollary}
	Let $\norma{\cdot}=\norm{\cdot}_2$ be the Euclidean norm.
	Let $m\in \ILip[\lell][\uell][\normo[2]{\cdot}]$, $h\in\ILip[0][1][\normbs{\cdot}]$ and $f\in \IDist[\lrho][\urho][\norma[2]{\cdot}]$.
	Let $\delta\in\prn*{0,1/2}$.
	Let the grid be $\Grid=\grid[\sqrt{N}]{\lell}{\uell}$.
	For every $L \in \Grid$, let $x_L$ be the resulting weight of running \citet{carmon2022making} with $\eta_{\varepsilon}=\lrho/L$, and the same $N$ samples across all runs.
	If $N\ge2$, let $x,c$ be the result obtained by applying \Cref{alg: clipping around initial}, with $n=1$ and using additional $N$ validation samples, on the candidates $\crl*{x_g}_{g\in\Grid}$; otherwise, let $x,c$ be $x_0,0$. %
	With probability at least $1-2\delta$, we have
	\begin{align*}
		\Ft{x}{c} - \Fo{\xopt}
		&\le \O{ \Lopt \Dopt \frac{\lnp\prn*{\tilde{b}/\delta}\sqrt{\lnp\prn*{ \tilde{a} / \delta}} + \lnp^2\prn*{\tilde{b}/\delta} }{\sqrt{N}} }\\
		&= \O{ \Lopt \Dopt \frac{ \prn*{\lnp\prn*{\tilde{b}/\delta}\sqrt{\lnp\prn*{ \tilde{a} / \delta}} + \lnp^2\prn*{\tilde{b}/\delta}} \sqrt{ \log_{\sqrt{N}}\prn*{e + \uell/\lell} } }{\sqrt{T}} },
	\end{align*}
	where $\tilde{a} = \log_{\sqrt{N}}\prn*{e + \uell/\lell} \lnp\prn*{\Lopt/\lell}$ and $\tilde{b} = \lnp\prn*{N \Dopt/\lrho}$.
\end{corollary}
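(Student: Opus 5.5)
The proof has two parts: a guarantee for the candidate generated by \citet{carmon2022making}, and the selection guarantee of \Cref{thm: clipped model subopt lip}, combined by a union bound. If $N<2$, we output $x_0,0$, and \Cref{eq: initial weight gap} with $\alpha=2$ gives $\Fo{x_0}-\Fo{\xopt}\le O(\Lopt\Dopt/\sqrt{N})$, which is dominated by the claimed bound. So we assume $N\ge2$.

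\emph{Candidate generation.} The grid $\Grid=\grid[\sqrt{N}]{\lell}{\uell}$ contains a point $L$ with $\Lopt\le L\le\sqrt{N}\Lopt$.

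1. For this $L$, note that $f=h\circ m$ with $h$ $1$-Lipschitz and $m$ $L\opt$-Lipschitz. Hence every $\ft{\cdot}{s}$ is $L\opt$-Lipschitz, so the stochastic gradients are bounded by $L$.
2. The DoG-type/bisection method of \citet{carmon2022making} is run with $\eta_\varepsilon=\lrho/L$. Its high-probability guarantee should give, with probability $1-\delta$,
\[
\Fo{x_L}-\Fo{\xopt}\le O\Big(\big(\lnp(\tilde b/\delta)\, L\opt D\opt+\lnp^2(\tilde b/\delta)\, L\,\lrho\big)/\sqrt N\Big)+\ldots
\]
3. The second term, which depends on the overestimate $L$, is paired with the small initial step $\lrho/L$. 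Because $L\le\sqrt N\Lopt$, this term should be bounded by $\Lopt\Dopt\lnp^2(\tilde b/\delta)/\sqrt N$.

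This is the step I expect to be the main obstacle. It requires tracking exactly how the bound of \citet{carmon2022making} depends on the gradient-bound estimate $L$ and on $\eta_\varepsilon$, and confirming that an overestimate by a factor $\sqrt N$ costs only logarithmic factors, through $\tilde b=\lnp(N\Dopt/\lrho)$. If the bound of \citet{carmon2022making} is written in terms of the observed gradient norms, it is automatically in terms of $L\opt$ and the issue disappears. Otherwise, I would absorb the factor $\sqrt N$ into the $\ln$ term that counts the bisection steps.

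\emph{Selection.} Since $\Grid$ is built from the first $N$ samples and validation uses fresh ones, the candidates are independent of the validation samples. With $n=1$, \Cref{thm: clipped model subopt lip} gives, with probability $1-\delta$,
\[
\Ft{x}{c}\le\min_g\big[\Fo{x_g}+O\big(\Lopt\norma{x_g-x_0}\sqrt{\lnp(\tilde a/\delta)/N}\big)\big],
\]
where $\tilde a=|\Grid|\lnp(\Lopt/\lell)$ and $|\Grid|=O(\log_{\sqrt N}(e+\uell/\lell))$, matching the stated $\tilde a$.

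1. Plug in $g=L$. The iterates of \citet{carmon2022making} satisfy $\norma{x_L-x_0}\le O(\Dopt\,\lnp(\tilde b/\delta))$ with high probability, which I take as part of their guarantee.
2. The selection penalty is then $O\big(\Lopt\Dopt\lnp(\tilde b/\delta)\sqrt{\lnp(\tilde a/\delta)}/\sqrt N\big)$.
3. Adding the suboptimality of $x_L$ and taking a union bound gives probability $1-2\delta$ and the first displayed bound.

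\emph{Conversion to $T$.} The total evaluation complexity is $T=\Theta(|\Grid|N)$. Hence $1/\sqrt N=O\big(\sqrt{\log_{\sqrt N}(e+\uell/\lell)}/\sqrt T\big)$, which gives the second form.
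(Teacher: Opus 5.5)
Your outline is the paper's: the trivial case $N<2$, the grid point $L$ with $\Lopt\le L\le\sqrt{N}\Lopt$, a high-probability guarantee for $x_L$ together with a bound on $\norm{x_L-x_0}_2$, the $n=1$ selection guarantee, and a union bound. (Your use of \Cref{thm: clipped model subopt lip} is the proposition that actually matches $n=1$ and the stated $\tilde{a}$, even though the paper's proof names \Cref{thm: clipped model subopt}.)

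The gap is the step you flag yourself, and the ways you propose to close it do not work. Under the form you hypothesize, the term $\lnp^2(\tilde{b}/\delta)\,L\lrho/\sqrt{N}$ with $L\le\sqrt{N}\Lopt$ is only bounded by $\lnp^2(\tilde{b}/\delta)\,\Lopt\lrho$. That bound does not decay with $N$, so your step 3 does not follow. Your two fallbacks fail as well:
\begin{itemize}
\item A multiplicative factor $\sqrt{N}$ cannot be absorbed into a logarithm.
\item The dependence on the supplied gradient bound cannot disappear by writing the bound in terms of observed gradient norms. If it did, a single run with $L=\uell$ would be a parameter-free method whose price is polylogarithmic in both $\uell/\lell$ and $\urho/\lrho$. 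That contradicts \citet[Theorem 3]{carmon2024price} once both uncertainties are large, and it would also make the $\sqrt{N}$ spacing of the grid pointless.
\end{itemize}

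What you need is that the supplied bound $L$ enters the guarantee only through a lower-order term, of order $L\Dopt/N$ up to logarithmic factors. Then overestimating $\Lopt$ by a factor of at most $\sqrt{N}$ costs only a constant factor. This is precisely \citet[Theorem 2]{carmon2024price}, which the paper cites for both of the following, with probability $1-\delta$:
\begin{itemize}
\item $\Fo{x_L}-\Fo{\xopt}\le O(\Lopt\Dopt\lnp^2(\tilde{b}/\delta)/\sqrt{N})$;
\item $\norm{x_L-x_0}_2\le O(\Dopt\lnp(\tilde{b}/\delta))$, which also replaces the distance bound you assumed without justification.
\end{itemize}
With that result in place of your steps 2--3, the rest of your argument goes through as written, including the probability accounting and the conversion to $T$.
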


\begin{proof}
	If $N <2$, then simply applying \Cref{eq: initial weight gap} gives the desired guarantee
	\begin{align*}
		\Ft{x_0}{0} - \Fo{\xopt} = \Fo{x_0} - \Fo{\xopt} \le \O{ \Lopt \Dopt \sqrt{\frac{\lnp\prn*{ 1 / \delta} }{N} } }.
	\end{align*}

	From the definition of $\Grid$, there exists a point in the grid $L \in \Grid$ such that $\Lopt \le L \le \sqrt{N}\Lopt$.
	For this point, \citet[Theorem 2]{carmon2024price} proved that, with probability at least $1-\delta$,
	\begin{align*}
		\Fo{x_{L}} - \Fo{\xopt} \le  \O{ \Lopt \Dopt \frac{ \lnp^2\prn*{\tilde{b}/\delta} }{\sqrt{N}} }.
	\end{align*}
	It additionally proves that $\norm{x_{L} - x_0} \le \O{\Dopt \lnp\prn*{\tilde{b}/\delta}}$.
	Now, by simply applying \Cref{thm: clipped model subopt} and using the union bound, we obtain the desired guarantee on $\Ft{x}{c} - \Fo{\xopt}$.
\end{proof}

\newcommandx{\Alg}[2][1=T, 2=\delta]{\mathcal{A}_{#1}^{#2}}
\newcommandx{\mfunc}[2][1=N, 2=m]{q_{#1}^{#2} }
\newcommandx{\mfunck}[3][1=T, 2=k, 3=m]{q_{#1, #2}^{#3} }
\section{Formally stated lower bound}
\label{sec: Formal lower bounds}

Although our method clips the model output to achieve improved optimality gaps, we establish lower bounds that apply more broadly to any method that modifies the model output.
Specifically, we consider algorithms that, instead of outputting a weight $x\in\X$, output a function $\mfunc[] : \S \rightarrow \Y$ that represents a generic modification to the model output.
In the finite case, we restrict the function to be of the form $\mfunc[]\prn{s}\defeq U\prn{\mt{x_1}{s}, \dots, \mt{x_k}{s}}$ for some $k\in\mathbb{N}$, where $x_1, \dots, x_k\in\X$ and $U:\Y^k \to \Y$ are output by the optimization algorithm.
More generally, we only require that the output function $\mfunc[] : \S \rightarrow \Y$ is in the set
\begin{align}
	\label{eq: q restriction}
	\QQ \defeq \crl*{q\in\S \rightarrow \Y ~|~ \forall s_1,s_2 \in\S: \text{if }\forall x\in\X: \mt{x}{s_1}=\mt{x}{s_2} \text{ then } q\prn*{s_1}=q\prn*{s_2}}.
\end{align}

\subsection{Formal setup}
We now formally define the basic building blocks of this section: stochastic optimization problems, algorithms, error metrics, and minimax rates.
We define them similarly to \citet{carmon2024price}.

\paragraph{Stochastic optimization under model--loss decomposition (SOML) problems.}
A SOML problem instance is a tuple $\prn*{m,h,f,\PP}$ containing a distribution $\PP$ over $\S$, a sample objective $f:\X \times \S \rightarrow \R$, and functions $m:\X \times \S \rightarrow \Y$ and $h:\Y \times \S \rightarrow \R$ such that $\ft{x}{s}\defeq \lt{ \mt{x}{s} }{ s }$.
Let $\ISOML$ denote the class of all SOML problem instances.
We consider two fundamental classes of convex functions with a minimizer at most $D$ away from the initial weight $x_0$.
The first class contains problems for which $m$ is $L$-Lipschitz and $h$ is $1$-Lipschitz.
\begin{align*}
	\ISOMLl \defeq \crl*{ \prn*{m,h,f,\PP} \in \ISOML ~|~ m\in\ILip[0][L][\normo{\cdot}] \text{ and } h\in\ILip[0][1][\normbs{\cdot}] \text{, and } f\in\IDistf[0][D]}.
\end{align*}
The second class contains problems for which $m$ is $L$-second-moment-Lipschitz and $h$ is $1$-Lipschitz,
\begin{align*}
	\ISOMLs \defeq \crl*{ \prn*{m,h,f,\PP} \in \ISOML ~|~ m\in\IsLip[0][L][\normo{\cdot}] \text{ and } h\in\ILip[0][1][\normbs{\cdot}] \text{, and } f\in\IDistf[0][D] }.
\end{align*}

\paragraph{Optimization algorithms.}
We define a SOML algorithm as an algorithm that has unrestricted access to the functions $m,h$ and $f$, but observes $\PP$ only through samples $s_1,\cdots,s_N\iid\PP$, and outputs $\mfunc : \S \rightarrow \Y$ such that $\mfunc \in \QQ$.
Such algorithms are allowed to perform computationally infeasible operations, such as finding the Lipschitz constant of $m\prn{\cdot,s}$ for some $s\in\S$.
Thus, the unrestricted access to the functions $m,h$ and $f$ allows the algorithm to access the $i$th-order derivative for any $i$.
We write $\AA_{\text{SOML}}$ for the set of all SOML algorithms.

\paragraph{Error metrics.} For an algorithm $\alg$, SOML instance $\prn*{m,h,f,\PP}$, and budget $N$, we let $\mfunc$ denote the
algorithm's output given $N$ samples.
We define the high-probability error at confidence level $\delta$ as
\begin{align*}
	\Err{\alg}{\prn*{m,h,f,\PP}} = \Q{\E_{s\sim \PP} \lt{ \mfunc\prn*{s} }{ s } }{1-\delta} - \inf_{\xopt\in\X} \E_{s\sim \PP} \ft{\xopt}{s},
\end{align*}
where $\Q{Y}{p}\defeq\min\crl*{ y : \P\brk*{ Y \le y } \ge p }$ denotes the $p$-th quantile of the random variable $Y$, similar to the definition of $\Q{\cdot}{p}$ over distributions.

\paragraph{Minimax error.}
Given an instance class $\I$ and an algorithm $\alg$, we overload the notation above to denote worst-case error over the class,
\begin{align*}
	\Err{\alg}{\I} \defeq \sup_{\prn*{m,h,f,\PP}\in\I} \Err{\alg}{\prn*{m,h,f,\PP}}.
\end{align*}
Given an algorithm class $\AA$, we further overload our notation to express minimax optimal worst-case error
\begin{align*}
	\Err{\AA}{\I} \defeq \inf_{\alg\in\AA} \Err{\alg}{\I} = \inf_{\alg\in\AA} \sup_{\prn*{m,h,f,\PP}\in\I} \Err{\alg}{\prn*{m,h,f,\PP}}.
\end{align*}

\subsection{Lower bound statement}
We now formally establish the lower bound.
The proof of the bound is a reduction to the lower bounds established in \citet[Proposition 1b and Theorem 2]{carmon2024price}.
\begin{proposition}
	For any $L,D>0$, $\delta\in\prn*{0, \frac{1}{2}}$, and any number of samples $N\ge1$, we have:
	\begin{align*}
		\Err{\AA_{\text{SOML}}}{\ISOMLs}
		\ge \Err{\AA_{\text{SOML}}}{\ISOMLl}
		\ge \Om{L D \min\crl*{\sqrt{\lnp\fracs{1}{\delta}}/\sqrt{N}, 1}}.
	\end{align*}
\end{proposition}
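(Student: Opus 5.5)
The first inequality is a class inclusion. Every $L$-Lipschitz model satisfies $\E_{s\sim\PP}\normo{\grad_x \mt{x}{s}}^2 \le L^2$, so $\ISOMLl\subseteq\ISOMLs$. Taking a supremum over a larger class can only increase the worst-case error of each algorithm, and the infimum over $\AA_{\text{SOML}}$ preserves this ordering. The substance is therefore the second inequality, which I will prove by reducing to the known-parameter lower bounds of \citet[Proposition 1b and Theorem 2]{carmon2024price}. Those bounds state that for the class of standard SCO instances with $\ft{\cdot}{s}$ convex and $L$-Lipschitz for every $s$ and minimizer within distance $D$ of $x_0$, every algorithm receiving $N$ samples has high-probability error $\Om{LD\min\crl{\sqrt{\lnp(1/\delta)}/\sqrt{N},1}}$.

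The reduction follows the decomposition sketched in \Cref{sec: lower bounds}. Given a hard instance $(f,\PP)$ from that construction, I set $\Y=\X$ with $\normb{\cdot}=\norma{\cdot}$, and define $\mt{x}{s}\defeq Lx$ and $\lt{y}{s}\defeq \ft{y/L}{s}$. I then check that the resulting tuple lies in $\ISOMLl$. The model $m$ is $L$-Lipschitz, since its operator norm is exactly $L$. The loss $h$ is $1$-Lipschitz, since the chain rule gives $\normbs{\grad h}=\normas{\grad f}/L\le1$. Finally, $h(m(x;s);s)=f(x;s)$, so $f$ is unchanged and keeps its per-sample convexity and minimizer location.

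The key observation is that the model output does not depend on $s$. Every $s_1,s_2$ satisfy $\mt{x}{s_1}=\mt{x}{s_2}$ for all $x$, so the restriction \eqref{eq: q restriction} forces every $q\in\QQ$ to be a constant function, $q\equiv y$ for some $y\in\Y$. Given any SOML algorithm $\alg$ with output $q\equiv y$, I build a standard algorithm that runs $\alg$ on the same samples and outputs $\hat x\defeq y/L$. The SOML loss then equals the standard loss exactly:
\[
\E_{s\sim\PP}\lt{q(s)}{s}=\E_{s\sim\PP}\ft{y/L}{s}=\Fo{\hat x},
\]
and the optimal values coincide as well. Hence the two error metrics agree, and $\Err{\alg}{\ISOMLl}$ is at least the standard minimax error on this subclass, which is at least the claimed $\Om{\cdot}$.

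I expect the main obstacle to be definitional bookkeeping rather than any real mathematics. The points to verify are:
\begin{enumerate}
\item $y/L$ lies in $\X$, using that $\Y=\X$ and $\X$ is a convex cone or can be rescaled; if not, rescale the domain of the hard construction accordingly.
\item The SOML algorithm's unrestricted access to $m,h,f$ gives it no advantage, since the Carmon–Hinder lower bounds already allow full knowledge of the function family and the information comes only through the samples.
\item The two error definitions use the same quantile convention and the same $\min\crl{\cdot,1}$ regime.
\end{enumerate}
If the $\sqrt{\ln(1/\delta)}$ part and the constant-error part come from different constructions (Proposition 1b versus Theorem 2), I will apply the reduction to each separately and take the maximum.
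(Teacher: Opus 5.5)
Your reduction is the same as the paper's: set $\mt{x}{s}=Lx$, move all sample dependence into $h$, observe that the restriction defining $\QQ$ forces the output $q$ to be constant, and read off $x_N=q/L$ so that the SOML error equals the standard error in Proposition~1b of Carmon and Hinder. Your class-inclusion argument for the first inequality is also fine. What is missing is a valid check that the constructed tuple belongs to $\ISOMLl$. Every instance class in this paper is a subset of $\Idiff$ (see \Cref{sec: notation}), so $m$, $h$ and $f$ must be differentiable for every $s$. Your Lipschitz check $\normbs{\grad h}=\normas{\grad f}/L\le1$ tacitly assumes this. The hard instance you are importing is $\ft{x}{0}=L\abs{x}$, $\ft{x}{1}=L\abs{x-D}$, which is not differentiable at $x=0$ or $x=D$. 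So your $h$ and $f$ lie outside the class, and the Carmon--Hinder bound does not transfer as written. This is not on your bookkeeping list, and it needs an actual argument because the lower-bound instance is genuinely nonsmooth.

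The paper closes this gap by smoothing the loss. Take $\lt{y}{0}=\abs{y}_\gamma$ and $\lt{y}{1}=\abs{y-LD}_\gamma$, where $\abs{y}_\gamma$ equals $\abs{y}$ for $\abs{y}\ge\gamma$ and $y^2/(2\gamma)+\gamma/2$ otherwise. These losses are convex, differentiable and $1$-Lipschitz, and the population minimizers still lie in $[0,D]$. Since $\abs{y}\le\abs{y}_\gamma\le\abs{y}+\gamma/2$, for any output the error on the smoothed instance is within $\gamma/2$ of the error on the original one. The sample distributions $P_0,P_1$ are unchanged, and the Carmon--Hinder bound holds for every map from samples to outputs, so it transfers to the smoothed instances up to an additive $\gamma/2$. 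Taking $\gamma$ a small constant multiple of $LD\min\crl{\sqrt{\lnp(1/\delta)}/\sqrt{N},1}$ then preserves the $\Om{\cdot}$ bound. Your worry about whether $y/L\in\X$ is moot here, since the hard instance has $\X=\Y=\R$; in general one would take $\Y\defeq L\X$.
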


\begin{proof}
	The proof mostly follows from the proof of \citet[Proposition 1b]{carmon2024price}.
	In the proof of \citet[Proposition 1b]{carmon2024price}, they let $\X=\R, \S=\crl*{0,1}$ and the optimized function $f: \X\times\S\rightarrow\R$ be
	\begin{align*}
		\ft{x}{s} = \begin{cases}
			L \abs{x} & $s=0$ \\
			L \abs{x-D} & $s=1$.
		\end{cases}
	\end{align*}
	They consider the distributions $P_v := \text{Bernoulli} \prn*{\frac{1+(2v-1)\varepsilon}{2}}$ for $v\in\crl*{0,1}$ and some $\varepsilon\in\brk{0,\frac{1}{2}}$.

	We set $\Y=\R$ and decompose $f$ into a model $m$ and a loss function $h$ as follows:
	\begin{align*}
		\mt{x}{s} = L x,
	\end{align*}
	and
	\begin{align*}
		\lt{x}{s} = \begin{cases}
			\abs{x} & $s=0$ \\
			\abs{x-L D} & $s=1$.
		\end{cases}
	\end{align*}
	For this decomposition it is easy to see that $\ft{x}{s} = \lt{\mt{x}{s}}{s}$, that $m$ is $L$-Lipschitz, $h$ is $1$-Lipschitz, and that $f\in\IDistf[0][D][\abs{\cdot}]$.

	Let $\mfunc$ be the output of the optimization algorithm after interacting with $m,h$ and $f$ through $N$ samples.
	As $\mt{x}{s}$ depends only on $x$ and does not depend on $s$, $\mfunc\prn*{ s }$ does not depend on $s$.
	As such, we can choose $x_N = \mfunc\prn*{ \cdot } / L$.
	Therefore, $\ft{x_N}{s} = \lt{\mfunc\prn*{ s } }{ s}$.
	Thus, from here we can use the same proof as in the proof of \citet[Proposition 1b]{carmon2024price}.
	Equivalently, any optimization algorithm used for optimizing the problems described here can be used to optimize the problems described in the proof of \citet[Proposition 1b]{carmon2024price}, and vice versa.

	We note that $\abs{\cdot}$ is not differentiable at $0$.
	Using a smoothing argument, the lower bound can also be extended to the differentiable setting.
	For example, we can replace each occurrence of $\abs{\cdot}$ in the construction of $\lt{\cdot}{\cdot}$ with
	\begin{align*}
		\abs{y}_\gamma \defeq \begin{cases}
			\abs{y} & \text{if } y\ge\gamma \text{ or } y\le-\gamma\\
			\frac{y^2}{2\gamma} + \frac{\gamma}{2} & \text{if } y\in\brk*{-\gamma,\gamma}
		\end{cases},
	\end{align*}
	for some $\gamma>0$.
	The modified function is differentiable and differs from the original function by at most $\frac{\gamma}{2}$.
	By choosing $\gamma$ much smaller than $\frac{L D}{2} \min\crl*{\sqrt{\lnp\fracs{1}{\delta}}/\sqrt{N}, 1}$, we obtain that optimizing the modified function is almost equivalent to optimizing the original function.
	Thus, the lower bound also holds for differentiable functions.
\end{proof}

\section{Well-known results}
This section collects several well-known probabilistic and optimization results used throughout the paper.

\subsection{Chebyshev's inequalities}

\begin{theorem}[Chebyshev's inequality]
	\label{thm: Chebyshev}
	Let $X$ be a random variable with finite variance $\sigma^2$.
	For any $k>0$, we have
	\begin{align*}
		\P\brk*{ \abs{X - \E\brk*{X}} > k\sigma } \le \frac{1}{k^2}.
	\end{align*}
\end{theorem}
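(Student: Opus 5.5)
Chebyshev's inequality follows from Markov's inequality applied to the squared deviation; that is the whole plan. Set $Y \defeq \prn*{X - \E\brk*{X}}^2$. This is a non-negative random variable, and since $X$ has finite variance, $\E\brk*{Y} = \sigma^2 < \infty$. The first step is to rewrite the event of interest in terms of $Y$. Because $k\sigma \ge 0$, squaring both sides preserves the strict inequality, so
\begin{align*}
	\crl*{ \abs{X - \E\brk*{X}} > k\sigma } = \crl*{ Y > k^2\sigma^2 }.
\end{align*}

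The second step is to bound the probability of this event with Markov's inequality, which I prove inline. For $t>0$ we have the pointwise bound $t\,\indic{Y > t} \le Y$: it holds on $\crl*{Y>t}$ because there $Y > t$, and off that event because $Y \ge 0$. Taking expectations gives $\P\brk*{Y > t} \le \E\brk*{Y}/t$. When $\sigma > 0$, choose $t = k^2\sigma^2 > 0$ to get
\begin{align*}
	\P\brk*{ \abs{X - \E\brk*{X}} > k\sigma } \le \frac{\sigma^2}{k^2\sigma^2} = \frac{1}{k^2}.
\end{align*}

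The only step needing care, and the main obstacle, is the degenerate case $\sigma = 0$, where $t = 0$ and Markov cannot be applied. In that case $\E\brk*{Y} = 0$ with $Y \ge 0$, so $Y = 0$ almost surely. The event $\crl*{\abs{X-\E\brk*{X}} > 0}$ then has probability $0 \le 1/k^2$, and the bound still holds. The strict inequality in the statement is exactly what makes this case trivial.
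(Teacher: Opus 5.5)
Your proof is correct: applying Markov's inequality to $(X-\E[X])^2$ is the standard argument, and your separate treatment of the degenerate case $\sigma = 0$ is handled properly. The paper gives no proof of this statement; it lists it among well-known results, so there is nothing further to compare.
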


\begin{theorem}[Exponential Chebyshev's inequality]
	\label{thm: exp Chebyshev}
	Let $X$ be a random variable.
	For any $t,\varepsilon>0$ we have
	\begin{align*}
		\P\brk*{ X \ge \varepsilon } \le e^{-t\varepsilon} \E\brk*{ e^{tX} }.
	\end{align*}
\end{theorem}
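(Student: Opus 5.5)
This is the exponential Chebyshev (Chernoff/Markov) inequality, and I would prove it by applying Markov's inequality to the nonnegative random variable $e^{tX}$.

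The key steps, in order, are as follows.

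First, fix $t>0$ and $\varepsilon>0$. Since $u\mapsto e^{tu}$ is strictly increasing when $t>0$, the events $\crl*{X\ge\varepsilon}$ and $\crl*{e^{tX}\ge e^{t\varepsilon}}$ coincide. Hence $\P\brk*{X\ge\varepsilon}=\P\brk*{e^{tX}\ge e^{t\varepsilon}}$.

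Second, apply Markov's inequality to $Y\defeq e^{tX}\ge0$ at the positive threshold $a=e^{t\varepsilon}$. This gives $\P\brk*{Y\ge a}\le \E\brk*{Y}/a = e^{-t\varepsilon}\E\brk*{e^{tX}}$, which is exactly the claimed bound.

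For self-containment, I would derive Markov's inequality from the pointwise bound $a\indic{Y\ge a}\le Y$, which holds because $Y\ge0$. Taking expectations of both sides gives $a\P\brk*{Y\ge a}\le\E\brk*{Y}$.

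The only subtlety is when $\E\brk*{e^{tX}}=\infty$. In that case the right-hand side is $+\infty$ and the inequality holds trivially. Otherwise every quantity involved is finite and nonnegative, so the argument has no real obstacle beyond noting monotonicity in the first step and nonnegativity in the second.
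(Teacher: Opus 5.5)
Your proof is correct: using the strict monotonicity of $u\mapsto e^{tu}$ for $t>0$ and then applying Markov's inequality to the nonnegative variable $e^{tX}$ at threshold $e^{t\varepsilon}$ is the canonical argument. The paper gives no proof of its own, since it lists this among well-known results and only invokes it, so your argument (which in fact does not need $\varepsilon>0$) is the standard one you would expect to stand in for a proof.
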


\subsection{Concentration bounds}

\newcommand{\bX}{\boldsymbol{X}}

Define the sample variance as
\begin{align*}
	V_n\prn*{\bX}
	\defeq \frac{1}{2 n\prn*{n-1}} \sum_{i=1}^{n}\sum_{j=1}^{n} \prn*{ X_i - X_j }^2
	= \frac{1}{n-1} \sum_{i=1}^{n}\prn*{ X_i - \frac{1}{n}\sum_{j=1}^{n} X_j }^2.
\end{align*}

\begin{theorem}[Theorem 4 of \citet{maurer2009empirical}]
	\label{thm: maurer 4}
	Let $c>0$, $n\ge2$, and $X, X_1,\dots,X_n$ be i.i.d. random variables with values in $\brk*{-c,c}$.
	Define $\bX=\prn*{ X_1,\dots,X_n }$.
	Then, for any $\delta>0$, with probability of at least $1-\delta$ we have
	\begin{align*}
		\E\brk*{X} - \frac{1}{n} \sum_{i=1}^{n}X_i
		\le \sqrt{\frac{2 V_n\prn*{\bX} \ln\prn*{2/\delta} }{n}} + c \frac{ 14 \ln\prn*{2/\delta} }{ 3 \prn*{ n-1 } }.
	\end{align*}
\end{theorem}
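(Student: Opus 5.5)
The inequality is stated for variables in $\brk*{-c,c}$, so I would first reduce to the unit interval. Set $Y_i \defeq (X_i + c)/(2c) \in \brk*{0,1}$ and $\boldsymbol{Y}=(Y_1,\dots,Y_n)$. Under this affine map, $\E\brk*{X} - \frac1n\sum_i X_i = 2c\prn*{\E\brk*{Y} - \frac1n \sum_i Y_i}$ and $V_n(\bX) = 4c^2 V_n(\boldsymbol{Y})$. It therefore suffices to show that, with probability at least $1-\delta$,
\begin{align*}
	\E\brk*{Y} - \frac1n\sum_{i=1}^n Y_i \le \sqrt{\frac{2V_n(\boldsymbol{Y})\ln(2/\delta)}{n}} + \frac{7\ln(2/\delta)}{3(n-1)}.
\end{align*}
Multiplying this by $2c$ gives exactly the claimed constant $14/3$.

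The proof then combines two high-probability events, each allotted confidence $\delta/2$, through a union bound.

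First, apply Bernstein's inequality, proved via the exponential Chebyshev inequality (\Cref{thm: exp Chebyshev}) and the standard bound on the moment generating function of a bounded variable. With probability at least $1-\delta/2$,
\begin{align*}
	\E\brk*{Y} - \frac1n\sum_i Y_i \le \sqrt{2\sigma^2\ln(2/\delta)/n} + \ln(2/\delta)/(3n),
\end{align*}
where $\sigma^2 = \Var(Y)$.

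Second, bound the true standard deviation by the empirical one. With probability at least $1-\delta/2$,
\begin{align*}
	\sigma \le \sqrt{V_n(\boldsymbol{Y})} + \sqrt{2\ln(2/\delta)/(n-1)}.
\end{align*}

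Substituting the second bound into the first, the $\sigma$-term becomes $\sqrt{2V_n(\boldsymbol{Y})\ln(2/\delta)/n} + 2\ln(2/\delta)/\sqrt{n(n-1)}$. Since $\sqrt{n(n-1)} \ge n-1$ and $n \ge n-1$, the remainder is at most $\prn*{2 + \tfrac13}\ln(2/\delta)/(n-1) = \tfrac{7}{3}\ln(2/\delta)/(n-1)$, which is the target.

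The main obstacle is the second step, the concentration of the sample standard deviation. I would prove it via the self-bounding property of $Z \defeq (n-1)V_n(\boldsymbol{Y})$, which is the route of \citet{maurer2009empirical}. Because $V_n$ is a U-statistic with kernel $(y_i-y_j)^2/2$, deleting one coordinate changes $Z$ by at most $1$, and the sum of these changes is bounded by a constant multiple of $Z$. A self-bounding concentration inequality for the lower tail, e.g. the entropy-method bound of Maurer for such functions, then yields $\E Z - Z \le \sqrt{2\E Z \ln(1/\delta')}$. Since $\E Z = (n-1)\sigma^2$, solving this quadratic inequality in $\sqrt{\E Z}$ gives $\sqrt{(n-1)}\,\sigma \le \sqrt{Z} + \sqrt{2\ln(1/\delta')}$. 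Dividing by $\sqrt{n-1}$ and setting $\delta'=\delta/2$ gives the required bound on $\sigma$. Verifying the self-bounding conditions, with the right constants for the lower tail, is the delicate computation.
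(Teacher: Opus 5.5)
Your proposal is correct and follows essentially the same approach as the source. The paper gives no proof of its own and imports the result from \citet{maurer2009empirical}, who derive it from Bennett's inequality and the bound $\sqrt{\E V_n}\le\sqrt{V_n}+\sqrt{2\ln(1/\delta)/(n-1)}$ (the $\brk*{0,1}$ form of \Cref{thm: maurer 10}, proved by the entropy method for self-bounding functions), combined by a union bound at level $\delta/2$, exactly as you do. The one point to watch is that the $\ln(2/\delta)/(3n)$ term requires the sharp sub-gamma inversion $\sqrt{2vt}+bt/3$; the usual inversion of $\exp(-t^2/(2(v+bt/3)))$ only gives $2\ln(2/\delta)/(3n)$, which would spoil the constant $7/3$. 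Your self-bounding constant is fine: for $Z=(n-1)V_n$ the deletion increments are $\frac{n-1}{n}(y_k-\bar y_{-k})^2$, and their squares sum to at most $Z$.
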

Note that by a union bound \cref{thm: maurer 4} immediately implies that for any $\delta>0$, with probability of at least $1-\delta$ we have
\begin{align*}
	\abs*{\E\brk*{X} - \frac{1}{n} \sum_{i=1}^{n}X_i}
	\le \sqrt{\frac{2 V_n\prn*{\bX} \ln\prn*{4/\delta} }{n}} + c \frac{ 14 \ln\prn*{4/\delta} }{ 3 \prn*{ n-1 } }.
\end{align*}

\begin{theorem}[Theorem 10 of \citet{maurer2009empirical}]
	\label{thm: maurer 10}
	Let $c>0$, $n\ge2$, and $\bX = \prn*{X_1,\dots,X_n}$ be a vector of independent random variables with values in $\brk*{-c,c}$.
	Then, for any $\delta>0$, we have
	\begin{align*}
		\P\brk*{ \sqrt{ \E_{\bX} \brk*{ V_n\prn*{\bX} } } > \sqrt{V_n\prn*{\bX}} + c \sqrt{\frac{8 \ln\prn*{ 1/\delta } }{ n-1 }} } \le &\delta, \quad\text{and}\\
		\P\brk*{ \sqrt{V_n\prn*{\bX}} > \sqrt{ \E_{\bX} \brk*{ V_n\prn*{\bX} } } + c \sqrt{\frac{8 \ln\prn*{ 1/\delta } }{ n-1 }} } \le &\delta.
	\end{align*}
\end{theorem}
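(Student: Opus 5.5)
This is a cited result (Theorem 10 of \citet{maurer2009empirical}). If I had to reprove it, I would follow the self-bounding route of \citet{maurer2009empirical}. The first step is to reduce to $[0,1]$-valued variables. Setting $Y_i \defeq (X_i + c)/(2c)\in[0,1]$ gives $V_n(\bX) = 4c^2 V_n(\boldsymbol{Y})$. Hence it suffices to show that, for $[0,1]$-valued independent variables, each of the two deviations of $\sqrt{V_n}$ from $\sqrt{\E V_n}$ exceeds $\sqrt{2\ln(1/\delta)/(n-1)}$ with probability at most $\delta$. Multiplying by $2c$ then gives exactly the stated $c\sqrt{8\ln(1/\delta)/(n-1)}$.

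The second step is to work with $Z \defeq (n-1)V_n(\boldsymbol{Y}) = \frac{1}{2n}\sum_{i,j}(Y_i-Y_j)^2$ and verify two properties, where $Z_k \defeq \inf_{y\in[0,1]} Z(Y_1,\dots,y,\dots,Y_n)$ denotes $Z$ with the $k$-th coordinate optimally replaced.
\begin{enumerate}
\item \emph{Bounded differences:} $0\le Z - Z_k \le 1$. Replacing $Y_k$ by the mean of the remaining coordinates, only the $k$-th row and column of the double sum change. A direct computation then shows the drop is $\frac{n-1}{n}\bigl(Y_k - \bar Y_{-k}\bigr)^2\le 1$.
\item \emph{Self-bounding:} $\sum_{k}(Z-Z_k)^2 \le 2Z$ (up to the exact constant in Maurer's statement). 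This uses $(Z-Z_k)^2\le Z-Z_k$ and $\sum_k (Y_k-\bar Y_{-k})^2 = \frac{n^2}{(n-1)^2}\sum_k (Y_k-\bar Y)^2$, which is of order $Z$.
\end{enumerate}

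The third step applies the concentration inequalities for such weakly self-bounding functions, via the entropy method of Boucheron, Lugosi and Massart. For the lower tail this gives $\P(\E Z - Z > t)\le \exp\bigl(-t^2/(4\E Z)\bigr)$. For the upper tail it gives $\P(Z-\E Z>t)\le \exp\bigl(-t^2/(4\E Z + 2t)\bigr)$, or the sub-Gaussian version in $\sqrt Z$ that Maurer derives.

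The final step converts these to statements about $\sqrt Z$. From the lower tail, with probability $1-\delta$ we have $\E Z \le Z + 2\sqrt{\E Z\ln(1/\delta)}$. Solving this quadratic in $\sqrt{\E Z}$ gives $\sqrt{\E Z}\le \sqrt Z + \sqrt{2\ln(1/\delta)}$ (after completing the square with the correct constant). Symmetrically, the upper tail gives $\sqrt Z \le \sqrt{\E Z} + \sqrt{2\ln(1/\delta)}$. Dividing by $\sqrt{n-1}$ then yields the $[0,1]$ claim. For the first inequality I would also use $\sqrt{\E V_n}\ge \E\sqrt{V_n}$ only where it is needed.

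The main obstacle is the second step: getting the self-bounding constant sharp enough that the final deviation is exactly $\sqrt{2\ln(1/\delta)/(n-1)}$ rather than a larger multiple. The upper-tail conversion also needs care, since the $2t$ term in the denominator must be absorbed when passing to square roots. My first attempt would be to apply the Bernstein-type bound directly to $\sqrt Z$, which is a $1$-Lipschitz-in-distance function with bounded variance proxy.
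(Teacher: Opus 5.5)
The paper gives no proof of this statement; it imports Theorem 10 of Maurer and Pontil verbatim. Your outline follows that source: rescale to $[0,1]$, show $Z=(n-1)V_n$ is self-bounding, apply Maurer's entropy-method tail bounds, then pass to square roots. The rescaling $V_n(\bX)=4c^2V_n(\boldsymbol{Y})$ is correct, and so is $Z-Z_k=\frac{n-1}{n}(Y_k-\bar Y_{-k})^2$.

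\textbf{The gap is in the lower tail.} Write $\Lambda\defeq\ln(1/\delta)$. The inequality $\E Z\le Z+2\sqrt{\Lambda\,\E Z}$ does not give $\sqrt{\E Z}\le\sqrt Z+\sqrt{2\Lambda}$. Solving the quadratic gives only $\sqrt{\E Z}\le\sqrt\Lambda+\sqrt{Z+\Lambda}$, which equals $2\sqrt\Lambda$ at $Z=0$. So your chain proves the first inequality with $c\sqrt{16\ln(1/\delta)/(n-1)}$, and no completion of the square recovers the missing factor.

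The loss is in the self-bounding constant, which you flagged as the obstacle but did not resolve. Maurer's bound reads $\P(\E Z-Z>t)\le\exp(-t^2/(2a\E Z))$, valid when $Z-Z_k\le1$ and $\sum_k(Z-Z_k)^2\le aZ$ with $a\ge1$; the stated constant requires $a=1$. Your route via $(Z-Z_k)^2\le Z-Z_k$ gives only $a=n/(n-1)$, which loses a factor $\sqrt{n/(n-1)}$. Instead, bound the fourth power by the square, using $\abs{Y_k-\bar Y_{-k}}\le1$:
\[
\sum_k (Z-Z_k)^2=\frac{(n-1)^2}{n^2}\sum_k (Y_k-\bar Y_{-k})^4\le\frac{(n-1)^2}{n^2}\sum_k (Y_k-\bar Y_{-k})^2=\sum_k (Y_k-\bar Y)^2=Z .
\]
With $a=1$, with probability $1-\delta$ we get $\E Z-Z\le\sqrt{2\Lambda\,\E Z}$. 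The quadratic then gives $\sqrt{\E Z}\le\sqrt{\Lambda/2}+\sqrt{\Lambda/2+Z}\le\sqrt Z+\sqrt{2\Lambda}$. Dividing by $\sqrt{n-1}$ and multiplying by $2c$ gives exactly the first claim.

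The upper tail, by contrast, already works with your constants. From $\P(Z-\E Z>t)\le\exp(-t^2/(4\E Z+2t))$ one gets $Z\le\E Z+2\sqrt{\Lambda\,\E Z}+2\Lambda\le(\sqrt{\E Z}+\sqrt{2\Lambda})^2$. With $a=1$ one even gets $\sqrt Z\le\sqrt{\E Z}+\sqrt\Lambda$.

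Your closing idea also works for this tail. The function $\sqrt Z=\|\boldsymbol{Y}-\bar Y\mathbf 1\|_2$ is convex and $1$-Lipschitz on $[0,1]^n$. The convex-Lipschitz upper-tail bound $\P(\sqrt Z>\E\sqrt Z+t)\le e^{-t^2/2}$ then gives $\sqrt Z\le\E\sqrt Z+\sqrt{2\Lambda}\le\sqrt{\E Z}+\sqrt{2\Lambda}$. It cannot replace the self-bounding argument for the first inequality, because Jensen goes the wrong way there: $\E\sqrt Z\le\sqrt{\E Z}$.
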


\begin{definition}[From \citet{pinelis1994optimum}]
	\label{def: 2-D-smooth}
	A function $\Psi : \X \mapsto \R$ is called $(2,D)$-smooth for some $D>0$ if, for all $x,v\in\X$, we have
	\begin{align*}
		\Psi\prn*{0} &= 0,\\
		\abs{ \Psi\prn*{x+v} - \Psi\prn*{x} } &\le \norm{v}, \quad\qquad\text{and}\\
		\Psi^2\prn*{x+v} - 2\Psi^2\prn*{x} + \Psi^2\prn*{x-v} &\le 2D^2\norm{v}^2.
	\end{align*}
\end{definition}
In particular, for the Euclidean norm, the function $\Psi\prn*{x}=\norm{x}_2$ is $(2,1)$-smooth.

\begin{theorem}[Corollary 10a of \citet{howard2020time}]
	\label{thm: howard 10}
	Consider a martingale $(Y_t)_{t\in \N}$ taking values in a separable Banach space $(\X,\norm{\cdot})$.
	Let the function $\Psi : \X \mapsto \R$ be $(2,D)$-smooth, and define $D_\star\defeq 1\vee D$.
	Suppose $\norm{Y_t-Y_{t-1}} \le c_t$ a.s. for all $t\in\N$ for some constants $(c_t)_{t\in \N}$, and let $V_t\defeq \sum_{i=1}^{t}c_i^2$. Then, for any $x,m > 0$, we have
	\begin{align*}
		\P\brk*{ \exists t\in\N : \Psi\prn*{Y_t} \ge x + \frac{x}{2m} \cdot \prn*{ V_t - m} } \le 2 \exp\prn*{ -\frac{x^2}{2D_\star^2 m} }.
	\end{align*}
\end{theorem}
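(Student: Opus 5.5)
The plan is to prove the bound with a time-uniform exponential supermartingale and Ville's maximal inequality, following the Pinelis approach for martingales in $(2,D)$-smooth Banach spaces. Throughout I take $Y_0=0$, the normalization implicit in \Cref{thm: howard 10}, so $\Psi\prn*{Y_0}=\Psi\prn*{0}=0$. For a parameter $\lambda>0$ to be fixed later, I define
\begin{align*}
	G_t \defeq \cosh\prn*{\lambda \Psi\prn*{Y_t}} \exp\prn*{-\tfrac{\lambda^2 D_\star^2 V_t}{2}},
\end{align*}
so that $G_0=1$. The first goal is to show that $(G_t)$ is a nonnegative supermartingale.

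Since $V_t-V_{t-1}=c_t^2$ is deterministic, the supermartingale property reduces to a one-step inequality. Writing $d_t\defeq Y_t-Y_{t-1}$, I need, conditionally on $\F_{t-1}$,
\begin{align*}
	\E\brk*{\cosh\prn*{\lambda\Psi\prn*{Y_{t-1}+d_t}}} \le \cosh\prn*{\lambda\Psi\prn*{Y_{t-1}}}\exp\prn*{\tfrac{\lambda^2D_\star^2c_t^2}{2}}.
\end{align*}
To prove this I would fix $y=Y_{t-1}$ and set $u(\theta)\defeq\E\brk*{\cosh\prn*{\lambda\Psi\prn*{y+\theta d_t}}}$ for $\theta\in\brk*{0,1}$. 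The argument then has three parts.
\begin{enumerate}
	\item \emph{First derivative.} After smoothing $\Psi$ if it is not differentiable, the derivative at $\theta=0$ is linear in $d_t$. Because $\E\brk*{d_t\mid\F_{t-1}}=0$, this gives $u'(0)=0$.
	\item \emph{Second derivative.} The two conditions of \Cref{def: 2-D-smooth} (the $1$-Lipschitz bound on $\Psi$ and the second-difference bound on $\Psi^2$) combine with the identity $\cosh''=\cosh$ to give $u''(\theta)\le \lambda^2 D_\star^2 c_t^2\, u(\theta)$.
	\item \emph{Comparison.} A Grönwall-type comparison with the solution of $w''=\lambda^2D_\star^2c_t^2 w$, $w'(0)=0$, gives $u(1)\le u(0)\cosh\prn*{\lambda D_\star c_t}\le u(0)e^{\lambda^2D_\star^2c_t^2/2}$.
\end{enumerate}
I expect this one-step bound to be the main obstacle. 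The delicate points are justifying differentiation of a possibly nonsmooth $\Psi$ (via mollification, or working directly with second differences) and seeing that $D_\star=1\vee D$, rather than $D$, is what absorbs the contribution of the first-order Lipschitz term.

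Given the supermartingale property, Ville's inequality gives $\P\brk*{\exists t: G_t\ge 1/\delta'}\le\delta'$ for every $\delta'\in\prn*{0,1}$. Using $\cosh z\ge e^{z}/2$, the event $G_t\ge1/\delta'$ contains the event
\begin{align*}
	\Psi\prn*{Y_t}\ge \tfrac{\lambda D_\star^2V_t}{2}+\tfrac{\ln\prn*{2/\delta'}}{\lambda}.
\end{align*}
I then choose $\lambda=x/\prn*{D_\star^2 m}$ and $\delta'=2\exp\prn*{-x^2/\prn*{2D_\star^2m}}$. With these choices the second term is $\ln\prn*{2/\delta'}/\lambda = x/2$, and the first term is $xV_t/(2m)$. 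The threshold is therefore
\begin{align*}
	\tfrac{x}{2}+\tfrac{xV_t}{2m} = x+\tfrac{x}{2m}\prn*{V_t-m},
\end{align*}
which is exactly the claimed boundary, holding with failure probability $\delta'$. When $\delta'\ge1$ the claimed bound is trivial.
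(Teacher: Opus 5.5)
The paper gives no proof of this statement; it imports Corollary~10a of \citet{howard2020time}, which rests on Pinelis's $\cosh$-supermartingale for $(2,D)$-smooth functions combined with Ville's inequality and a linear boundary, and your proposal correctly reconstructs that route, including where $D_\star=1\vee D$ enters and the choice $\lambda=x/(D_\star^2 m)$ that reproduces the stated boundary. One refinement: for nonsmooth $\Psi$ you need only $u'_+(0)\le 0$, not $u'(0)=0$ (which can fail, e.g.\ for $\Psi(z)=\min\crl*{\abs{z},\abs{z-2}}$ at $y=1$), and this follows directly from the second-difference condition, because that condition makes the directional derivative of $\Psi^2$ at $Y_{t-1}$ superadditive, so by Jensen it has nonpositive mean when $\E\brk*{d_t \mid \mathcal{F}_{t-1}}=0$; this avoids mollification, which is awkward in a general separable Banach space.
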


\subsection{Bounds on regularized solutions}

\begin{lemma}[Lemma 2 of \citet{lawrence2025sample}]
	\label{lem:bound-regularization-direction}
	For any $\theta\ge0$, define
	\begin{align*}
		\hat{x}_{\theta}^{\star} \in \argmin_{x \in \X} \crl*{ \fFo{x} + \theta \norma{x-x_0} }.
	\end{align*}
	If \Cref{asm: grad approx} holds, then
	\begin{align*}
		\P\prn*{  \norma{\hat{x}_{3\lambda}^{\star} - x_0} \le 3 \norma{\hat{x}_{\lambda} - x_0} \le 33 \norma{\hat{x}_{\lambda/3}^{\star} - x_0} } \ge 1 - 2\delta,
	\end{align*}
	where $\lambda = 9\cdot66\phi\frac{\fL\sqrt{\ln\frac{\psi_\alpha}{\delta}}}{\sqrt{N}}$.
\end{lemma}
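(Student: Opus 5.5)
The plan is to run two first-order optimality comparisons, one with the population regularized minimizer $\hat{x}_{3\lambda}^{\star}$ and one with $\hat{x}_{\lambda/3}^{\star}$. In each, the empirical gradient is controlled at a \emph{fixed} point through \Cref{asm: grad approx}. Write $\fF_N(x)\defeq\frac1N\sum_{i=N+1}^{2N}\ft{x}{s_i}$, $\hat r\defeq\norma{\hat{x}_\lambda-x_0}$, and $\varepsilon\defeq 594\cdot\frac{\phi}{2}\cdot\frac{\fL\sqrt{\ln(\psi_\alpha/\delta)}}{\sqrt N}$. Since $9\cdot 66=594$, we have $\varepsilon=\lambda/2$.

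Conditioned on the first $N$ samples, which fix $\fL$, the samples $s_{N+1},\dots,s_{2N}$ are i.i.d.\ from $\PP$ conditioned on $L(s)\le\fL$. Under this distribution $\ft{\cdot}{s}$ is $\fL$-Lipschitz a.s., because $h$ is $1$-Lipschitz. The points $y\defeq\hat{x}_{3\lambda}^{\star}$ and $z\defeq\hat{x}_{\lambda/3}^{\star}$ depend only on $\fF$, hence only on $\fL$, and are independent of these samples. Applying \Cref{asm: grad approx} at $y$ and at $z$ and taking a union bound, with probability at least $1-2\delta$ we get $\normas{\nabla\fF_N(y)-\nabla\fF(y)}\le\lambda/2$ and the same bound at $z$.

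\textbf{Step 1: optimality facts.} At a minimizer $w=\hat{x}_\theta^\star$ there is $g=\nabla\fF(w)$ with $-g\in\theta\,\partial\norma{w-x_0}$. This gives $\normas{g}\le\theta$ and $\tri{g,w-x_0}=-\theta\norma{w-x_0}$. Separately, every $\ft{\cdot}{s}$ is convex because $f\in\IDistf[0][\infty]$, so $\fF_N$ is convex. For any $w$ with empirical gradient $\hat g=\nabla\fF_N(w)$, combining convexity with the optimality of $\hat{x}_\lambda$ gives
\[
\tri{\hat g,\hat{x}_\lambda-w}+\lambda\hat r\le\lambda\norma{w-x_0}.
\]
I split $\tri{\hat g,\hat{x}_\lambda-w}=\tri{g,\hat{x}_\lambda-x_0}-\tri{g,w-x_0}+\tri{\hat g-g,\hat{x}_\lambda-w}$. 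The first term is at least $-\theta\hat r$, the second equals $+\theta\norma{w-x_0}$, and the third is at least $-\frac{\lambda}{2}(\hat r+\norma{w-x_0})$ by the triangle inequality.

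\textbf{Step 2: the left inequality.} Take $w=y$ and $\theta=3\lambda$, and write $r=\norma{y-x_0}$. Step 1 gives $3\lambda r-\frac{\lambda}{2}r-\lambda r\le 3\lambda\hat r+\frac{\lambda}{2}\hat r-\lambda\hat r$. This simplifies to $r\le\frac53\hat r\le 3\hat r$.

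\textbf{Step 3: the right inequality.} Take $w=z$ and $\theta=\lambda/3$, and write $r'=\norma{z-x_0}$. Step 1 gives $\lambda\hat r\,(1-\frac13-\frac12)\le\lambda r'\,(1-\frac13+\frac12)$. This simplifies to $\hat r\le 7r'$, so $3\hat r\le 21r'\le 33r'$.

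The main point needing care is the independence and measurability in the conditioning step. We must check that $y$ and $z$ are deterministic given $\fL$, so that the pointwise \Cref{asm: grad approx} applies to them, and that $\fL$-Lipschitzness of $\ft{\cdot}{s}$ holds under the conditioned law. The algebra itself has slack in both constants.
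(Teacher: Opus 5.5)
The paper itself contains no proof of this lemma. It imports it as Lemma~2 of \citet{lawrence2025sample} and, inside the proof of \Cref{thm: sample algorithm}, only remarks that \Cref{asm: grad approx} makes it applicable. Your argument is therefore a self-contained alternative, and it is essentially correct:
\begin{itemize}
\item $\varepsilon=\lambda/2$ is right.
\item The inequality $\tri{\hat g,\hat{x}_\lambda-w}+\lambda\hat r\le\lambda\norma{w-x_0}$ follows from convexity of the empirical objective and optimality of $\hat{x}_\lambda$ against $w\in\X$.
\item The algebra in Steps~2 and~3 checks out. It gives $\norma{\hat{x}^\star_{3\lambda}-x_0}\le\frac53\hat r$ and $3\hat r\le 21\norma{\hat{x}^\star_{\lambda/3}-x_0}$, both sharper than the stated constants.
\end{itemize}
The conditioning point you flag is also fine. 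Given $\fL$ and the accept/reject pattern of the $5N$ draws (in particular on $\crl*{k\ge 2N}$), the accepted samples are i.i.d.\ from $\PP$ conditioned on $L(s)\le\fL$. Any fixed selection of $\hat{x}^\star_{3\lambda}$ and $\hat{x}^\star_{\lambda/3}$ is a function of $\fL$ alone. So applying \Cref{asm: grad approx} at these two points and taking a union bound yields exactly $1-2\delta$. Compared with the citation, your route makes this conditioning explicit, which the paper leaves implicit, and it shows there is slack in the constants; the citation only buys brevity.

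One step needs a small repair. In Step~1 you use $-g\in\theta\,\partial\norma{w-x_0}$, and from it $\normas{g}\le\theta$ and $\tri{g,w-x_0}=-\theta\norma{w-x_0}$. That is the optimality condition for an \emph{unconstrained} minimizer. Here $\hat{x}^\star_\theta$ minimizes over a general closed convex $\X$, so a normal-cone term enters, and each fact can fail on its own. For example, take $\X=[0,1]$, $x_0=0$, $\fFo{x}=-\fL x$ and $\theta<\fL$: then $w=1$ and $\abs{g}=\fL>\theta$.

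You only ever use the sum of the two facts, $\tri{g,\hat{x}_\lambda-w}\ge\theta\norma{w-x_0}-\theta\hat r$, and that holds in general. For $u\in\X$ and $t\in(0,1]$, optimality of $w$ and convexity of $\norma{\cdot-x_0}$ give $\fFo{w+t(u-w)}-\fFo{w}+t\theta\prn*{\norma{u-x_0}-\norma{w-x_0}}\ge0$. Dividing by $t$ and letting $t\downarrow0$ yields $\tri{\nabla\fFo{w},u-w}\ge\theta\norma{w-x_0}-\theta\norma{u-x_0}$. Now take $u=\hat{x}_\lambda\in\X$. With this substitution the rest of your proof goes through unchanged.
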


\section{Proofs of lemmas}

\subsection{Bounding distances between outputs of a Lipschitz function}
\label{sec: m diff Lip}

We now bound $\normb{\mt{x}{s} - \mt{y}{s}}$ under the assumption $m\in\ILip[0][L\opt][\normo{\cdot}]$.
\begin{lemma}
	\label{lem: m diff Lip}
	If $m\in\ILip[0][L\opt][\normo{\cdot}]$, then for all $x,y\in\X$ and $s\in\S$ we have
	\begin{align*}
		\normb{\mt{x}{s} - \mt{y}{s}}
		\le L\opt \norma{x - y}.
	\end{align*}
\end{lemma}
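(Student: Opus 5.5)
We argue directly from the definition of the Lipschitz class, using the mean value inequality along the segment from $y$ to $x$. Since $m\in\ILip[0][L\opt][\normo{\cdot}]$, for every $z\in\X$ and $s\in\S$ the derivative $\grad_z \mt{z}{s}$, viewed as a linear map from $(\X,\norma{\cdot})$ to $(\Y,\normb{\cdot})$, has operator norm $\normo{\grad_z \mt{z}{s}}\le L\opt$.

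Fix $x,y\in\X$ and $s\in\S$, and set $\gamma(t)\defeq y+t(x-y)$ for $t\in[0,1]$. The segment lies in $\X$ because $\X$ is convex. Define $\phi(t)\defeq \mt{\gamma(t)}{s}\in\Y$. Differentiability of $\mt{\cdot}{s}$ and the chain rule give $\phi'(t)=\grad_x\mt{\gamma(t)}{s}\,(x-y)$. By the definition of the operator norm,
\begin{align*}
	\normb{\phi'(t)}\le \normo{\grad_x\mt{\gamma(t)}{s}}\,\norma{x-y}\le L\opt\norma{x-y}.
\end{align*}

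To pass from this derivative bound to a bound on $\phi(1)-\phi(0)$ without assuming $\phi'$ is integrable, we scalarize through duality. Choose $w$ with $\normbs{w}\le1$ and $\tri{w,\phi(1)-\phi(0)}=\normb{\phi(1)-\phi(0)}$; such a $w$ exists by Hahn--Banach, or directly in finite dimensions. The scalar function $\psi(t)\defeq\tri{w,\phi(t)}$ is differentiable on $[0,1]$. The one-dimensional mean value theorem gives some $t^\star\in(0,1)$ with
\begin{align*}
	\normb{\phi(1)-\phi(0)}=\psi(1)-\psi(0)=\psi'(t^\star)=\tri{w,\phi'(t^\star)}\le \normbs{w}\,\normb{\phi'(t^\star)}\le L\opt\norma{x-y}.
\end{align*}
Since $\phi(1)=\mt{x}{s}$ and $\phi(0)=\mt{y}{s}$, this is the claimed bound.

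The only delicate step is this passage from the derivative bound to the increment bound for a vector-valued map in a general normed space. Scalarizing with a norming dual vector makes it a one-line application of the scalar mean value theorem. The argument uses differentiability along the segment, which is given here by $m\in\Idiff$, and convexity of $\X$, which is assumed throughout.
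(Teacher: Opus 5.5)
Your proof is correct, and it takes a different route from the paper's. The paper uses the fundamental theorem of calculus along the segment $\theta(a)=(1-a)y+ax$ to write $\mt{x}{s}-\mt{y}{s}=\int_0^1 \grad_x \mt{\theta(a)}{s}\,(x-y)\,da$. It then moves the norm inside the integral and bounds the integrand by the operator norm, giving $\norma{x-y}\int_0^1\normo{\grad_x\mt{\theta(a)}{s}}\,da\le L\opt\norma{x-y}$. You instead scalarize with a norming dual vector $w$ and apply the scalar mean value theorem at a single point $t^\star$.

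Your version assumes less. It needs only differentiability along the segment, with no vector-valued integral. It also avoids the fundamental theorem of calculus, which the paper applies without comment; for a derivative known only to be bounded, not continuous, that step requires the Lebesgue version.

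The paper's integral representation pays off in reuse. In \Cref{lem: m diff sm-Lip}, the same identity together with Cauchy--Schwarz and Tonelli moves the expectation over $s$ inside the path integral. The hypothesis $\E_{s\sim\PP}\normo{\grad_x\mt{z}{s}}^2\le L\opt^2$ is then applied at each fixed, $s$-independent point $z$ of the segment. In your argument the point $t^\star$ depends on $s$, and that hypothesis gives no control of $\E_{s\sim\PP}\normo{\grad_x\mt{\gamma(t^\star(s))}{s}}^2$. So the mean-value route does not extend to the second-moment lemma.

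Similarly, the nondifferentiable extension in \Cref{sec: remove differentiable} is built directly on integral representations along curves. The paper's proof therefore transfers there verbatim, whereas yours would need a nonsmooth replacement for the scalar mean value theorem.
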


\begin{proof}
	For all $x,y\in\X$ define the line $\theta_{x,y}\prn*{a} \defeq \prn{1-a} \cdot y + a\cdot x$.
	We conclude that for all $x,y\in\X$ and $s\in\S$ we have
	\begin{align*}
		\normb{ \mt{x}{s} - \mt{y}{s} }
		&= \normb{ \int_{y}^{x} \grad_x \mt{x}{s} dx }\\
		&= \normb{ \int_{0}^{1} \grad_x \mt{\theta_{x,y}\prn*{a}}{s} \grad_a\theta_{x,y}\prn*{a} da }\\
		&\overset{(1)}{\le} \int_{0}^{1} \normb{\grad_x \mt{\theta_{x,y}\prn*{a}}{s} \grad_a\theta_{x,y}\prn*{a}} da\\
		&\le \int_{0}^{1} \normo{\grad_x \mt{\theta_{x,y}\prn*{a}}{s}} \norma{\grad_a\theta_{x,y}\prn*{a}} da\\
		&= \norma{ x - y } \int_{0}^{1} \normo{\grad_x \mt{\theta_{x,y}\prn*{a}}{s}} da,
	\end{align*}
	where (1) follows from the triangle inequality for integrals.
	Therefore, since  $m\in\ILip[0][L\opt][\normo{\cdot}]$, we conclude that for all $x,y\in\X$ and $s\in\S$ we have
	\begin{align*}
		\normb{ \mt{x}{s} - \mt{y}{s} }
		\le L\opt \norma{ x - y }.
	\end{align*}
\end{proof}

\subsection{Bounding distances between outputs of a second-moment-Lipschitz function}
\label{sec: m diff sm-Lip}

Analogously to \Cref{sec: m diff Lip}, we derive a similar result for the second-moment Lipschitz case.
\begin{lemma}
	\label{lem: m diff sm-Lip}
	If $m\in\IsLip[0][L\opt][\normo{\cdot}]$ then for all $x,y\in\X$ we have
	\begin{align*}
		\E_{s\sim \PP} \brk*{ \normb{ \mt{x}{s} - \mt{y}{s} }^2 }
		\le L\opt^2 \norma{ x - y }^2.
	\end{align*}
\end{lemma}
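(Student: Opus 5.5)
We follow the template of the proof of \Cref{lem: m diff Lip}: write the output difference as a line integral, move the norm inside, and then pass from a pathwise bound to a second-moment bound using Jensen's inequality and Fubini. Fix $x,y\in\X$ and let $\theta_{x,y}\prn*{a}\defeq\prn{1-a}y+ax$ as before. Since $\X$ is convex, $\theta_{x,y}\prn*{a}\in\X$ for all $a\in\brk*{0,1}$.

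The first step repeats the chain of inequalities in the proof of \Cref{lem: m diff Lip} verbatim, which uses only differentiability and not any Lipschitz bound. This gives, for every $s\in\S$,
\begin{align*}
	\normb{\mt{x}{s}-\mt{y}{s}} \le \norma{x-y}\int_0^1 \normo{\grad_x \mt{\theta_{x,y}\prn*{a}}{s}}\,da .
\end{align*}

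Next we square both sides and apply Jensen's (equivalently Cauchy--Schwarz) inequality on the probability space $\brk*{0,1}$ with Lebesgue measure:
\begin{align*}
	\prn*{\int_0^1 \normo{\grad_x \mt{\theta_{x,y}\prn*{a}}{s}}\,da}^2 \le \int_0^1 \normo{\grad_x \mt{\theta_{x,y}\prn*{a}}{s}}^2\,da .
\end{align*}
We then take expectation over $s\sim\PP$ and exchange the expectation with the $a$-integral by Tonelli's theorem, which is valid because the integrand is nonnegative. For each fixed $a$, the definition $L\opt^2=\sup_{z\in\X}\E_{s\sim\PP}\normo{\grad_x\mt{z}{s}}^2$ gives $\E_{s\sim\PP}\normo{\grad_x\mt{\theta_{x,y}\prn*{a}}{s}}^2\le L\opt^2$. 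Integrating over $a\in\brk*{0,1}$ therefore yields the claimed bound $L\opt^2\norma{x-y}^2$.

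The only delicate point is justifying Tonelli. This needs joint measurability of $\prn*{a,s}\mapsto\normo{\grad_x\mt{\theta_{x,y}\prn*{a}}{s}}$. Continuity in $a$ would give this, but differentiability of $m$ only ensures that the gradient exists along the segment, not that it varies continuously. We therefore treat measurability as a standing regularity assumption, implicit in the paper's treatment of $m$ as differentiable and in the very definition of $L\opt$ as an expectation. Beyond this point, the argument is a direct extension of \Cref{lem: m diff Lip}.
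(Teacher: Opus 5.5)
Your proof is correct and follows essentially the same route as the paper: write the difference as a line integral along $\theta_{x,y}$, apply the triangle inequality and the operator-norm bound, then Cauchy--Schwarz (your Jensen step is the same thing, since $\norma{\grad_a\theta_{x,y}(a)}=\norma{x-y}$ is constant), and finally swap $\E_{s\sim\PP}$ with the $a$-integral and use the second-moment bound pointwise in $a$. The paper makes that last swap without comment, so your flagging of joint measurability for Tonelli as a standing regularity assumption is a reasonable addition rather than a gap.
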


\begin{proof}
	For all $x,y\in\X$ define the line $\theta_{x,y}\prn*{a} \defeq \prn{1-a} \cdot y + a\cdot x$.
	Thus, for all $x,y\in\X$ we have
	\begin{align*}
		\E_{s\sim \PP} \brk*{ \normb{ \mt{x}{s} - \mt{y}{s} }^2 }
		&= \E_{s\sim \PP} \brk*{ \normb{ \int_{y}^{x} \grad_x \mt{x}{s} dx }^2 }\\
		&= \E_{s\sim \PP} \brk*{ \normb{ \int_{0}^{1} \grad_x \mt{\theta_{x,y}\prn*{a}}{s} \grad_a\theta_{x,y}\prn*{a} da }^2 }\\
		&\overset{(1)}{\le} \E_{s\sim \PP} \brk*{ \prn*{ \int_{0}^{1} \normb{\grad_x \mt{\theta_{x,y}\prn*{a}}{s} \grad_a\theta_{x,y}\prn*{a}} da }^2 }\\
		&\le \E_{s\sim \PP} \brk*{ \prn*{ \int_{0}^{1} \normo{\grad_x \mt{\theta_{x,y}\prn*{a}}{s}} \norma{\grad_a\theta_{x,y}\prn*{a}} da }^2 }\\
		&\overset{(2)}{\le} \E_{s\sim \PP} \brk*{ \int_{0}^{1} \norma{\grad_a\theta_{x,y}\prn*{a}}^2 da \int_{0}^{1} \normo{\grad_x \mt{\theta_{x,y}\prn*{a}}{s}}^2 da }\\
		&= \norma{ x - y }^2 \int_{0}^{1} \E_{s\sim \PP} \brk*{ \normo{\grad_x \mt{\theta_{x,y}\prn*{a}}{s}}^2 } da,
	\end{align*}
	where (1) follows from the triangle inequality for integrals, and (2) follows from the Cauchy--Schwarz inequality.
	Therefore, because $m\in\IsLip[0][L\opt][\normo{\cdot}]$, for all $x,y\in\X$ we have
	\begin{align*}
		\E_{s\sim \PP} \brk*{ \normb{ \mt{x}{s} - \mt{y}{s} }^2 }
		\le L\opt^2 \norma{ x - y }^2.
	\end{align*}
\end{proof}

\subsection{The risk term $\tau$ fulfills the requirement of \Creflem{lem: reliable selection results}}
\label{sec: assumption hold for alg}

\reslem*

\begin{proof}
	For any $k\in\N$ and $g\in\Grid$, define
	\begin{align*}
		\bar{c}_{g,k} &\defeq 2^k\lell\norma{x_{g}-x_0},\\
		\bar{u}_{g,k} &\defeq \frac{1}{N} \sum_{j=1}^{N} \prn*{ \fl{x_{g}}{\bar{c}_{g,k}}{s_j} - \ft{x_0}{s_j} },\\
		\bar{v}_{g,k} &\defeq \frac{1}{N - 1} \sum_{j=1}^{N} \prn*{ \prn*{\fl{x_{g}}{\bar{c}_{g,k}}{s_j} - \ft{x_0}{s_j} }- \bar{u}_{g,k} }^2,\\
		\tilde{a}_k &\defeq \prn*{k+1}^2\abs{\Grid},\quad \text{and}\\
		\bar{\tau}_{g,k} &\defeq \sqrt{ \frac{ 2 \bar{v}_{g,k} \ln\prn*{16  \tilde{a}_k / \delta} }{ N } } + \bar{c}_{g,k} \frac{14 \ln\prn*{16  \tilde{a}_k / \delta} }{ 3 \prn*{ N-1 } }.
	\end{align*}
	Note that for every $g\in\Grid$ such that $x_g\ne x_0$,
	\begin{align*}
		\tilde{a}_k = \abs{\Grid} \log^2_2 \prn*{\frac{2^{k+1}\lell\norma{x_{g}-x_0}}{\lell\norma{x_{g}-x_0}}}.
	\end{align*}
	Additionally, for every $k\in\N$ and $g\in\Grid$ such that $x_g= x_0$, we have $\bar{\tau}_{g,k}=0$.

	For any $g\in\Grid$, $k\in\N$, and any sample $s\in \S$,
	\begin{align*}
		\abs{\fl{x_{g}}{\bar{c}_{g,k}}{s} - \ft{x_0}{s}}
		\le \normb{\ml{x_{g}}{\bar{c}_{g,k}}{s} - \mt{x_0}{s}}
		\le \bar{c}_{g,k},
	\end{align*}
	where the first inequality is because the loss function is 1-Lipschitz, and the second inequality follows from the definition of the clipped model.
	As a result, by applying Theorem 4 of \citet{maurer2009empirical} (see \Cref{thm: maurer 4}), we obtain that for any $g\in\Grid$ and $k\in\N$, with probability at least $1-\delta/\prn*{4k^2\abs{\Grid} }$
	\begin{align*}
		&\abs*{ \Ft{x_{g}}{\bar{c}_{g,k}} - \Fo{x_0} -\frac{1}{N} \sum_{j=1}^{N} \prn*{ \fl{x_{g}}{\bar{c}_{g,k}}{s_j} - \ft{x_0}{s_j} } }\\
		&\qquad\qquad\qquad\qquad\le \sqrt{ \frac{ 2 \bar{v}_{g,k} \ln\prn*{16  \tilde{a}_k / \delta} }{ N } } + \bar{c}_{g,k} \frac{14 \ln\prn*{16  \tilde{a}_k / \delta} }{ 3 \prn*{ N-1 } }
		= \bar{\tau}_{g,k}.
	\end{align*}
	Thus, by using the union bound and since $\sum_{k\in\N}1/\prn*{k+1}^2 \le 2$, with probability at least $1 - \delta/2$, for any $g\in\Grid$ and $k\in\N$,
	\begin{align*}
		\abs*{ \Ft{x_{g}}{\bar{c}_{g,k}} - \Fo{x_0} -\frac{1}{N} \sum_{j=1}^{N} \prn*{ \fl{x_{g}}{\bar{c}_{g,k}}{s_j} - \ft{x_0}{s_j} } }
		\le \bar{\tau}_{g,k},
	\end{align*}
	Finally, since for all $i\in\brk*{n}$ we have that $g_i\in\Grid$ and $c_i \in \crl*{ \bar{c}_{g_i,k} ~|~ k\in\N }$, the same bound holds for every $c_i$ with probability at least $1 - \delta/2$:
	\begin{align*}
		\abs*{ \Ft{x_{g_i}}{c_{i}} - \Fo{x_0} -\frac{1}{N} \sum_{j=1}^{N} \prn*{ \fl{x_{g_i}}{c_{i}}{s_j} - \ft{x_0}{s_j} } }
		\le \tau_{i},
	\end{align*}
	i.e. the requirement of \Cref{lem: reliable selection results} holds for $\delta/2$.
\end{proof}

\subsection{The sample maximum exceeds a high quantile}

\begin{lemma}
	\label{lem: greater than quantile}
	Let $n\ge1$ and $\delta\in\prn*{0,\frac{1}{e}}$ such that $\ln(1/\delta) < n$.
	Define $p=1-\frac{\ln(1/\delta)}{n}$.
	For any $z\ge n$ and i.i.d. random variables $X_1, \dots, X_z \sim \PP$, we have
	\begin{align*}
		\P\brk*{ \max\prn*{ X_1, \dots, X_z } < \Q{\PP}{p} } < \delta.
	\end{align*}
\end{lemma}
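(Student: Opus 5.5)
We prove this by a direct computation with independence and the elementary inequality $1-u\le e^{-u}$. Write $q\defeq\Q{\PP}{p}$. By the definition of the quantile as the \emph{minimal} $z$ with $\P_{X\sim\PP}\brk*{X\le z}\ge p$, every $z<q$ satisfies $\P\brk*{X\le z}<p$. Hence
\begin{align*}
	\P\brk*{X<q}=\lim_{z\uparrow q}\P\brk*{X\le z}\le p.
\end{align*}
The limit is taken along an increasing sequence $z_k\uparrow q$, using continuity from below of the measure, since $\crl*{X<q}=\bigcup_k\crl*{X\le z_k}$. We need $q$ to be finite and well defined. This holds because $p\in(0,1)$, which follows from $0<\ln(1/\delta)<n$: the quantile exists since the CDF tends to $1$ and is right-continuous. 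Also $\delta<1/e$ gives $\ln(1/\delta)>1$, so $p<1-1/n$, though all we really need is $p<1$.

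Next we use independence. The event $\crl*{\max\prn*{X_1,\dots,X_z}<q}$ is the intersection of the events $\crl*{X_i<q}$, so
\begin{align*}
	\P\brk*{\max\prn*{X_1,\dots,X_z}<q}=\P\brk*{X<q}^z\le p^z=\prn*{1-\tfrac{\ln(1/\delta)}{n}}^z .
\end{align*}
Since $1-u<e^{-u}$ for $u\neq0$, and here $u=\ln(1/\delta)/n>0$, this is strictly less than $\exp\prn*{-z\ln(1/\delta)/n}$. Since $z\ge n$, that exponential is at most $\exp\prn*{-\ln(1/\delta)}=\delta$. This gives the strict inequality claimed.

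The only delicate step is the first one: passing from the definition of $\Q{\PP}{p}$ to the bound $\P\brk*{X<q}\le p$. This requires using minimality of the quantile together with continuity from below. One must not use the bound $\P\brk*{X<q}<p$, which can fail when the CDF jumps at $q$. The strictness in the final claim must come from the exponential inequality instead, and that is why we need $\ln(1/\delta)>0$. Everything else is routine.
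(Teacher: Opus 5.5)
Your proof is correct and follows essentially the same route as the paper: independence gives $\P[X<q]^z\le p^z$, and then $z\ge n$ yields $p^z\le\delta$; the paper proves the equivalent inequality $\ln\frac{n}{n-\ln(1/\delta)}\ge\frac{\ln(1/\delta)}{n}$ by a harmonic-sum comparison, whereas you invoke $1-u\le e^{-u}$ directly. Your treatment of strictness is more careful than the paper's: the paper asserts $\P[X<q]^z<p^z$, which fails whenever the CDF is continuous at $q$ (then $\P[X<q]=p$), and it only derives $p^z\le\delta$, whereas you correctly use $\P[X<q]\le p$ and obtain the strict inequality from $1-u<e^{-u}$.
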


\begin{proof}
	Let $p=1-\frac{\ln(1/\delta)}{n}$.
	For $\max\prn*{ X_1, \dots, X_z } < \Q{\PP}{p}$ to hold, every $X_1, \dots, X_z$ must be less than $\Q{\PP}{p}$.
	The probability of that is less than $p^z$.
	A well-known inequality states that for any $k>0$ we have that $\ln\prn*{ k+1 } - \ln\prn*{ k } = \int_k^{k+1} \frac{1}{t}\,dt \ge \frac{1}{k+1}$.
	Thus,
	\begin{align*}
		\ln\prn*{ \frac{n}{n - \ln(1/\delta)} }
		&= \prn*{\ln\prn*{ n } - \ln\prn*{ n-1 }} + \cdots + \prn*{\ln\prn*{ \ceil*{n - \ln(1/\delta)} } - \ln\prn*{ n - \ln(1/\delta) }}\\
		&\ge \frac{1}{n} + \frac{1}{n-1} + \cdots + \frac{1}{\ceil*{n - \ln(1/\delta)}}\\
		&\ge \frac{\ln(1/\delta)}{n}.
	\end{align*}
	Therefore, since $z\ge n$,
	\begin{align*}
		z \ge \ln\prn*{1/\delta} / \ln\prn*{\frac{n}{n - \ln(1/\delta)} }.
	\end{align*}
	As a consequence,
	\begin{align*}
		z\ln\prn*{\frac{n}{n - \ln(1/\delta)} } \ge \ln\prn*{1/\delta}.
	\end{align*}
	Thus,
	\begin{align*}
		\prn*{1 - \frac{\ln(1/\delta)}{n} }^z \le \delta,
	\end{align*}
	Finally, this means that
	\begin{align*}
		\P\brk*{ \max\prn*{ X_1, \dots, X_z } < \Q{\PP}{p} } < p^z \le \delta.
	\end{align*}
\end{proof}

\subsection{How many samples are needed to obtain $N$ good samples}

\begin{lemma}
	\label{lem: redraw samples}
	Let $a,M,k,N \ge 0$ such that $N - a \cdot e > 0$, and for $X\sim\DD$ we have $\P\brk*{X > M} \le \frac{a}{N}$.
	If we draw at least
	\begin{align*}
		k\cdot N + \max\prn*{1, \frac{\ln\prn*{1/\delta} + a \cdot e \cdot k }{N - a \cdot e} \cdot N}
	\end{align*}
	random variables i.i.d. from $\DD$, then with probability at least $1-\delta$, at least $k\cdot N$ of the drawn variables are at most $M$.
\end{lemma}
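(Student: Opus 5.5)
The number of variables $Y$ among the first $Z$ draws that are at most $M$ is a sum of i.i.d.\ Bernoulli indicators. Each has success probability $q\ge 1-\frac{a}{N}$, so $Y$ stochastically dominates $\mathrm{Bin}(Z,1-\frac{a}{N})$. Since drawing more variables only increases the count, I will take $Z$ equal to the stated threshold, rounded up. The failure event $\{Y<kN\}$ is the event that the number of ``bad'' draws $B=Z-Y$ exceeds $Z-kN$. Write $r\defeq Z-kN\ge\max\prn*{1,\frac{\ln(1/\delta)+aek}{N-ae}N}$. Failure then means $B>r$, and in particular $B\ge r$. Here $B$ is dominated by $\mathrm{Bin}(Z,\frac{a}{N})$.

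I bound $\P\brk*{B\ge r}$ with the exponential Chebyshev inequality (\Cref{thm: exp Chebyshev}) using the fixed choice $t=1$, which is what the constant $e$ in the statement suggests. First I bound the moment generating function:
\begin{align*}
	\E\brk*{e^{B}}\le\prn*{1+\tfrac{a}{N}(e-1)}^{Z}\le \exp\prn*{\tfrac{a}{N}(e-1)Z}\le\exp\prn*{\tfrac{ae}{N}Z}.
\end{align*}
Substituting $Z=kN+r$ gives
\begin{align*}
	\P\brk*{B\ge r}\le\exp\prn*{-r+aek+\tfrac{ae}{N}r}=\exp\prn*{-r\tfrac{N-ae}{N}+aek}.
\end{align*}
The hypothesis $N-ae>0$ makes the coefficient of $r$ negative. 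The condition $r\ge\frac{\ln(1/\delta)+aek}{N-ae}N$ is exactly $r\frac{N-ae}{N}\ge\ln(1/\delta)+aek$, so the exponent is at most $-\ln(1/\delta)$ and the probability is at most $\delta$.

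The remaining work is bookkeeping. The $\max(1,\cdot)$ only ensures that at least one extra draw is made, and it does not affect the argument. Rounding up to an integer only increases $r$, which is harmless. The stochastic dominance step needs a short justification: each indicator $\indic{X_i>M}$ has mean at most $\frac{a}{N}$, and $\E\brk*{e^{\indic{X_i>M}}}\le 1+\frac{a}{N}(e-1)$ follows directly, so I can skip dominance entirely and bound the MGF factor by factor using independence.

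The only real care point is choosing $t$ so that the threshold matches the stated form. $t=1$ gives exactly $\frac{\ln(1/\delta)+aek}{N-ae}N$, after bounding $e-1\le e$. Edge cases such as $a=0$, where the bound becomes trivial, should be checked.
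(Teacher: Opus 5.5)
Your proposal is correct and matches the paper's proof: both sum the indicators of $\{X_i > M\}$, apply the exponential Chebyshev inequality with $t=1$, factor the moment generating function by independence, and use $1+x\le e^x$ to reach $\exp\prn*{-r + ae(k + r/N)}$, then solve for $r$. Your per-factor bound $1+\frac{a}{N}(e-1)$ is marginally tighter than the paper's $1+\frac{a}{N}e$, but you relax it to the same expression, so the argument is the same.
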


\begin{proof}
	We define the random variable $Z_{i}$ as
	\begin{align*}
		Z_{i} = \begin{cases}
			1 & \text{if } X_i > M\\
			0 & \text{otherwise}
		\end{cases}.
	\end{align*}
	We have
	\begin{align}
		\label{eq: Pr Z=1 additional}
		\P\brk*{Z_{i}=1} = \P\brk*{X_i > M} \le \frac{a}{N}.
	\end{align}

	Let $r\ge1$ be the number of additional samples we draw in addition to the $k\cdot N$ we want.
	Using the exponential Chebyshev's inequality (see \Cref{thm: exp Chebyshev}), we obtain, for all $t>0$,
	\begin{align*}
		\P\brk*{\sum_{i=1}^{k\cdot N + r} Z_{i} \ge r}
		&\le e^{-t r} \E\brk*{ e^{t \sum_{i=1}^{k\cdot N + r} Z_{i} } } \\
		&\overset{(1)}{=} e^{-t r} \prod_{i=1}^{k\cdot N + r} \E\brk*{ e^{t Z_{i} } } \\
		&\le e^{-t r} \prod_{i=1}^{k\cdot N + r} \prn*{ 1 \cdot e^{0} + \P\brk*{Z_{i}=1} e^{t} }\\
		&\overset{(2)}{\le} e^{-t r} \prn*{ 1 + \frac{ a \cdot e^{t} }{N}  }^{k\cdot N + r}\\
		&\le e^{-t r + a \cdot e^{t} \prn*{ k \cdot N + r } / N },
	\end{align*}
	where $(1)$ is because $Z_{i}$ are independent of each other, and $(2)$ is by \Cref{eq: Pr Z=1 additional}.
	Setting $t = 1$ yields
	\begin{align*}
		\P\brk*{\sum_{i=1}^{k\cdot N + r} Z_{i} \ge r}
		&\le e^{-r + a \cdot e \prn*{ k + r/N } }.
	\end{align*}
	We obtain
	\begin{align*}
		e^{-r + a \cdot e \prn*{ k + r/N } } \le \delta
	\end{align*}
	if and only if
	\begin{align*}
		r \prn*{1 - a \cdot e / N } \ge \ln\prn*{1/\delta} + a \cdot e \cdot k.
	\end{align*}
	Therefore, if $a < N / e$ and
	\begin{align*}
		r \ge \max\prn*{1, \frac{\ln\prn*{1/\delta} + a \cdot e \cdot k }{N - a \cdot e}\cdot N},
	\end{align*}
	we obtain
	\begin{align*}
		\P\brk*{\sum_{i=1}^{k\cdot N + r} Z_{i} \ge r}
		\le \delta.
	\end{align*}
	This implies that, with probability at least $1 - \delta$, fewer than $r$ samples exceed $M$, and hence at least $k \cdot N$ samples are at most $M$.
\end{proof}

\subsection{Empirical mean approximates the expectation}

\begin{lemma}
	\label{lem: bound on average}
	Let $\norm{\cdot}$ be a $\prn*{2,D}$-smooth (\Cref{def: 2-D-smooth}) norm for some $D\ge1$.
	Let $V_1, \dots, V_n$ be a sequence of i.i.d. random vectors in $\R^d$. Define $v\defeq\E[V_i]$.
	If $\norm{V_i - v} \le C$ almost surely for some constant $C$, then for all $\delta\in\prn*{0,\frac{2}{e}}$,
	\begin{align*}
		\P\brk*{ \norm*{\frac{1}{n} \sum_{i=1}^{n} V_i- v} \ge \frac{ D C \sqrt{2\ln \prn*{2/\delta}} }{\sqrt{n}} } \le \delta.
	\end{align*}
\end{lemma}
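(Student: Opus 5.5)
The plan is to reduce to \Cref{thm: howard 10}, applied to the martingale of partial sums with $\Psi=\norm{\cdot}$. Define $Y_0\defeq 0$ and $Y_t\defeq \sum_{i=1}^{t}\prn*{V_i - v}$ for $t\le n$, and $Y_t\defeq Y_n$ for $t>n$. Since the $V_i$ are i.i.d. with mean $v$, $(Y_t)$ is a martingale with respect to the natural filtration, and it takes values in the finite-dimensional (hence separable Banach) space $\R^d$. The increments satisfy $\norm{Y_t - Y_{t-1}} = \norm{V_t - v}\le C$ for $t\le n$ and vanish afterwards. We may therefore take $c_t = C$ for $t\le n$ and $c_t=0$ otherwise, so that $V_n = nC^2$; here $V_t$ denotes the variance proxy of \Cref{thm: howard 10}, not the random vector.

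Next, $\Psi(x)=\norm{x}$ is $(2,D)$-smooth with $D\ge1$, so $D_\star = D$. Apply \Cref{thm: howard 10} with $m\defeq nC^2$ and $x\defeq DC\sqrt{2n\ln(2/\delta)}$. At the particular time $t=n$ we have $V_n - m = 0$, so the event inside the theorem contains $\crl*{\norm{Y_n}\ge x}$. Hence
\begin{align*}
\P\brk*{\norm{Y_n}\ge x} \le 2\exp\prn*{-\frac{x^2}{2D^2 nC^2}} = 2\exp\prn*{-\ln(2/\delta)} = \delta.
\end{align*}
Dividing by $n$ gives $\norm{Y_n}/n = \norm{\frac1n\sum_i V_i - v}$ and $x/n = DC\sqrt{2\ln(2/\delta)}/\sqrt n$, which is exactly the claimed bound.

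The only points needing care are technical. First, the $(2,D)$-smoothness hypothesis is stated for the norm itself, which matches \Cref{def: 2-D-smooth} with $\Psi(0)=0$ and $\Psi$ 1-Lipschitz. Second, the constants $c_t$ must be deterministic, which holds by the almost-sure bound. Third, the theorem's time index runs over all of $\N$, so the martingale has to be extended by freezing it after time $n$. The restriction $\delta<2/e$ only ensures $\ln(2/\delta)>1$ and is not otherwise needed. If $C=0$, the statement is trivial: $V_i=v$ almost surely, so the empirical mean equals $v$ and the probability of a deviation of at least $0$ is handled by taking $C$ to be any small positive constant and letting it tend to $0$. I do not expect a genuine obstacle here; the main step is matching parameters to the form of \Cref{thm: howard 10}.
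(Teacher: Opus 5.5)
Your proof is correct and follows the paper's argument: both apply \Cref{thm: howard 10} with $\Psi=\norm{\cdot}$ only at time $t=n$, choosing the free parameter equal to the variance proxy there so the linear correction term vanishes. The paper builds the $1/n$ normalization into the martingale (increments bounded by $C/n$, $z=C^2/n$) while you rescale at the end, and freezing the martingale after time $n$ is a slightly cleaner way to handle the infinite index set than the paper's implicit extension.

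One correction to your aside on $C=0$. There the threshold is $0$ and $\P\brk*{\norm{\frac{1}{n}\sum_{i=1}^{n} V_i - v}\ge 0}=1>\delta$, so the lemma genuinely needs $C>0$, which the paper's choice $z=C^2/n>0$ also tacitly assumes. Your limiting argument only yields $\P\brk*{\norm{\frac{1}{n}\sum_{i=1}^{n} V_i - v}>0}\le\delta$, not the stated bound.
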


\begin{proof}
	For any $k\in\N$ define $Y_k\defeq \frac{1}{n} \sum_{i=1}^{k} \prn*{V_i - v}$.
	Denote $Y_0=0$ and $\Psi\prn*{\cdot}\defeq\norm{\cdot}$.
	We have $\norm{Y_i - Y_{i-1}} \le C/n$ for all $i\in\N$, and $\crl*{Y_i}_{i\in\N}$ is a martingale.
	Thus, from Corollary 10a of \citet{howard2020time} (see \Cref{thm: howard 10}), for all $x,z>0$,
	\begin{align*}
		\P\brk*{ \exists t\in\N : \Psi\prn*{Y_t} \ge x + \frac{D^2 x}{2z} \cdot \prn*{ \frac{tC^2}{n^2} - z }} \le 2 \exp\prn*{ -\frac{x^2}{2D^2 z} }.
	\end{align*}
	Therefore, for $x=D C \sqrt{2\ln \prn*{2/\delta}} / \sqrt{n}$ and $z=C^2/n$, we have
	\begin{align*}
		\P\brk*{ \norm*{\frac{1}{n} \sum_{i=1}^{n} V_i - v} \ge \frac{ D C \sqrt{2 \ln \prn*{2/\delta}} }{\sqrt{n}} } \le 2 \exp\prn*{ -\ln \prn*{2/\delta} } = \delta.
	\end{align*}
\end{proof}

\section{The mean of stochastic gradients approximates the true gradient}
\label{sec: stochastic gradients approx gradient}

\begin{lemma}
	\label{lem: stochastic gradients approx gradient}
	Let $d\in\N$ and let $\X\subseteq\R^d$.
	\begin{enumerate}
		\item In the case that $\norma{\cdot} = \norm{\cdot}_2$, \Cref{asm: grad approx} holds for $\psi_\alpha=1$ and any $\phi\ge1$.
		\item In the case that $\norma{\cdot} = \norm{\cdot}_1$, \Cref{asm: grad approx} holds for $\psi_\alpha=d$ and any $\phi\ge1$.
	\end{enumerate}
\end{lemma}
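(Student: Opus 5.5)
We will apply \Cref{lem: bound on average} to the i.i.d. random vectors $V_i \defeq \nabla \ft{x}{s_i}$, whose mean is $v \defeq \E_{s\sim\tilde{\PP}} \nabla\ft{x}{s} = \nabla \E_{s\sim\tilde{\PP}}\ft{x}{s}$. The interchange of gradient and expectation is justified by dominated convergence, since $\ft{\cdot}{s}$ is $\fL$-Lipschitz almost surely. The approach is to check the boundedness hypothesis $\norm{V_i - v}\le C$ with $C$ a constant multiple of $\fL$, and to choose a $(2,D)$-smooth norm that controls the dual norm $\normas{\cdot}$. The resulting deviation bound is $D C\sqrt{2\ln(2/\delta)}/\sqrt{N}$, and the threshold $594\cdot\frac{\phi}{2}\cdot \fL\sqrt{\ln(\psi_\alpha/\delta)}/\sqrt{N}$ is very generous. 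It therefore suffices to show $D C\sqrt{2\ln(2/\delta)} \le 297\,\fL\sqrt{\ln(\psi_\alpha/\delta)}$ for all $\delta\in(0,1/5)$ and $\phi\ge1$.

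\textbf{Euclidean case.} The dual norm is $\norm{\cdot}_2$, which is $(2,1)$-smooth. Because $\ft{\cdot}{s}$ is $\fL$-Lipschitz with respect to $\norm{\cdot}_2$ a.s., we have $\norm{V_i}_2\le \fL$ and $\norm{v}_2\le\fL$, hence $C=2\fL$. \Cref{lem: bound on average} applies since $\delta<1/5<2/e$, and gives a deviation of at most $2\fL\sqrt{2\ln(2/\delta)}/\sqrt{N}$ with probability at least $1-\delta$. For $\delta<1/5$ we have $\ln(2/\delta)\le 2\ln(1/\delta)$, because $\ln 2\le\ln 5<\ln(1/\delta)$. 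This bounds the deviation by $4\fL\sqrt{\ln(1/\delta)}/\sqrt{N}$, which is well below the threshold with $\psi_\alpha=1$.

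\textbf{$\ell_1$ case.} The dual norm is $\norm{\cdot}_\infty$, which is not smooth, so we pass through $\norm{\cdot}_p$ with $p=\max\{2,\ln d\}$. This norm is $(2,\sqrt{p-1})$-smooth, a standard fact from \citet{pinelis1994optimum}, and it satisfies $\norm{u}_\infty\le\norm{u}_p\le d^{1/p}\norm{u}_\infty\le e\norm{u}_\infty$. The Lipschitz assumption gives $\norm{V_i}_\infty\le\fL$, hence $\norm{V_i-v}_p\le 2e\fL$. \Cref{lem: bound on average} then yields, with probability at least $1-\delta$,
\begin{align*}
\norm*{\tfrac1N\textstyle\sum_i V_i - v}_\infty \le \norm*{\tfrac1N\textstyle\sum_i V_i - v}_p \le \frac{2e\sqrt{\max\{1,\ln d\}}\,\fL\sqrt{2\ln(2/\delta)}}{\sqrt N}.
\end{align*}
It remains to compare $\sqrt{\max\{1,\ln d\}\ln(2/\delta)}$ with $\sqrt{\ln(d/\delta)}$. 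Since $\delta<1/5$ gives $\ln(1/\delta)>1$, we have $\max\{1,\ln d\}\cdot\ln(2/\delta) \le 2\max\{1,\ln d\}\ln(1/\delta)$. Writing $a=\max\{1,\ln d\}$ and $b=\ln(1/\delta)$, if $d\ge 3$ then $ab\le(a+b)^2/4\le\ln^2(d/\delta)/4$, a bound in terms of the square of the logarithm. This is a warning that the comparison needs care: we need $ab\lesssim a+b$, not $(a+b)^2$.

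\textbf{Main obstacle.} The product-versus-sum issue is genuine: $\ln d\cdot\ln(1/\delta)$ is not bounded by $O(\ln(d/\delta))$ in general. The fix is to avoid multiplying the smoothness constant by $\ln(1/\delta)$. For $\norm{\cdot}_\infty$ we instead use a coordinatewise Hoeffding bound combined with a union bound over the $d$ coordinates. Each coordinate of $V_i-v$ lies in an interval of length at most $2\fL$, so with probability at least $1-\delta$ the deviation is at most $2\fL\sqrt{2\ln(2d/\delta)}/\sqrt N$. Moreover $\ln(2d/\delta)\le2\ln(d/\delta)$ for $\delta<1/5$, giving $4\fL\sqrt{\ln(d/\delta)}/\sqrt N$. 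This is well within the stated threshold for $\psi_\alpha=d$ and any $\phi\ge1$. I expect this route, with the union bound replacing the $p$-norm trick, to be the cleanest, and the only delicate points to be the constant bookkeeping and the $\delta<1/5$ conversions.
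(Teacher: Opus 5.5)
Your proof is correct and follows the paper's argument. In the Euclidean case you apply \Cref{lem: bound on average} with the $(2,1)$-smooth norm $\norm{\cdot}_2$ and $C=2\fL$. In the $\ell_1$ case you bound each coordinate of the gradient error separately and take a union bound over the $d$ coordinates to control $\norm{\cdot}_\infty$; the paper does the same, except that it reuses the one-dimensional instance of \Cref{lem: bound on average} where you invoke Hoeffding. Your explicit conversions $\ln(2/\delta)\le 2\ln(1/\delta)$ and $\ln(2d/\delta)\le 2\ln(d/\delta)$ for $\delta<1/5$, and your explanation of why the $\norm{\cdot}_p$ detour yields $\ln d\cdot\ln(1/\delta)$ instead of $\ln(d/\delta)$, fill in details the paper leaves implicit without changing the approach.
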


We first prove \Cref{lem: stochastic gradients approx gradient} in the case that $\norma{\cdot} = \norm{\cdot}_2$.
\begin{proof}
	In the case that $\norma{\cdot} = \norm{\cdot}_2$ then we also have that $\normas{\cdot} = \norm{\cdot}_2$.
	We note that $\norm{\cdot}_2$ is $\prn*{2,1}$-smooth.
	Therefore, as $f\prn*{\cdot,s}$ is $\fL$-Lipschitz for every $s$ sampled from $\tilde{\PP}$, a direct result of \Cref{lem: bound on average} is that
	\begin{align*}
		\mathop{\P}_{s_1,\dots,s_N\iid\tilde{\PP}}\brk*{ \norm*{\nabla \E_{s\sim\tilde{\PP}} \ft{x}{s} - \frac{1}{N} \sum_{i=1}^{N} \nabla \ft{x}{s_i}}_2 \ge \frac{ \fL \sqrt{8\ln \prn*{2/\delta}} }{\sqrt{N}} } \le \delta.
	\end{align*}
	Thus, \Cref{asm: grad approx} holds for $\psi_\alpha=1$ and any $\phi\ge1$.
\end{proof}

We now prove \Cref{lem: stochastic gradients approx gradient} in the case that $\norma{\cdot} = \norm{\cdot}_1$.
\begin{proof}
	In the case that $\norma{\cdot} = \norm{\cdot}_1$ then we also have that $\normas{\cdot} = \norm{\cdot}_\infty$.
	Therefore, following the result of the case that $\norma{\cdot} = \norm{\cdot}_2$, for every coordinate $j\in\brk*{d}$,
	\begin{align*}
		\mathop{\P}_{s_1,\dots,s_N\iid\tilde{\PP}}\brk*{ \abs*{ \brk*{ \nabla \E_{s\sim\tilde{\PP}}\ft{x}{s} - \frac{1}{N} \sum_{i=1}^{N} \nabla \ft{x}{s_i} }_j } \ge \frac{ \fL \sqrt{8\ln \prn*{2d/\delta}} }{\sqrt{N}} } \le \frac{\delta}{d}.
	\end{align*}
	Thus, by a union bound we obtain that
	\begin{align*}
		\mathop{\P}_{s_1,\dots,s_N\iid\tilde{\PP}}\brk*{ \norm*{ \nabla \E_{s\sim\tilde{\PP}} \ft{x}{s} - \frac{1}{N} \sum_{i=1}^{N} \nabla \ft{x}{s_i} }_\infty \ge \frac{ \fL \sqrt{8\ln \prn*{2d/\delta}} }{\sqrt{N}} } \le \delta.
	\end{align*}
	As a consequence, \Cref{asm: grad approx} holds for $\psi_\alpha=d$ and any $\phi\ge1$.
\end{proof}

\section{Proof of \Crefpro{thm: clipped model subopt} for the case that $m$ is second-moment-Lipschitz}
\label{sec: sm-lip selection theorem}

The following theorem is a slight generalization of \Cref{thm: clipped model subopt}; the choice $\hat{a}=1$ recovers \Cref{thm: clipped model subopt}.
\begin{proposition}
	Let $m\in\IsLip[\lell][\infty][\normo{\cdot}]$ and $h\in\ILip[0][1][\normbs{\cdot}]$.
	Let $\hat{a}\in\brk*{1, \abs{\Grid} \log_2^2\prn*{2\sqrt{N} \cdot \uell/\lell}}$, where $\uell$ is an upper bound on the Lipschitz constant.
	Let the number of samples be $N\ge 4 \ln \prn*{ 4\hat{a}/\delta }$.
	If $x,c$ are the result of \Cref{alg: clipping around initial}, where we use $n=\ceil{4 \ln \prn*{ 4\hat{a}/\delta }}$, then with probability at least $1-\delta$ we have
	\begin{align*}
		\Ft{x}{c} \le \min_{g \in \Grid\cup\crl*{0}} \brk* { \Fo{x_g} + \O{ \Lopt \norma{x_g - x_0} \sqrt{\frac{\lnp\prn*{ \tilde{a} / \delta} }{N} } \sqrt{\frac{\lnp\prn*{ \tilde{a} / \delta} }{ \lnp\prn*{ \hat{a} / \delta} } } } },
	\end{align*}
	where $\tilde{a} = \max\prn*{\hat{a}, \abs{\Grid} \lnp\prn*{\sqrt{N} \Lopt/\lell}}$.
\end{proposition}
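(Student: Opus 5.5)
We follow the outline given in the main text, mirroring the proof of \Cref{thm: clipped model subopt lip}. For each $g\in\Grid$ with $x_g\neq x_0$ we introduce the auxiliary clipping level
\[
c_{g,0}\defeq \al[\sqrt{\tfrac{N}{\ln(4\hat{a}/\delta)}}\,L\opt\norma{x_g-x_0}][\lell\norma{x_g-x_0}],
\]
which lies in $\crl*{2^k\lell\norma{x_g-x_0}}_{k\in\N}$. We consider the augmented candidate set $\crl*{(x_g,c_{g,i})}_{i\in\brk*{n}\cup\crl*{0}}$. \Cref{lem: condition hold for algorithm} applies to this set and gives the validity of all $\tau_{g,i}$ simultaneously with probability $1-\delta/2$. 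The proof then has three parts: bound the bias $\abs{\Ft{x_g}{c_{g,0}}-\Fo{x_g}}$, bound $\tau_{g,0}$, and show that adding $(x_g,c_{g,0})$ does not change what \Crefn{alg: reliable model selection} can output.

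For the bias, let $Z_g(s)\defeq\normb{\mt{x_g}{s}-\mt{x_0}{s}}$. Clipping only changes the prediction when $Z_g>c_{g,0}$. Since $h$ is $1$-Lipschitz, the change in loss on that event is at most $Z_g-c_{g,0}\le Z_g\indic{Z_g>c_{g,0}}\le Z_g^2/c_{g,0}$. By \Cref{lem: m diff sm-Lip}, $\E Z_g^2\le L\opt^2\norma{x_g-x_0}^2$, so the bias is at most $L\opt\norma{x_g-x_0}\sqrt{\ln(4\hat{a}/\delta)/N}$.

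For $\tau_{g,0}$, the summands $\fl{x_g}{c_{g,0}}{s}-\ft{x_0}{s}$ lie in $[-c_{g,0},c_{g,0}]$ and have second moment at most $\E Z_g^2\le \Lopt^2\norma{x_g-x_0}^2$. By \Cref{thm: maurer 10}, with an extra $\delta/(4|\Grid|)$ failure probability per $g$, the empirical standard deviation satisfies $\sqrt{\ve{x_g}{c_{g,0}}}\le \Lopt\norma{x_g-x_0}+c_{g,0}\sqrt{8\ln(4|\Grid|/\delta)/(N-1)}$. We also need $\at{x_g}{c_{g,0}}\le|\Grid|\log_2^2\prn*{4\sqrt{N}\Lopt/\lell}\lesssim\tilde a$. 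Substituting both into \Cref{eq: tau def} and using $c_{g,0}\le 2\sqrt{N/\ln(4\hat a/\delta)}\,\Lopt\norma{x_g-x_0}$ gives
\[
\tau_{g,0}\lesssim \Lopt\norma{x_g-x_0}\sqrt{\ln(\tilde a/\delta)/N}\,\sqrt{\ln(\tilde a/\delta)/\ln(\hat a/\delta)}.
\]
This is the claimed rate.

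The main obstacle is showing that the fictitious candidate does not change the output. First we show $c_{g,0}\ge c_{g,n}$ with high probability. Consider the multiset of binary-ceiled deviations, and suppose its $n$-th largest element exceeds $c_{g,0}$. Then at least $n$ samples have $Z_g>c_{g,0}$. By Chebyshev, each sample has $Z_g>c_{g,0}$ with probability at most $\ln(4\hat a/\delta)/N$. With $n=\ceil{4\ln(4\hat a/\delta)}$, a Chernoff/exponential-Chebyshev count in the style of \Cref{lem: redraw samples} makes the event that $n$ samples exceed $c_{g,0}$ have probability at most $\delta/(8|\Grid|)$. On the complement, the clippings at $c_{g,0}$ and $c_{g,n}$ differ on at most $n-1$ validation samples, each by at most $2c_{g,0}$. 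Hence the validation losses differ by at most $2c_{g,0}(n-1)/N\lesssim\Lopt\norma{x_g-x_0}\sqrt{\ln(4\hat a/\delta)/N}$.

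We therefore inflate $\tau_{g,0}$ by this amount; it remains valid and keeps the same order. This yields $\bar F(g,n)+\tau_{g,n}\le\bar F(g,0)+\tau_{g,0}$, which also needs $\tau_{g,n}\le\tau_{g,0}$ up to the inflation. That inequality holds because the variance term is similar and $c_{g,n}\le c_{g,0}$. Consequently $\theta$ can only be attained, up to ties, by original candidates, so $\F$ computed with the augmented set is the original $\F$ plus possibly $(x_g,c_{g,0})$. The returned original pair therefore lies in the augmented $\F$. \Cref{lem: reliable selection results gen} (with $\gamma=1$) then bounds $\Ft{x}{c}$ by $\min_{g}\brk*{\Ft{x_g}{c_{g,0}}+2\tau_{g,0}}$. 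Combining this with the bias bound and a union bound over all failure events, with the constants in $\delta$ adjusted, finishes the proof.
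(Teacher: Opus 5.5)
\paragraph{Gap: one counting event per grid point.} You use the same ingredients as the paper: a fictitious level $c_{g,0}$, the bias bound via $\E Z_g^2/c_{g,0}$, Maurer's Theorem 10, inflating $\tau_{g,0}$ so the fictitious pair is dominated, and \Cref{lem: reliable selection results gen}. The problem is that you add a fictitious pair $(x_g,c_{g,0})$ for \emph{every} $g\in\Grid$. You therefore need, simultaneously for all $g$, that fewer than $n$ validation samples have binary-ceiled deviation above $c_{g,0}$.

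You assert each such event fails with probability at most $\delta/(8\abs{\Grid})$, but this does not follow. The per-sample exceedance probability is only bounded by $\ln(4\hat a/\delta)/N$, so the count is essentially Poisson with mean $\ln(4\hat a/\delta)$. The exponential-Chebyshev bound then gives failure probability $\delta/(4\hat a)$, or a fixed power of it with optimized $t$, with no dependence on $\abs{\Grid}$. Since $n=\ceil{4\ln(4\hat a/\delta)}$ does not grow with $\abs{\Grid}$, the union bound over $g$ breaks once $\abs{\Grid}$ is large relative to $\hat a/\delta$. The statement must hold for every admissible $\hat a$, including $\hat a=1$ (which is \Cref{thm: clipped model subopt}). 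As written, your argument only covers $\hat a\ge 2\abs{\Grid}$.

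This failure matters. If the counting event fails for some $g$, $c_{g,0}$ can lie below every element of $\C_g$, and that fictitious pair can strictly lower $\theta$. The pair actually returned then need not belong to the augmented $\F$, so \Cref{lem: reliable selection results gen} says nothing about it. Your Maurer union bound over $g$, by contrast, is harmless: there the failure probability only enters through $\ln(\abs{\Grid}/\delta)\lesssim\ln(\tilde a/\delta)$.

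\paragraph{Repair.} The paper adds a \emph{single} fictitious candidate. Let $g_\star$ minimize $\Fo{x_g}+C\,\Lopt\norma{x_g-x_0}\sqrt{\ln(16\tilde a/\delta)/N}\,\sqrt{\ln(16\tilde a/\delta)/\ln(4\hat a/\delta)}$ over $g\in\Grid\cup\crl{0}$, for a suitable absolute constant $C$. Because the candidates are independent of the validation samples, $g_\star$ is fixed given them. Only one counting event (failure probability at most $\delta/(4\hat a)\le\delta/4$) and one variance event are then needed. \Cref{lem: reliable selection results gen} yields $\Ft{x}{c}\le\Ft{x_{g_\star}}{c_{g_\star,0}}+2\tau_{g_\star,0}$, which by the choice of $g_\star$ is already the minimum over $g$ in the statement.

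\paragraph{Two smaller points.} First, $\mathrm{Clip}$ is an arbitrary map into the ball. On $\crl{Z_g>c_{g,0}}$ the prediction moves by at most $Z_g+c_{g,0}\le 2Z_g$, not $Z_g-c_{g,0}$; this costs only a factor $2$.

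Second, the empirical variance is not monotone in the clipping level, so ``the variance term is similar'' needs an argument. The paper makes the dominance automatic by defining $\tau_{g_\star,0}$ as the maximum of $\te{x_{g_\star}}{c_{g_\star,0}}$ and the competing width plus the slack. You still need an upper bound on the competing width. Note that the two vectors of clipped differences agree except in fewer than $n$ coordinates, each moved by at most $2c_{g,0}$. Hence $\sqrt{v}$ changes by at most $2c_{g,0}\sqrt{(n-1)/(N-1)}$, which is again of the claimed order.
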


\begin{proof}
	Let $\hat{a}\in\brk*{1, \abs{\Grid} \log_2^2\prn*{2\sqrt{N} \cdot \uell/\lell}}$.
	Let $n$ in \Cref{alg: clipping around initial} be $n=4 \ln \prn*{ 4\hat{a}/\delta}$.
	Finally, define $\tilde{a} = \max\prn*{\hat{a}, \abs{\Grid} \log_2^2 \prn*{2 \sqrt{N} \cdot \Lopt/\lell}}$.

	First, as $m\in\IsLip[\lell][L\opt][\normo{\cdot}]$, \Cref{lem: m diff sm-Lip} states that for all $g\in\Grid$,
	\begin{align}
		\label{eq: sm-Lipschitz of function}
		\E_{s\sim \PP} \brk*{ \normb{ \mt{x_{g}}{s} - \mt{x_{0}}{s} }^2 }
		\le L\opt^2 \norma{ x_{g} - x_{0} }^2.
	\end{align}

	Let
	\begin{align}
		\label{eq: gopt def}
		g\opt = \argmin_{g \in \Grid\cup\crl*{0}} \Fo{x_g} + 94 \Lopt \norma{ x_{g} - x_{0} } \sqrt{\frac{\ln\prn*{ 16\tilde{a} / \delta}}{N}} \sqrt{ \frac{\ln\prn*{16  \tilde{a} / \delta}}{\ln\prn*{4  \hat{a} / \delta}} }.
	\end{align}
	Define $c_{g\opt,0}\defeq \al[\sqrt{\frac{N}{\ln \prn*{ 4\hat{a}/\delta }}} L\opt \norma{x_{g\opt} - x_0}][\lell \norma{x_{g\opt} - x_0}]$.
	For candidate $\prn*{x_{g\opt}, c_{g\opt,0}}$ we define $\tau_{g\opt,0}$ later in the proof.
	In the proof we use $\crl*{\prn*{ x_g, c_{g,i} }}_{\prn*{g,i}\in \Grid \times \brk*{n}}\cup\crl*{\prn*{x_{g\opt}, c_{g\opt,0}}}$ as the set of candidates passed to \Crefn{alg: reliable model selection}.
	We show that, with high probability, we can assume that \Cref{alg: reliable model selection} returns the same output under both the original set of candidates and the modified set of candidates.

	We define the random variable $Z_{i}$ as
	\begin{align*}
		Z_{i} = \begin{cases}
			1 & \text{if } \al[\normb{ \mt{x_{g\opt}}{s_i} - \mt{x_0}{s_i} }][\lell \norma{x_{g\opt} - x_0}] > c_{g\opt,0}\\
			0 & \text{otherwise}
		\end{cases}.
	\end{align*}
	We now find an upper bound on $\P\brk*{Z_{i}=1}$.
	First, we bound the variance $\Var_{s\sim \PP}\brk*{\normb{\mt{x_{g\opt}}{s} - \mt{x_0}{s}}}$:
	\begin{align*}
		\Var_{s\sim \PP}\brk*{\normb{\mt{x_{g\opt}}{s} - \mt{x_0}{s}}}
		\le \E_{s\sim \PP}\brk*{\normb{\mt{x_{g\opt}}{s} - \mt{x_0}{s}}^2}
		\le \prn*{L\opt \norma{ x_{g\opt} - x_{0} } }^2,
	\end{align*}
	where the last inequality is the result of \Cref{eq: sm-Lipschitz of function}.
	Therefore,
	\begin{align*}
		\label{eq: Pr Z=1}
		\P\brk*{Z_{i}=1}
		&= \P\brk*{ \al[\normb{\mt{x_{g\opt}}{s_i} - \mt{x_0}{s_i}}][\lell \norma{x_{g\opt} - x_0}] > c_{g\opt,0}}\\
		&\le \P\brk*{\normb{\mt{x_{g\opt}}{s_i} - \mt{x_0}{s_i}}>\sqrt{\frac{N}{\ln \prn*{ 4\hat{a}/\delta }}} L\opt \norma{ x_{g\opt} - x_{0} }}\\
		&\le \frac{ \ln \prn*{ 4\hat{a}/\delta } }{N}, \numberthis
	\end{align*}
	where the last inequality is a result of Chebyshev's inequality (see \Cref{thm: Chebyshev}).

	As a result, using the exponential Chebyshev's inequality (see \Cref{thm: exp Chebyshev}), we obtain, for all $t>0$ and $r>0$,
	\begin{align*}
		\P\brk*{\sum_{i=1}^{N} Z_{i} \ge r}
		&\le e^{-t r} \E\brk*{ e^{t \sum_{i=1}^{N} Z_{i} } } \\
		&\overset{(1)}{=} e^{-t r} \prod_{i=1}^{N} \E\brk*{ e^{t Z_{i} } } \\
		&\le e^{-t r} \prod_{i=1}^{N} \prn*{ 1 \cdot e^{0} + \P\brk*{Z_{i}=1} e^{t} }\\
		&\overset{(2)}{\le} e^{-t r} \prn*{ 1 + \frac{ e^{t} \cdot \ln \prn*{ 4\hat{a}/\delta } }{N}  }^{N}\\
		&\le e^{-t r + e^{t} \cdot \ln \prn*{ 4\hat{a}/\delta } },
	\end{align*}
	where $(1)$ is because $Z_{i}$ are independent of each other, and $(2)$ is by \Cref{eq: Pr Z=1}.
	Thus, by setting $t=1$ and $r=4 \ln \prn*{ 4\hat{a}/\delta}$ we obtain that
	\begin{align*}
		\P\brk*{\sum_{i=1}^{N} Z_{i} \ge 4 \ln \prn*{ 4\hat{a}/\delta}} \le e^{-\ln \prn*{ 4\hat{a}/\delta}} \le \frac{\delta}{4\hat{a}} \le \frac{\delta}{4}.
	\end{align*}
	This implies that with probability at least $1-\frac{\delta}{4}$ there are fewer than $4 \ln \prn*{ 4\hat{a}/\delta}$ samples $s_i$ for which $\mt{x_{g\opt}}{s_i}$ is clipped when $c_{g\opt,0}$ is used as the clipping threshold.
	As a result, with probability at least $1-\frac{\delta}{4}$, there exists $i\in\brk*{n}$ for which $c_{g\opt,0} \ge c_{g\opt,i}$.
	Let $i\opt=\argmax_{i\in\brk*{n};c_{g\opt,i}\le c_{g\opt,0}} c_{g\opt,i}$.
	The validation loss for both clipping parameters $c_{g\opt,0}$ and $c_{g\opt,i}$ is very close:
	\begin{align*}
		\abs*{ \frac{1}{N} \sum_{j=1}^{N} \fl{x_{g\opt}}{c_{g\opt,0}}{s_j} - \frac{1}{N} \sum_{j=1}^{N} \fl{x_{g\opt}}{c_{g\opt,i\opt}}{s_j} }
		&\le \frac{1}{N} \sum_{j=1}^{N} \normb{ \ml{x_{g\opt}}{c_{g\opt,0}}{s_{j}} - \ml{x_{g\opt}}{c_{g\opt,i\opt}}{s_{j}} }\\
		&\le 8 c_{g\opt,0} \frac{ \ln \prn*{ 4\hat{a}/\delta} }{ N },
	\end{align*}
	where the first inequality is because $\lt{\cdot}{ s}$ is 1-Lipschitz.
	Thus, from the definition of $c_{g\opt,0}$, we get
	\begin{align}
		\label{eq: validation almost same}
		\abs*{ \frac{1}{N} \sum_{j=1}^{N} \fl{x_{g\opt}}{c_{g\opt,0}}{s_j} - \frac{1}{N} \sum_{j=1}^{N} \fl{x_{g\opt}}{c_{g\opt,i\opt}}{s_j} }
		&\le  16 \Lopt \norma{x_{g\opt} - x_0} \sqrt{\frac{ \ln \prn*{ 4\hat{a}/\delta} }{ N }}.
	\end{align}

	For candidate $\prn*{x_{g\opt}, c_{g\opt,0}}$, we define $\tau_{g\opt,0}$ as follows:
	\begin{align*}
		\tau_{g\opt,0} \defeq
		\max\prn*{ \te{ x_{g\opt} }{ c_{g\opt,0} }, \tau_{g\opt,i\opt} + 16 \Lopt \norma{x_{g\opt} - x_0} \sqrt{\frac{ \ln \prn*{ 4\hat{a}/\delta} }{ N }} }.
	\end{align*}
	Therefore, from \Cref{eq: validation almost same},
	\begin{align*}
		\frac{1}{N} \sum_{j=1}^{N} \fl{x_{g\opt}}{c_{g\opt,i\opt}}{s_j} + \tau_{g\opt,i\opt} \le \frac{1}{N} \sum_{j=1}^{N} \fl{x_{g\opt}}{c_{g\opt,0}}{s_j} + \tau_{g\opt,0}.
	\end{align*}
	As a consequence, (i) the $\theta$ calculated in \Cref{alg: reliable model selection} is not affected by the candidate $\prn*{x_{g\opt}, c_{g\opt,0}}$, and (ii) for $\F$ calculated by \Cref{alg: reliable model selection}, if $\prn*{x_{g\opt}, c_{g\opt,0}}\in\F$ then also $\prn*{x_{g\opt}, c_{g\opt,i\opt}}\in\F$.
	Therefore, by using \Cref{lem: reliable selection results gen} instead of \Cref{lem: reliable selection results}, we can assume that if \Cref{alg: reliable model selection} should have returned the candidate $\prn*{x_{g\opt}, c_{g\opt,0}}$ it instead returned the candidate $\prn*{x_{g\opt}, c_{g\opt,i\opt}}$.
	This means that with probability at least $1-\frac{\delta}{4}$ \Cref{alg: reliable model selection} is not affected by the candidate $\prn*{x_{g\opt}, c_{g\opt,0}}$, i.e. we can assume that \Cref{alg: reliable model selection} returns the same output for both the original set of candidates and the modified set of candidates.

	We now move to show that $\prn*{x_{g\opt}, c_{g\opt,0}}$ is a good candidate.
	For $g\opt$ and $c_{g\opt,0}$, we cannot show that $\Ft{x_{g\opt}}{c_{g\opt,0}} = \Fo{x_{g\opt}}$.
	Therefore, we bound $\abs{ \Ft{x_{g\opt}}{c_{g\opt,0}} - \Fo{x_{g\opt}} }$ instead.
	For any $c\ge0$, $x\in\X$ and $s\in\S$ such that $\normb{ \mt{x}{s} - \mt{x_{0}}{s} } > c$, we have
	\begin{align*}
		\normb{ \ml{x}{c}{s} - \mt{x}{s} }
		&\le \normb{ \mt{x}{s} - \mt{x_0}{s} } + \normb{ \ml{x}{c}{s} - \mt{x_0}{s} }\\
		&\le \normb{ \mt{x}{s} - \mt{x_0}{s} } + c\\
		&\le 2\normb{ \mt{x}{s} - \mt{x_0}{s} }.
	\end{align*}
	As a consequence, if $c_{g\opt,0}>0$ then
	\begin{align*}
		\abs{ \Ft{x_{g\opt}}{c_{g\opt,0}} - \Fo{x_{g\opt}} }
		&= \E_{s\sim \PP} \brk*{ \abs{ \fl{x_{g\opt}}{c_{g\opt,0}}{s} - \ft{x_{g\opt}}{s} } }\\
		&\le \E_{s\sim \PP} \brk*{ \normb{ \ml{x_{g\opt}}{c_{g\opt,0}}{s} - \mt{x_{g\opt}}{s} } }\\
		&= \E_{s\sim \PP} \brk*{ \normb{ \ml{x_{g\opt}}{c_{g\opt,0}}{s} - \mt{x_{g\opt}}{s} } \indic{ \normb{ \mt{x_{g\opt}}{s} - \mt{x_{0}}{s} } > c_{g\opt,0} }}\\
		&\le 2\E_{s\sim \PP} \brk*{ \normb{ \mt{x_{g\opt}}{s} - \mt{x_{0}}{s} } \indic{ \normb{ \mt{x_{g\opt}}{s} - \mt{x_{0}}{s} } > c_{g\opt,0} }}\\
		&\le \frac{2}{c_{g\opt,0}}\E_{s\sim \PP} \brk*{ \normb{ \mt{x_{g\opt}}{s} - \mt{x_{0}}{s} }^2 },\numberthis
		\label{eq: F almost equiv sm-lip partial}
	\end{align*}
	where the first inequality is because $\lt{\cdot}{s}$ is 1-Lipschitz.
	Additionally, if $c_{g\opt,0}=0$ then $x_{g\opt}=x_{0}$, and thus
	\begin{align*}
		\abs{ \Ft{x_{g\opt}}{c_{g\opt,0}} - \Fo{x_{g\opt}} }
		=\abs{ \Ft{x_0}{c_{g\opt,0}} - \Fo{x_0} }
		=0
	\end{align*}
	Therefore, by combining \Cref{eq: sm-Lipschitz of function,eq: F almost equiv sm-lip partial} and the previous equation, we get
	\begin{align}
		\label{eq: F almost equiv sm-lip}
		\abs{ \Ft{x_{g\opt}}{c_{g\opt,0}} - \Fo{x_{g\opt}} } \le 2 L\opt \norma{ x_{g\opt} - x_{0} } \sqrt{\frac{\ln \prn*{ 4\hat{a}/\delta }}{N}}.
	\end{align}
	Thus, the clipped function is, in expectation, almost equivalent to the non-clipped function.

	We now want to bound $\tau_{g\opt,0}$, similarly to what we have done for the Lipschitz case.
	To do this, we need to bound the sample variance $\ve{g\opt}{c_{g\opt,0}}$.
	We first start by bounding the variance, for any $c\ge0$
	\begin{align*}
		\E_{s\sim \PP} \brk*{ \abs{ \fl{x_{g\opt}}{c}{s} - \ft{x_{0}}{s} }^2 }
		&\le \E_{s\sim \PP} \brk*{ \normb{ \ml{x_{g\opt}}{c}{s} - \mt{x_{0}}{s} }^2 }\\
		&\le \E_{s\sim \PP} \brk*{ \normb{ \mt{x_{g\opt}}{s} - \mt{x_{0}}{s} }^2 }\\
		&\le  L\opt^2 \norma{ x_{g\opt} - x_{0} }^2,
	\end{align*}
	where the first inequality is because $\lt{\cdot}{s}$ is 1-Lipschitz, and the final inequality is from using \Cref{eq: sm-Lipschitz of function} again.
	Now, using Theorem 10 of \citet{maurer2009empirical} (see \Cref{thm: maurer 10}), with probability at least $1-\delta/4$, the sample variance is bounded by
	\begin{align*}
		\sqrt{\ve{g\opt}{c_{g\opt,0}}} &\le L\opt \norma{ x_{g\opt} - x_{0} } +  c_{g\opt,0} \sqrt{\frac{8 \ln\prn*{4/\delta}}{N - 1}}, \qquad \text{and}\\
		\sqrt{\ve{g\opt}{c_{g\opt,i\opt}}} &\le L\opt \norma{ x_{g\opt} - x_{0} } +  c_{g\opt,i\opt} \sqrt{\frac{8 \ln\prn*{4/\delta}}{N - 1}}
		\le L\opt \norma{ x_{g\opt} - x_{0} } +  c_{g\opt,0} \sqrt{\frac{8 \ln\prn*{4/\delta}}{N - 1}}.
	\end{align*}
	Therefore,
	\begin{align*}
		\tau_{g\opt,0} &=
		\max\prn*{ \te{ x_{g\opt} }{ c_{g\opt,0} }, \tau_{g\opt,i\opt} + 16 \Lopt \norma{x_{g\opt} - x_0} \sqrt{\frac{ \ln \prn*{ 4\hat{a}/\delta} }{ N }} }\\
		&\le \prn{16+2} \Lopt \norma{ x_{g\opt} - x_{0} } \sqrt{ \frac{ \ln\prn*{16  \tilde{a} / \delta} }{ N } } + c_{g\opt,0} \sqrt{\frac{8 \ln\prn*{4/\delta}}{N - 1}} \sqrt{ \frac{ 2 \ln\prn*{16  \tilde{a} / \delta} }{ N } } +
		c_{g\opt,0} \frac{14 \ln\prn*{16  \tilde{a} / \delta} }{ 3 \prn*{ N - 1 } }\\
		&\le 18 \Lopt \norma{ x_{g\opt} - x_{0} } \sqrt{ \frac{ \ln\prn*{16  \tilde{a} / \delta} }{ N } } +
		8 \Lopt \norma{x_{g\opt} - x_0} \sqrt{\frac{ \ln\prn*{16  \tilde{a} / \delta} }{N}} \sqrt{\frac{N}{N - 1}} \\
		&\qquad\qquad+ 10 \Lopt \norma{x_{g\opt} - x_0} \sqrt{\frac{ \ln\prn*{16  \tilde{a} / \delta} }{ N }} \sqrt{ \frac{\ln\prn*{16  \tilde{a} / \delta}}{\ln\prn*{4  \hat{a} / \delta}} } \frac{N}{N - 1}.
	\end{align*}
	As a result, since $N\ge2$,
	\begin{align}
		\label{eq: tau sm-lip}
		\tau_{g\opt,0} \le 54 \Lopt \norma{ x_{g\opt} - x_{0} } \sqrt{\frac{\ln\prn*{16 \tilde{a} / \delta}}{N}} \sqrt{ \frac{\ln\prn*{16  \tilde{a} / \delta}}{\ln\prn*{4  \hat{a} / \delta}} }.
	\end{align}

	Thus, by using \Cref{eq: F almost equiv sm-lip,eq: tau sm-lip},
	\begin{align*}
		\label{eq: correct g loss bound SM}
		\Ft{x_{g\opt}}{c_{g\opt,0}} + 2 \tau_{g\opt,0}
		&\le \Fo{x_{g\opt}} + 2 \cdot 55 \Lopt \norma{ x_{g\opt} - x_{0} } \sqrt{\frac{\ln\prn*{ 16\tilde{a} / \delta}}{N}} \sqrt{ \frac{\ln\prn*{16  \tilde{a} / \delta}}{\ln\prn*{4  \hat{a} / \delta}} }.\numberthis
	\end{align*}
	Finally, as \Cref{lem: condition hold for algorithm} shows that the requirement of \Cref{lem: reliable selection results} holds for $\delta/2$, \Cref{lem: reliable selection results,lem: reliable selection results gen} guarantee that if $x,c$ is the result of \Cref{alg: clipping around initial}, then with probability at least $1-\delta/2$,
	\begin{align*}
		\Ft{x}{c}
		\le \min_{\substack{g\in \Grid\cup\crl*{0}\\i \in \brk*{n} \cup \crl*{0}}} \brk*{ \Ft{x_{g}}{c_{g,i}} + 2\tau_{g,i} }.
	\end{align*}
	Now, because with probability at least $1-\delta/4$ the additional candidate $\prn*{g\opt, c_{g\opt,0}}$ does not affect the algorithm, with probability at least $1-\delta$ we have
	\begin{align*}
		\Ft{x}{c}
		\le \min_{\substack{g\in \Grid\cup\crl*{0}\\i \in \brk*{n}}} \brk*{ \Ft{x_{g}}{c_{g,i}} + 2 \tau_{g,i}}.
	\end{align*}
	Hence, by the definition of $g\opt$ in \Cref{eq: gopt def} and by \Cref{eq: correct g loss bound SM}, with probability at least $1-\delta$,
	\begin{align*}
		\Ft{x}{c}
		\le \min_{g \in \Grid\cup\crl*{0}} \brk*{ \Fo{x_g} + \O{ \Lopt \norma{x_g - x_0} \sqrt{\frac{\lnp\prn*{ \tilde{a} / \delta} }{N} } \sqrt{ \frac{\lnp\prn*{\tilde{a} / \delta}}{\lnp\prn*{\hat{a} / \delta}} } } }.
	\end{align*}
\end{proof}

\section{Proof of \Crefthm{thm: sample algorithm}}
\label{sec: sample algorithm proof}

\thmsamplealg*

\begin{proof}
	First, we consider the case where $N\le7$ or $\ln\prn*{1/\delta} > \frac{N}{2e}$.
	In both cases, we have $\ln\prn*{\psi_\alpha/\delta} > \frac{N}{2e}$;
	if $N\le7$, this follows from $\psi_\alpha\ge1$ and $\delta<\frac{1}{5}$;
	otherwise it follows directly from $\psi_\alpha\ge1$.
	Therefore,
	\begin{align*}
		\Ft{x_0}{0}
		= \Fo{x_0}
		\le \Fo{\xopt} + L\opt\norma{\xopt - x_0}
		\le \Fo{\xopt} + \O{\frac{L\opt\norma{\xopt - x_0} \sqrt{\ln\frac{\psi_\alpha}{\delta}} }{\sqrt{N}}}.
	\end{align*}
	In these cases, the number of samples used is 0.

	Now, we move to prove the theorem for the case that $\ln\prn*{\psi_\alpha/\delta} \le \frac{N}{2e}$.
	Define
	\begin{align*}
		\fFo{x} &\defeq \E_{\substack{s\sim\PP}} \brk*{ \ft{x}{s} ~|~ L\prn*{s} \le \fL }.
	\end{align*}
	For any $\theta\ge0$ define
	\begin{align*}
		\hat{x}_{\theta} \in \argmin_{x\in\X} \crl*{ \frac{1}{N}\sum_{i=N+1}^{2N} \ft{x}{s_i} + \theta\norma{x-x_0} } \text{ and } \hat{x}_{\theta}^{\star} \in \argmin_{x \in \X} \crl*{ \fFo{x} + \theta \norma{x-x_0} }.
	\end{align*}
	Also, define $\hat{R} \defeq 3 \norma{\hat{x}_{\lambda} - x_0}$.
	As we assume that \Cref{asm: grad approx} holds, the conditions of Lemma 2 of \citet{lawrence2025sample} are satisfied.
	Thus, from Lemma 2 of \citet{lawrence2025sample} (see \Cref{lem:bound-regularization-direction}), with probability at least $1-2\delta$,
	\begin{align}
		\label{eq: x lambda bound}
		\norma{\hat{x}_{3\lambda}^{\star} - x_0} \le 3 \norma{\hat{x}_{\lambda} - x_0} \le 33 \norma{\hat{x}_{\lambda/3}^{\star} - x_0}.
	\end{align}

	If $\norma{\hat{x}_{\lambda/3}^{\star} - x_0} \ge \hat{R}$ then, because of \Cref{eq: x lambda bound}, there exists $\lambda_{\hat{R}}$ such that $\lambda_{\hat{R}} \in \brk*{ \lambda/3, 3\lambda }$ and $\norma{\hat{x}_{\lambda_{\hat{R}}}^{\star} - x_0} = \hat{R}$ and
	\begin{align}
		\label{eq: R case 1}
		\min_{x\in\X : \norma{x - x_0}\le\hat{R}} \fFo{x} \le \fFo{\hat{x}_{\lambda_{\hat{R}}}^{\star}} \le  \fFo{\xopt} + \lambda_{\hat{R}} \prn*{ \norma{\xopt - x_0} - \norma{\hat{x}_{\lambda_{\hat{R}}}^{\star} - x_0} }.
	\end{align}
	Otherwise, because of \Cref{eq: x lambda bound}, we have that $\norma{\hat{x}_{\lambda/3}^{\star} - x_0} \in \brk*{ \frac{\hat{R}}{33}, \hat{R} }$, and therefore
	\begin{align}
		\label{eq: R case 2}
		\min_{x\in\X : \norma{x - x_0}\le\hat{R}} \fFo{x} \le \fFo{\hat{x}_{\lambda/3}^{\star}} \le  \fFo{\xopt} + \frac{\lambda}{3} \prn*{ \norma{\xopt - x_0} - \norma{\hat{x}_{\lambda/3}^{\star} - x_0} }.
	\end{align}

	If $\hat{R} \le 66\norma{\xopt - x_0}$ then \Cref{eq: R case 1,eq: R case 2} yields
	\begin{align*}
		\min_{x\in\X : \norma{x - x_0}\le\hat{R}} \fFo{x} \le \fFo{\xopt} + 3\lambda \norma{\xopt - x_0}
	\end{align*}
	Therefore, by \Cref{asm: alg subopt}, with probability at least $1-3\delta$,
	\begin{align*}
		\fFo{\xp}
		&\le \fFo{\xopt} + 3\lambda \norma{\xopt - x_0} + \phi\frac{\fL\hat{R}\sqrt{\ln\frac{\psi_\alpha}{\delta}}}{\sqrt{N}}\\
		&\le \fFo{\xopt} + \O{ \phi\frac{\fL\norma{\xopt - x_0}\sqrt{\ln\frac{\psi_\alpha}{\delta}}}{\sqrt{N}} }. \numberthis
		\label{eq: min fake F 1}
	\end{align*}

	Otherwise, if $\hat{R} \ge 66 \norma{\xopt - x_0}$, we obtain from \Cref{eq: R case 1,eq: R case 2} that
	\begin{align*}
		\min_{x\in\X : \norma{x - x_0}\le\hat{R}} \fFo{x} \le \fFo{\xopt} -  \frac{\lambda\hat{R}}{3 \cdot 66}.
	\end{align*}
	Therefore, by \Cref{asm: alg subopt}, with probability at least $1-3\delta$,
	\begin{align*}
		\fFo{\xp}
		&\le \fFo{\xopt}  -  \frac{\lambda\hat{R}}{3 \cdot 66} + \phi\frac{\fL\hat{R}\sqrt{\ln\frac{\psi_\alpha}{\delta}}}{\sqrt{N}}\\
		&\le \fFo{\xopt} - 2\frac{\fL\hat{R}\sqrt{\ln\frac{\psi_\alpha}{\delta}}}{\sqrt{N}}. \numberthis
		\label{eq: min fake F 2}
	\end{align*}

	Now that we have bounded $\fFo{\xp}$, we use this to bound $\Ft{\xp}{\fL\norma{\xp-x_0}}$.
	Define $L\prn*{s}$ as the Lipschitz constant of $\mt{\cdot}{s}$, and $L\prn*{\PP}$ as the distribution over the Lipschitz constants resulting from the distribution $\PP$.
	\Cref{lem: greater than quantile} states that with probability at least $1-\delta$,
	\begin{align}
		\label{eq: fake L quantile}
		\fL = \max_{i\in\brk*{N}} L\prn*{s_i} \ge \Q{L\prn*{\PP}}{1-\frac{\ln(1/\delta)}{N}}.
	\end{align}
	Thus, for $p\defeq\P_{s\sim\PP}\brk*{ L\prn*{s} > \fL}$, we have $p\in\brk*{0,\frac{\ln(1/\delta)}{N}}$.
	Therefore,
	\begin{align*}
		&\Ft{ \xp }{ \fL \norma{\xp - x_0} }\\
		&\qquad= \prn*{1-p}\E_{\substack{s\sim\PP}}\brk*{\ft{\xp}{s} ~|~ L\prn*{s} \le \fL}\\
		&\qquad\phantom{\mathllap{a}\le \prn*{1-p}}\mathllap{+p}\E_{\substack{s\sim\PP}}\brk*{\lt{\ml{\xp}{\fL\norma{\xp-x_0}}{s}}{s} ~|~ L\prn*{s} > \fL}\\
		&\qquad\le \prn*{1-p}\E_{\substack{s\sim\PP}}\brk*{\ft{\xp}{s} ~|~ L\prn*{s} \le \fL}\\
		&\qquad\phantom{\mathllap{a}\le \prn*{1-p}}\mathllap{+p}\E_{\substack{s\sim\PP}}\brk*{\ft{\xopt}{s} + \fL \norma{\xp - x_0} + L\opt \norma{\xopt - x_0} ~|~ L\prn*{s} > \fL}\\
		&\qquad= \Fo{\xopt} + \prn*{1-p}\prn*{\fFo{\xp} - \fFo{\xopt}}
		+ p\prn*{ \fL \norma{\xp - x_0} + L\opt \norma{\xopt - x_0} }, \numberthis
		\label{eq: F perfect bound}
	\end{align*}
	where the inequality is because $h$ is $1$-Lipschitz and $m$ is $L\opt$-Lipschitz.

	Now we continue to bound $\Ft{\xp}{\fL\norma{\xp-x_0}}$.
	First, let us consider the case that $\hat{R} \le 66\norma{\xopt - x_0}$.
	We have
	\begin{align*}
		&\Ft{ \xp }{ \fL \norma{\xp - x_0} }\\
		&\qquad\overset{(i)}{\le} \Fo{\xopt} + \O{ \phi\frac{\fL\norma{\xopt - x_0}\sqrt{\ln\frac{\psi_\alpha}{\delta}}}{\sqrt{N}} } + \frac{\prn*{\fL \norma{\xp - x_0} + L\opt \norma{\xopt - x_0}}\ln(1/\delta)}{N}\\
		&\qquad\overset{(ii)}{\le} \Fo{\xopt} + \O{ \phi\frac{L\opt\norma{\xopt - x_0}\sqrt{\ln\frac{\psi_\alpha}{\delta}}}{\sqrt{N}} }, \numberthis
		\label{eq: fake F A 1}
	\end{align*}
	where $(i)$ is from \Cref{eq: min fake F 1,eq: F perfect bound}, and $(ii)$ is from $\norma{\xp - x_0}\le\hat{R}$, $\fL\le L\opt$, and from the fact that for every $v\in\brk*{0,1}$ we have $v\le\sqrt{v}$.

	Now, let us consider the case that $\hat{R} \ge 66\norma{\xopt - x_0}$.
	Thus, from \Cref{eq: min fake F 2,eq: F perfect bound} and from $\norma{\xp - x_0}\le\hat{R}$,
	\begin{align*}
		\Ft{\xp}{\fL\norma{\xp-x_0}}
		&\overset{(i)}{\le} \Fo{\xopt} - 2\frac{\fL\hat{R}\sqrt{\ln\frac{\psi_\alpha}{\delta}}}{\sqrt{N}} + \frac{\prn*{\fL \norma{\xp - x_0} + L\opt \norma{\xopt - x_0}}\ln(1/\delta)}{N}\\
		&\overset{(ii)}{\le} \Fo{\xopt} - 2\frac{\fL\hat{R}\sqrt{\ln\frac{\psi_\alpha}{\delta}}}{\sqrt{N}} + \prn*{\fL \norma{\xp - x_0} + L\opt \norma{\xopt - x_0}}\frac{\sqrt{\ln\frac{\psi_\alpha}{\delta}}}{\sqrt{N}}\\
		&\overset{(iii)}{\le} \Fo{\xopt} + \O{\frac{L\opt \norma{\xopt - x_0}\sqrt{\ln\frac{\psi_\alpha}{\delta}}}{\sqrt{N}}},
		\numberthis
		\label{eq: fake F A 2}
	\end{align*}
	where $(i)$ is from \Cref{eq: min fake F 2,eq: F perfect bound}, $(ii)$ is from the fact that for every $v\in\brk*{0,1}$ we have $v\le\sqrt{v}$, and $(iii)$ is from $\norma{\xp - x_0}\le\hat{R}$, and $\fL\le L\opt$.

	Therefore, \Cref{eq: fake F A 1,eq: fake F A 2} establish that in all cases, with probability at least $1-3\delta$, we have
	\begin{align*}
		\Ft{\xp}{\fL\norma{\xp-x_0}}
		&\le \Fo{\xopt} + \O{ \phi\frac{L\opt\norma{\xopt - x_0}\sqrt{\ln\frac{\psi_\alpha}{\delta}}}{\sqrt{N}} }.
	\end{align*}
	Now, \Cref{eq: fake L quantile} shows that $\fL$ exceeds the $1-\frac{\ln(1/\delta)}{N}$ quantile of $L\prn*{\PP}$.
	As
	\begin{align*}
		2 N + \frac{\ln\prn*{1/\delta} + 2e\ln(1/\delta) }{N - e \ln(1/\delta)}\cdot N
		\le 2 N + \frac{\ln\prn*{1/\delta} + 2e\ln(1/\delta) }{e \ln(1/\delta)}\cdot N
		\le 5N,
	\end{align*}
	\Cref{lem: redraw samples} shows that among $5N$ samples, with probability at least $1-\delta$, at least $2N$ lie below this quantile.
	Hence, with high probability, \Cref{alg: sample complexity clipping around initial} does not return FAILURE after the loop, and $6N$ samples suffice.

	Finally, by using the union bound, we get the desired results with probability at least $1-5\delta$.
\end{proof}

\section{Discussion of Lipschitz properties in the model--loss decomposition setting}

In this section, we discuss the relationship between the Lipschitz constants of the model $m$, the loss $h$, and the composed objective function $f$.

First, we note that the dual norm $\normas{ \cdot }$ can be viewed as the operator norm induced by the norms $\norma{\cdot}$ and $\abs{\cdot}$; both are defined as $\normas{ v } \defeq \sup\crl*{ \abs{\tri{v,x}} ~|~ \norma{x}\le1 }$.
Similarly, the dual norm $\normbs{ \cdot }$ can be viewed as the operator norm induced by the norms $\normb{\cdot}$ and $\abs{\cdot}$.
Therefore, immediately from the sub-multiplicativity of operator norms, we obtain that
\begin{align}
	\label{eq: grad f bound}
	\normas{\grad \ft{x}{s}}
	&= \normas{ \grad \lt{ \mt{x}{s} }{s} \grad \mt{x}{s} }
	\le \normbs{\grad \lt{ \mt{x}{s} }{s}} \cdot \normo{\grad \mt{x}{s}}
\end{align}
For completeness, we provide a short proof:
\begin{proof}
	For any $v\in\X$ such that $\norma{v}\le1$,
	if $\grad \mt{x}{s} v = 0$ then
	\begin{align*}
		\abs{ \grad \lt{ \mt{x}{s} }{s} \grad \mt{x}{s} v } = 0 \le \normbs{ \grad \lt{ \mt{x}{s} }{s} } \cdot \normo{\grad \mt{x}{s}}.
	\end{align*}
	Otherwise,
	\begin{align*}
		\abs{ \grad \lt{ \mt{x}{s} }{s} \grad \mt{x}{s} v }
		&=\abs{ \grad \lt{ \mt{x}{s} }{s} \frac{\grad \mt{x}{s} v}{\normb{\grad \mt{x}{s} v}} } \cdot \normb{\grad \mt{x}{s} v}\\
		&\le \normbs{ \grad \lt{ \mt{x}{s} }{s} } \cdot \normo{\grad \mt{x}{s}},
	\end{align*}
	where the inequality follows from the definitions of the dual norm $\normbs{\cdot}$ and the operator norm $\normo{\cdot}$.
	Thus, from the definition of the dual norm $\normas{\cdot}$, we obtain that \Cref{eq: grad f bound} holds.
\end{proof}

From \Cref{eq: grad f bound}, it immediately follows that if the model $m$ is $L$-Lipschitz and the loss function $h$ is $1$-Lipschitz, then the objective function $f$ is $L$-Lipschitz.
Formally:
\begin{lemma}
	\label{lem: f lip}
	Let $L>0$.
	\begin{enumerate}
		\item\label{lem: f lip 1}  If $m\in\ILip[0][L][\normo{\cdot}]$ and $h\in\ILip[0][1][\normbs{\cdot}]$, then $f\in\ILip[0][L][\normas{\cdot}]$.
		\item \label{lem: f lip 2} If $m\in\IsLip[0][L][\normo{\cdot}]$ and $h\in\ILip[0][1][\normbs{\cdot}]$, then $f\in\IsLip[0][L][\normas{\cdot}]$.
	\end{enumerate}
\end{lemma}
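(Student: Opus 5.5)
Both parts follow from the pointwise gradient bound \Cref{eq: grad f bound}, applied at each fixed $x\in\X$ and $s\in\S$, together with the Lipschitz assumption on $h$. Since $f(x;s)=h(m(x;s);s)$ and all functions are differentiable, the chain rule gives $\grad \ft{x}{s} = \grad \lt{\mt{x}{s}}{s}\,\grad \mt{x}{s}$. Then \Cref{eq: grad f bound} yields
\[
\normas{\grad \ft{x}{s}} \le \normbs{\grad \lt{\mt{x}{s}}{s}}\cdot\normo{\grad \mt{x}{s}} .
\]
Because $h\in\ILip[0][1][\normbs{\cdot}]$, the gradient of $h(\cdot;s)$ at every point of $\Y$, and in particular at $\mt{x}{s}$, has $\normbs{\cdot}$-norm at most $1$. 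This gives the key pointwise inequality $\normas{\grad \ft{x}{s}}\le \normo{\grad \mt{x}{s}}$.

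For part \ref{lem: f lip 1}, I would combine the pointwise inequality with $m\in\ILip[0][L][\normo{\cdot}]$, i.e.\ $\normo{\grad\mt{x}{s}}\le L$ for all $x,s$. This gives $\normas{\grad\ft{x}{s}}\le L$ for all $x,s$, which is exactly membership in $\ILip[0][L][\normas{\cdot}]$. Differentiability of $f$ is inherited from that of $m$ and $h$.

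For part \ref{lem: f lip 2}, I would square the pointwise inequality, which is legitimate since both sides are nonnegative, and take expectation over $s\sim\PP$ for fixed $x$. This gives $\E_{s\sim\PP}\normas{\grad\ft{x}{s}}^2\le \E_{s\sim\PP}\normo{\grad\mt{x}{s}}^2\le L^2$, by the definition of $\IsLip[0][L][\normo{\cdot}]$. Hence $f\in\IsLip[0][L][\normas{\cdot}]$.

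There is essentially no obstacle. The only point needing care is the norm bookkeeping: the loss gradient is a functional on $\Y$, measured in $\normbs{\cdot}$, while the model Jacobian is an operator $\X\to\Y$, measured in $\normo{\cdot}$. This is already handled by \Cref{eq: grad f bound}. The only other thing to check is that the $1$-Lipschitz bound on $h$ is used at the specific point $\mt{x}{s}$, which holds because the bound on $h$ holds everywhere on $\Y$.
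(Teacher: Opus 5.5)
Your proposal is correct and follows the paper's proof essentially verbatim. Both parts rest on the pointwise bound \Cref{eq: grad f bound} combined with $\normbs{\grad \lt{\mt{x}{s}}{s}}\le 1$. This gives the uniform bound $L$ in part 1, and, after squaring and taking expectations over $s\sim\PP$, the second-moment bound $L^2$ in part 2.
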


\begin{proof}
	\arxiv{~}
	\paragraph{Proof of \titleCrefi{lem: f lip}{lem: f lip 1}}
	For every $x\in\X$ and $s\in\S$, from \Cref{eq: grad f bound} we have
	\begin{align*}
		\normas{\grad \ft{x}{s}}
		\le \normbs{\grad \lt{ \mt{x}{s} }{s}} \cdot \normo{\grad \mt{x}{s}} \le L.
	\end{align*}
	Thus, $f\in\ILip[0][L][\normas{\cdot}]$.

	\arxiv{~}
	\paragraph{Proof of \titleCrefi{lem: f lip}{lem: f lip 2}}
	For every $x\in\X$, from \Cref{eq: grad f bound} we have
	\begin{align*}
		\E_{s\sim \PP} \normas{\grad \ft{x}{s}}^2
		\le \E_{s\sim \PP} \brk{ \normbs{\grad \lt{ \mt{x}{s} }{s}}^2 \cdot \normo{\grad \mt{x}{s}}^2}
		\le \E_{s\sim \PP} \brk{\normo{\grad \mt{x}{s}}^2 }
		\le L^2.
	\end{align*}
	Thus, $f\in\IsLip[0][L][\normas{\cdot}]$.
\end{proof}

However, it is possible to construct a model $m$ and a loss function $h$ such that the Lipschitz constant of $f$ is much smaller than the product of the Lipschitz constants of $m$ and $h$.
In what follows, we present examples in which the Lipschitz constant of $f$ is close to this product, showing that the bound above can be tight.

\subsection{Lipschitz constant under the absolute loss}

In this subsection, we discuss the Lipschitz constant of a model with the absolute loss.
Let $\Y\subseteq\R$, and let the loss function be the absolute loss: $\lt{y}{s}\defeq \abs{ y - q\prn*{ s } }$, where $q:\S\mapsto\R$ is some function.
It is easy to see that for every $y\in\Y$ and $s\in\S$ such that $y \neq q(s)$, we have $\grad \lt{y}{s} \in \{-1, 1\}$.
Therefore, for every $x\in\X$ and $s\in\S$ such that $\mt{x}{s} \neq q(s)$, we have $\normas{\grad \ft{x}{s}} = \normas{ \grad \lt{ \mt{x}{s} }{s} \grad \mt{x}{s} } = \normas{ \grad \mt{x}{s} }$.
Thus, for many models, the Lipschitz constant of $f$ coincides with that of $m$.

\newcommand{\mat}[1]{\text{mat}\prn*{ #1 }}
\subsection{Linear model with cross-entropy loss}

In this section, we discuss the Lipschitz constant of a linear model with cross-entropy loss.
Let $d_1,d_2\in\N$, $\X\subseteq\R^{d_1}$, and $\Y\subseteq\R^{d_2}$.
We consider the Euclidean norm $\norma{\cdot}=\normb{\cdot}=\norm{\cdot}_2$ on $\X$ and on $\Y$.
Define the softmax function by
\begin{align*}
	p_i\prn*{ y } \defeq \frac{e^{y_i}}{ \sum_{j=1}^{d_2} e^{y_j} }\qquad \boldsymbol{p}\prn*{ y } \defeq \prn*{p_1\prn*{ y }, \dots, p_{d_2}\prn*{ y }}.
\end{align*}

With a slight abuse of notation, let $s$ be a vector of size $d_1+1$, where $s_0$ denotes the class label and $s_{1:d_1}$ the remaining features.

\begin{proposition}
	Let the sample set $\S$ be a subset of $\crl{ s\in\R^{d_1+1} ~|~ s_0\in\brk*{d_2} }$.
	Let $m$ be the linear model $\mt{x}{s}\defeq \mat{x} s_{1:d_1}$, where $\mat{x}_{i,j} \defeq x_{j+(i-1)d_1}$ is the matricization of $x$.
	Define the cross-entropy loss $h$ as $\lt{y}{s}\defeq -\log\prn*{p_{s_0}\prn*{ y }}$.
	Consequently, the objective function $f$ is defined as $\ft{x}{s}\defeq -\log\prn*{p_{s_0}\prn*{ \mat{x} s_{1:d_1} }}$.

	\Cref{eq: grad f bound} is tight, i.e., for all $x\in\X$ and $s\in\S$,
	\begin{align*}
		\norma[2]{\grad \ft{x}{s}} = \norm{\grad \lt{\mt{x}{s}}{s}}_2 \cdot \normo[2][2]{ \grad \mt{x}{s} }.
	\end{align*}
	Furthermore, $h\in\ILip[0][\sqrt{2}][\normbs{\cdot}]$, and for $x=0$ and all $s\in\S$ we have
	$\norm{\grad \lt{\mt{0}{s}}{s}}_2 = \sqrt{1 - \frac{1}{d_2}}$.
\end{proposition}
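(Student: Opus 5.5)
The plan is to compute everything explicitly via the chain rule, using the linear structure of the model. Write $y=\mt{x}{s}=\mat{x}s_{1:d_1}$, $a\defeq s_{1:d_1}\in\R^{d_1}$, and $e_{s_0}$ for the standard basis vector of the label. The cross-entropy gradient is the standard
\begin{align*}
\grad \lt{y}{s}=\boldsymbol{p}(y)-e_{s_0}\eqdef r\in\R^{d_2}.
\end{align*}
The Jacobian $\grad \mt{x}{s}$ is the linear map $v\mapsto \mat{v}a$. Its $(2,2)$ operator norm is $\sup_{\norm{\mat{v}}_F\le1}\norm{\mat{v}a}_2=\norm{a}_2$, since $\norm{\mat{v}a}_2\le\norm{\mat{v}}_{\mathrm{op}}\norm{a}_2\le\norm{\mat{v}}_F\norm{a}_2$. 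Equality holds with $\mat{v}=u a^\top/\norm{a}_2$ for any unit $u$, and the case $a=0$ is trivial.

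Next I compute $\grad\ft{x}{s}$ directly. Since $f(x;s)=h(\mat{x}a;s)$, the gradient with respect to $\mat{x}$ is the outer product $r a^\top$. Hence $\norma[2]{\grad \ft{x}{s}}=\norm{ra^\top}_F=\norm{r}_2\norm{a}_2$, because a rank-one matrix has Frobenius norm equal to the product of its factors' norms. Comparing with the previous step gives the claimed equality $\norm{r}_2\cdot\normo[2][2]{\grad\mt{x}{s}}$. The key point is that the chain-rule product $r^\top\grad\mt{x}{s}$ is exactly $ra^\top$, which is rank-one and aligned, so the submultiplicativity bound in \Cref{eq: grad f bound} is attained.

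For the Lipschitz bound on $h$, I bound $\norm{\boldsymbol{p}-e_{s_0}}_2^2=(1-p_{s_0})^2+\sum_{j\ne s_0}p_j^2$. Using $\sum_{j\ne s_0}p_j^2\le(\sum_{j\ne s_0}p_j)^2=(1-p_{s_0})^2$, this is at most $2(1-p_{s_0})^2\le2$, so $h\in\ILip[0][\sqrt2][\normbs{\cdot}]$ (here $\normbs{\cdot}=\norm{\cdot}_2$).

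At $x=0$ we have $y=0$ and $\boldsymbol{p}=\frac{1}{d_2}\mathbf{1}$, so
\begin{align*}
\norm{r}_2^2=\Bigl(1-\tfrac1{d_2}\Bigr)^2+\tfrac{d_2-1}{d_2^2}=\tfrac{d_2-1}{d_2}=1-\tfrac1{d_2}.
\end{align*}
There is no real obstacle here. The only care needed is in identifying the operator norm of $v\mapsto\mat{v}a$ under the Euclidean norm on $\X$, which is the Frobenius norm on $\mat{v}$, and verifying that it equals $\norm{a}_2$ exactly.
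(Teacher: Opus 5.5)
Your proposal is correct and follows essentially the same route as the paper. Both arguments compute $\nabla h(y;s)=\boldsymbol{p}(y)-\boldsymbol{e}_{s_0}$, identify the operator norm of the linear model's Jacobian as $\|s_{1:d_1}\|_2$, and note that the chain-rule gradient of $f$ has entries $[\nabla h]_i\, s_j$ (your rank-one matrix $r a^\top$), so its Euclidean norm is exactly $\|\nabla h\|_2\,\|s_{1:d_1}\|_2$; both then check the $\sqrt{2}$ bound and the value $\sqrt{1-1/d_2}$ at $x=0$ directly. The differences are cosmetic: you justify the operator norm via the Frobenius bound and an explicit maximizer rather than the largest singular value, and you use the slightly sharper bound $\sum_{j\neq s_0}p_j^2\le(1-p_{s_0})^2$ where the paper uses $p_i^2\le p_i$.
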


\begin{proof}
	Formally, for every $i\in\brk*{d_2}$ and $j\in\brk*{d_1 d_2}$, if $j\in \brk{\prn*{i-1}\cdot d_1+1, i\cdot d_1}$ we have $\brk*{\grad \mt{x}{s}}_{i,j} = s_{j-\prn*{i-1} d_1}$, and otherwise $\brk*{\grad \mt{x}{s}}_{i,j} =0$.
	Informally, every row $i$ of $\grad \mt{x}{s}$ contains the sample $s_{1:d_1}$ starting at the $\prn*{i-1}\cdot d_1+1$ element,
	\setcounter{MaxMatrixCols}{15}
	\begin{align*}
		\grad \mt{x}{s} =
		\begin{bmatrix}
			s_1& \dots&s_{d_1} & 0 & \dots & 0 & 0& \dots&0 & 0 & \dots & 0 & 0& \dots&0\\
			0& \dots&0 & s_1 & \dots & s_{d_1} & 0& \dots&0 & 0  & \dots & 0 & 0& \dots&0 \\
			\vdots& & & & & & & \ddots & & & & & & & \vdots \\
			0& \dots&0 & 0 & \dots & 0 & 0& \dots&0 & s_1  & \dots & s_{d_1} & 0& \dots&0\\
			0& \dots&0 & 0 & \dots & 0 & 0& \dots&0 & 0& \dots&0 & s_1  & \dots & s_{d_1}
		\end{bmatrix}.
	\end{align*}
	Moreover, the gradient of the loss satisfies $\grad \lt{y}{s} = \boldsymbol{p}\prn*{ y } - \boldsymbol{e}_{s_0}$, where $\boldsymbol{e}_{s_0}$ is the vector with a one in the $s_0$-th position and zeros everywhere else.

	Since the operator norm $\normo[2][2]{\cdot}$ is the maximum singular value, we obtain $\normo[2][2]{ \grad \mt{x}{s} }=\norm{s_{1:d_1}}_2$.
	Therefore, by the chain rule,
	\begin{align*}
		\norma[2]{\grad \ft{x}{s}}
		&= \norma[2]{\grad \lt{\mt{x}{s}}{s}  \grad \mt{x}{s}}\\
		&= \sqrt{\sum_{i=1}^{d_2}\sum_{j=1}^{d_1} \brk*{\grad \lt{\mt{x}{s}}{s} }_i^2 s_j^2}\\
		&= \sqrt{\sum_{i=1}^{d_2} \brk*{\grad \lt{\mt{x}{s}}{s} }_i^2 \sum_{j=1}^{d_1} s_j^2}\\
		&= \norm{\grad \lt{\mt{x}{s}}{s}}_2 \cdot \norm{s_{1:d_1}}_2.
	\end{align*}
	Thus, for all $x\in\X$ and $s\in\S$ we have
	\begin{align*}
		\norma[2]{\grad \ft{x}{s}} = \norm{\grad \lt{\mt{x}{s}}{s}}_2 \cdot \normo[2][2]{ \grad \mt{x}{s} }.
	\end{align*}

	In addition, for all $x\in\X$ and $s\in\S$, we have
	\begin{align*}
		\norma[2]{\grad \lt{\mt{x}{s}}{s}}^2
		= 1 - 2p_{s_0}\prn*{ \mt{x}{s} } + \sum_{i=1}^{d_2} p_i^2\prn*{ \mt{x}{s} }
		\le 1 + \sum_{i=1}^{d_2} p_i\prn*{ \mt{x}{s} }
		= 2.
	\end{align*}
	Thus, $\norma[2]{\grad \lt{\mt{x}{s}}{s}} \le \sqrt{2}$.

	Moreover, for $x=0$ and all $s\in\S$ we have
	\begin{align*}
		\norma[2]{\grad \lt{\mt{0}{s}}{s}}^2
		= 1 - 2p_{s_0}\prn*{ \mt{0}{s} } + \sum_{i=1}^{d_2} p_i^2\prn*{ \mt{0}{s} }
		= 1 - \frac{2}{d_2} + \sum_{i=1}^{d_2} \frac{1}{d_2^2}.
	\end{align*}
	Thus, $\norm{\grad \lt{\mt{0}{s}}{s}}_2 = \sqrt{1 - \frac{1}{d_2}}$.
\end{proof}

This shows that, for linear models with cross-entropy loss, the upper bound in \Cref{eq: grad f bound} is tight, i.e., for any point $x\in\X$ and sample $s\in\S$ the gradient norm $\norma[2]{\grad \ft{x}{s}}$ is equal to $\norma[2]{\grad \lt{\mt{x}{s}}{s}} \cdot \normo[2][2]{ \grad \mt{x}{s} }$.
Therefore, depending on the choice of $\X$, $\Y$, and $\P$, the Lipschitz constant of the composed objective $f$ can be close to, or even equal to, the product of the Lipschitz constants of the model $m$ and the loss $h$.

\section{Extending the results to non-differentiable functions}
\label{sec: remove differentiable}

In this paper, we do not impose convexity requirements on the model and loss functions, but only on the objective function.
Therefore, subgradients do not necessarily exist at all points.
Instead, for a function $q$, we use the Clarke generalized gradient \citep{clarke1975generalized}:
\begin{align*}
	\clarke q\prn*{x}\defeq\operatorname{co}\crl*{ v ~|~ \exists \crl{x_i}_{i\in\N} \subset \Omega_q : x = \lim_{i\rightarrow\infty} x_i \text{ and } v = \lim_{i\rightarrow\infty}\nabla q\prn*{x_i} },
\end{align*}
where $\operatorname{co}$ is the convex hull, and $\Omega_q$ is the set of points where $q$ is differentiable.
We note that if $q$ is convex and locally Lipschitz, then $\clarke q(x)$ coincides with the set of subgradients $\partial q(x)$ for every $x$ in any open subset of the domain \citep[Proposition 1.2]{clarke1975generalized}; see also \citet[Theorem 25.6]{rockafellar1970a}.

We define $\Icont$ as the class of functions $q$ such that:
(i) $\qt{\cdot}{s}$ is differentiable almost everywhere for any sample $s$;
(ii) the Clarke gradient $\clarke \qt{x}{s}$ is nonempty for every point $x$ and sample $s$;
and (iii) for every pair of points $x,y$, sample $s$, and every continuously differentiable curve $\gamma:[0,1]\to\operatorname{dom}\prn*{\qt{\cdot}{s}}$ satisfying $\gamma(0)=y$ and $\gamma(1)=x$, there exists a selection $\grad_x \qt{\gamma\prn*{a}}{s}\in\clarke\qt{\gamma\prn*{a}}{s}$ such that
\begin{align*}
	\qt{x}{s} = \qt{y}{s} + \int_0^1 \grad_x \qt{\gamma\prn*{a}}{s} \grad_a\gamma\prn*{a} \,da.
\end{align*}

We further assume that the model--loss decomposition satisfies the chain rule; i.e., for all $x\in\X$ and $s\in\S$,
\begin{align*}
	\clarke \ft{x}{s} \subseteq \overline{\operatorname{co}} \crl*{ v u ~|~ v\in\clarke\lt{ \mt{x}{s} }{s} \text{ and } u\in\clarke\mt{x}{s} },
\end{align*}
where $\overline{\operatorname{co}}$ denotes the closed convex hull.
The purpose of this requirement is to ensure that if $m$ is $L$-Lipschitz and $h$ is $1$-Lipschitz, then $f$ is also $L$-Lipschitz, as in the differentiable case (see \Cref{lem: f lip}).

We are now able to replace the use of $\Idiff$ with $\Icont$.
For the classes of convex stochastic optimization problems, the definitions remain almost the same:
\begin{align*}
	\IDist &\defeq \crl*{q\in\Icont ~|~ \norma{x_0 - \xopt} \le \urho \text{ and } \E_{s\sim \PP} \qt{\cdot}{s} \text{ is convex}},\\
	\IDistf &\defeq \crl*{q\in\Icont ~|~ \norma{x_0 - \xopt} \le \urho \text{ and } \forall s\in \S: \qt{\cdot}{s} \text{ is convex}}.
\end{align*}
To account for the entire Clarke generalized gradient, we modify the definitions of the classes of Lipschitz and second-moment-Lipschitz problems:
\begin{align*}
	\ILip &\defeq \crl*{q\in\Icont ~|~ \forall x\in\X; s\in \S; g\in\clarke \qt{x}{s}: \norm{g} \le \uell },\\
	\IsLip &\defeq \crl*{q\in\Icont ~|~ \forall x\in\X: \E_{s\sim \PP} \brk*{\sup_{g\in\clarke \qt{x}{s}} \norm{g}^2} \le \uell^2 }.
\end{align*}

Except for the proof of \Cref{thm: sample algorithm}, all other proofs remain valid under the modified definitions of the problem classes, provided that $\nabla q(x)$ is interpreted as an arbitrary element of $\clarke q(x)$ at the point $x$.
\Cref{thm: sample algorithm}, however, requires assuming that $f$ is differentiable, because its proof uses Lemma 2 of \citet{lawrence2025sample}, which relies on this differentiability assumption.
\Cref{thm: sample algorithm} can be extended to the non-differentiable setting via smoothing arguments, e.g., using the Moreau envelope \citep{moreau1965proximity}.

\end{document}